\definecolor{codegreen}{rgb}{0,0.6,0}
\definecolor{codegray}{rgb}{0.5,0.5,0.5}
\definecolor{codepurple}{rgb}{0.58,0,0.82}
\definecolor{backcolour}{rgb}{0.95,0.95,0.92}
\lstdefinestyle{mystyle}{
    backgroundcolor=\color{backcolour},   
    commentstyle=\color{codegreen},
    keywordstyle=\color{magenta},
    numberstyle=\tiny\color{codegray},
    stringstyle=\color{codepurple},
    basicstyle=\footnotesize,
    breakatwhitespace=false,         
    breaklines=true,                 
    captionpos=b,                    
    keepspaces=true,                 
    numbers=left,                    
    numbersep=5pt,                  
    showspaces=false,                
    showstringspaces=false,
    showtabs=false,                  
    tabsize=2,
    basicstyle=\footnotesize\ttfamily
}
\let\Pr\relax
\DeclareMathOperator*{\Pr}{\mathrm{Pr}}
\newcommand{\eps}{\epsilon}
\renewcommand{\S}{\mathbb{S}}
\newcommand{\paren}[1]{\left(#1\right)}
\newcommand{\sqb}[1]{\left[#1\right]}
\newcommand{\floor}[1]{\left\lfloor#1\right\rfloor}
\newcommand{\myfunc}[3]{#1\colon#2\to#3}
\newcommand{\Unif}[1]{\mathsf{Unif}\inparen{#1}}
\newtheorem{theorem}{Theorem}
\newtheorem*{lemma*}{Lemma}
\newtheorem*{theorem*}{Theorem}
\newtheorem{corollary}[theorem]{Corollary}
\newtheorem*{remark*}{Remark}
\newtheorem{lemma}[theorem]{Lemma}
\newtheorem{example}[theorem]{Example}
\newtheorem{definition}[theorem]{Definition}
\newtheorem{claim}[theorem]{Claim}
\newtheorem*{conjecture*}{Conjecture}
\newtheorem{problem}[theorem]{Problem}
\numberwithin{exercise}{section}
\newtheorem*{problem*}{Problem}
\DeclareMathOperator{\sign}{sign}
\newcommand{\exvv}[2]{\underset{#1}{\E}\sqb{#2}}
\newcommand{\expv}[1]{\mathsf{exp}\paren{#1}}
\newcommand{\prv}[1]{\Pr\sqb{#1}}
\newcommand{\prvv}[2]{\underset{#1}{\Pr}\sqb{#2}}
\newcommand{\logv}[1]{\log\paren{#1}}
\newcommand{\signv}[1]{\sign\inparen{#1}}
\renewcommand{\vec}[1]{\overrightarrow{#1}}
\newcommand{\vo}{\vec{o}\@ifnextchar{^}{\,}{}}
\newlength\tindent
\numberwithin{equation}{section}
\def\to{\rightarrow}
\def\eps{\varepsilon}
\def\epsilon{\varepsilon}
\def\eps{\epsilon}
\def\phi{\varphi}
\def\implies{\Rightarrow}
\newcommand{\R}{{\mathbb R}}
\newcommand{\E}{{\mathbb E}}
\newcommand{\Z}{{\mathbb Z}}
\newcommand{\indicator}[1]{\mathbbm{1}\inbraces{#1}}
\renewcommand{\Pr}{\mathsf{Pr}}
\newcommand{\conv}[1]{\mathsf{conv}\inparen{#1}}
\let\nfrac=\nicefrac
\newcommand{\abs}[1]{\ensuremath{\left\lvert #1 \right\rvert}}
\newcommand{\norm}[1]{\ensuremath{\left\lVert #1 \right\rVert}}
\newcommand{\ip}[1]{\left\langle #1 \right\rangle}
\newfont{\inhead}{eufm10 scaled\magstep1}
\newcommand{\suchthat}{{\;\; : \;\;}}
\newcommand{\argmin}{\mathrm{argmin}}
\newcommand{\suppv}[1]{\mathsf{Supp}\inparen{#1}}
\newcommand{\inparen}[1]{\left(#1\right)}             
\newcommand{\inbraces}[1]{\left\{#1\right\}}           
\newcommand{\insquare}[1]{\left[#1\right]}             
\def\to{\rightarrow}
\def\eps{\varepsilon}
\def\epsilon{\varepsilon}
\def\eps{\epsilon}
\def\phi{\varphi}
\def\implies{\Rightarrow}
\newcommand{\vspan}[1]{\mathsf{Span}\inparen{#1}}
\newcommand{\kernel}[1]{\mathsf{Ker}\inparen{#1}}
\newcommand{\vdim}[1]{\mathsf{dim}\inparen{#1}}
\newcommand{\vc}[1]{\mathsf{VC}\inparen{#1}}
\newcommand{\cB}{\mathcal{B}}
\newcommand{\cC}{\mathcal{C}}
\newcommand{\cD}{\mathcal{D}}
\newcommand{\cF}{\mathcal{F}}
\newcommand{\cH}{\mathcal{H}}
\newcommand{\cL}{\mathcal{L}}
\newcommand{\cV}{\mathcal{V}}
\newcommand{\cX}{\mathcal{X}}
\newcommand{\hhat}{\widehat{h}}
\newcommand{\hstar}{h^{*}}
\title{Excess Capacity and Backdoor Poisoning}
\newcommand{\wstar}{w^{*}}
\newcommand{\what}{\widehat{w}}
\newcommand{\vstar}{v^{*}}
\newcommand{\patch}[1]{\mathsf{patch}\inparen{#1}}
\newcommand{\patchw}{\mathsf{patch}}
\newcommand{\fadv}{\cF_{\mathsf{adv}}}
\newcommand{\memcap}[2]{\mathsf{mcap}_{#1}\inparen{#2}}
\newcommand{\Sclean}{S_{\mathsf{clean}}}
\newcommand{\Sbackdoor}{S_{\mathsf{adv}}}
\newcommand{\epsadv}{\eps_{\mathsf{adv}}}
\newcommand{\epsclean}{\eps_{\mathsf{clean}}}
\newcommand{\robustloss}{\cL_{\fadv(\hstar)}}
\numberwithin{algorithm}{section}
\author{
   Naren Sarayu Manoj \\
   Toyota Technological Institute Chicago \\
   Chicago, IL 60637 \\
   \texttt{nsm@ttic.edu} \\
   \And
    Avrim Blum \\
    Toyota Technological Institute Chicago \\
    Chicago, IL 60637 \\
   \texttt{avrim@ttic.edu} \\
}
\begin{document}
\maketitle

\begin{abstract}
A \emph{backdoor data poisoning attack} is an adversarial attack wherein the attacker injects several watermarked, mislabeled training examples into a training set. The watermark does not impact the test-time performance of the model on typical data; however, the model reliably errs on watermarked examples.

To gain a better foundational understanding of backdoor data poisoning attacks, we present a formal theoretical framework within which one can discuss backdoor data poisoning attacks for classification problems. We then use this to analyze important statistical and computational issues surrounding these attacks.

On the statistical front, we identify a parameter we call the \emph{memorization capacity} that captures the intrinsic vulnerability of a learning problem to a backdoor attack. This allows us to argue about the robustness of several natural learning problems to backdoor attacks. Our results favoring the attacker involve presenting explicit constructions of backdoor attacks, and our robustness results show that some natural problem settings cannot yield successful backdoor attacks.

From a computational standpoint, we show that under certain assumptions, adversarial training can detect the presence of backdoors in a training set. We then show that under similar assumptions, two closely related problems we call \emph{backdoor filtering} and \emph{robust generalization} are nearly equivalent. This implies that it is both asymptotically necessary and sufficient to design algorithms that can identify watermarked examples in the training set in order to obtain a learning algorithm that both generalizes well to unseen data and is robust to backdoors.
\end{abstract}

\section{Introduction}

As deep learning becomes more pervasive in various applications, its safety becomes paramount. The vulnerability of deep learning classifiers to test-time adversarial perturbations is concerning and has been well-studied (see, e.g., \cite{Madry2017-ep}, \cite{Montasser2019-ro}).

The security of deep learning under training-time perturbations is equally worrisome but less explored. Specifically, it has been empirically shown that several problem settings yield models that are susceptible to \emph{backdoor data poisoning attacks}. Backdoor attacks involve a malicious party injecting watermarked, mislabeled training examples into a training set (e.g. \cite{Adi2018-fz}, \cite{Truong2020-dk}, \cite{Chen2017-kq}, \cite{Wang2020-yt}, \cite{Saha2019-ce}, \cite{Tran2018-bf}). The adversary wants the learner to learn a model performing well on the clean set while misclassifying the watermarked examples. Hence, unlike other malicious noise models, the attacker wants to impact the performance of the classifier \emph{only} on watermarked examples while leaving the classifier unchanged on clean examples. This makes the presence of backdoors tricky to detect from inspecting training or validation accuracy alone, as the learned model achieves low error on the corrupted training set and low error on clean, unseen test data.

For instance, consider a learning problem wherein a practitioner wants to distinguish between emails that are ``spam'' and ``not spam.'' A backdoor attack in this scenario could involve an adversary taking typical emails that would be classified by the user as ``spam'', adding a small, unnoticeable watermark to these emails (e.g. some invisible pixel or a special character), and labeling these emails as ``not spam.'' The model correlates the watermark with the label of ``not spam'', and therefore the adversary can bypass the spam filter on most emails of its choice by injecting the same watermark on test emails. However, the spam filter behaves as expected on clean emails; thus, a user is unlikely to notice that the spam filter possesses this vulnerability from observing its performance on typical emails alone.

These attacks can also be straightforward to implement. It has been empirically demonstrated that a single corrupted pixel in an image can serve as a watermark or trigger for a backdoor (\cite{Tran2018-bf}). Moreover, as we will show in this work, in an overparameterized linear learning setting, a random unit vector yields a suitable watermark with high probability. Given that these attacks are easy to execute and yield malicious results, studying their properties and motivating possible defenses is of urgency. Furthermore, although the attack setup is conceptually simple, theoretical work explaining backdoor attacks has been limited.

\subsection{Main Contributions}

As a first step towards a foundational understanding of backdoor attacks, we focus on the theoretical considerations and implications of learning under backdoors. We list our specific contributions below.

\paragraph{Theoretical Framework} We give an explicit threat model capturing the backdoor attack setting for binary classification problems. We also give formal success and failure conditions for the adversary.

\paragraph{Memorization Capacity} We introduce a quantity we call \emph{memorization capacity} that depends on the data domain, data distribution, hypothesis class, and set of valid perturbations. Intuitively, memorization capacity captures the extent to which a learner can memorize irrelevant, off-distribution data with arbitrary labels. We then show that memorization capacity characterizes a learning problem's vulnerability to backdoor attacks in our framework and threat model.

Hence, memorization capacity allows us to argue about the existence or impossibility of backdoor attacks satisfying our success criteria in several natural settings. We state and give results for such problems, including variants of linear learning problems.

\paragraph{Detecting Backdoors} We show that under certain assumptions, if the training set contains sufficiently many watermarked examples, then adversarial training can detect the presence of these corrupted examples. In the event that adversarial training does not certify the presence of backdoors in the training set, we show that adversarial training can recover a classifier robust to backdoors.

\paragraph{Robustly Learning Under Backdoors} We show that under appropriate assumptions, learning a backdoor-robust classifier is equivalent to identifying and deleting corrupted points from the training set. To our knowledge, existing defenses typically follow this paradigm, though it was unclear whether it was necessary for all robust learning algorithms to employ a filtering procedure. Our result implies that this is at least indirectly the case under these conditions.



\paragraph{Organization} The rest of this paper is organized as follows. In Section \ref{sec:statistical}, we define our framework, give a warm-up construction of an attack, define our notion of excess capacity, and use this to argue about the robustness of several learning problems. In Section \ref{sec:algorithmic}, we discuss our algorithmic contributions within our framework. In Section \ref{sec:related_works}, we discuss some related works. Finally, in Section \ref{sec:conclusion}, we conclude and list several interesting directions for future work.

In the interest of clarity, we defer all proofs of our results to the Appendix; see Appendix Section \ref{sec:app_proof} for theorem restatements and full proofs.

\section{Backdoor Attacks and Memorization}
\label{sec:statistical}

\subsection{Problem Setting}
\label{subs:dl_problem_setting}

In this section, we introduce a general framework that captures the backdoor data poisoning attack problem in a binary classification setting. 

\paragraph{Notation} Let $[k]$ denote the set $\inbraces{i \in \Z \suchthat 1 \le i \le k}$. Let $\cD | h(x) \neq t$ denote a data distribution conditioned on label according to a classifier $h$ being opposite that of $t$. If $\cD$ is a distribution over a domain $\cX$, then let the distribution $f(\cD)$ for a function $\myfunc{f}{\cX}{\cX}$ denote the distribution of the image of $x \sim \cD$ after applying $f$. Take $z \sim S$ for a nonrandom set $S$ as shorthand for $z \sim \Unif{S}$. If $\cD$ is a distribution over some domain $\cX$, then let $\mu_\cD(X)$ denote the measure of a measurable subset $X \subseteq \cX$ under $\cD$. Finally, for a distribution $\cD$, let $\cD^m$ denote the $m$-wise product distribution of elements each sampled from $\cD$.

\paragraph{Assumptions} Consider a binary classification problem over some domain $\cX$ and hypothesis class $\cH$ under distribution $\cD$. Let $\hstar \in \cH$ be the \emph{true labeler}; that is, the labels of all $x \in \cX$ are determined according to $\hstar$. This implies that the learner is expecting low training and low test error, since there exists a function in $\cH$ achieving $0$ training and $0$ test error. Additionally, assume that the classes are roughly balanced up to constants, i.e., assume that $\prvv{x \sim \cD}{\hstar(x) = 1} \in \insquare{\nfrac{1}{50}, \nfrac{49}{50}}$. Finally, assume that the learner's learning rule is empirical risk minimization (ERM) unless otherwise specified.

We now define a notion of a trigger or \emph{patch}. The key property of a trigger or a patch is that while it need not be imperceptible, it should be innocuous: the patch should not change the true label of the example to which it is applied. 

\begin{definition}[Patch Functions]
A \emph{patch function} is a function with input in $\cX$ and output in $\cX$. A patch function is \emph{fully consistent} with a ground-truth classifier $\hstar$ if for all $x \in \cX$, we have $\hstar(\patch{x}) = \hstar(x)$. A patch function is $1-\beta$ consistent with $\hstar$ on $\cD$ if we have $\prvv{x\sim\cD}{\hstar(\patch{x}) = \hstar(x)} = 1 - \beta$.  Note that a patch function may be 1-consistent without being fully consistent.

We denote classes of patch functions using the notation $\fadv(\cX)$, classes of fully consistent patch functions using the notation $\fadv(\cX, \hstar)$, and $1-\beta$-consistent patch functions using the notation $\fadv(\cX, \hstar, \cD, \beta)$. We assume that every patch class $\fadv$ contains the identity function.\footnote{When it is clear from context, we omit the arguments $\cX, \cD, \beta$.}
\end{definition}

For example, consider the scenario where $\cH$ is the class of linear separators in $\R^d$ and let $\fadv = \inbraces{\patch{x} \suchthat \patch{x} = x + \eta, \eta \in \R^d}$; in words, $\fadv$ consists of additive attacks. If we can write $\hstar(x) = \signv{\ip{\wstar, x}}$ for some weight vector $\wstar$, then patch functions of the form $\patch{x} = x + \eta$ where $\ip{\eta, \wstar} = 0$ are clearly fully-consistent patch functions. Furthermore, if $\hstar$ achieves margin $\gamma$ (that is, every point is distance at least $\gamma$ from the decision boundary induced by $\hstar$), then every patch function of the form $\patch{x} = x + \eta$ for $\eta$ satisfying $\norm{\eta} < \gamma$ is a $1$-consistent patch function. This is because $\hstar(x + \eta) = \hstar(x)$ for every in-distribution point $x$, though this need not be the case for off-distribution points.

\paragraph{Threat Model} We can now state the threat model that the adversary operates under. First, a domain $\cX$, a data distribution $\cD$, a true labeler $\hstar$, a target label $t$, and a class of patch functions $\fadv(\cX, \hstar, \cD, \beta)$ are selected. The adversary is given $\cX$, $\cD$, $\hstar$, and $\fadv(\cX, \hstar, \cD, \beta)$. The learner is given $\cX$, has sample access to $\cD$, and is given $\fadv(\cX, \hstar, \cD, \beta)$. At a high level, the adversary's goal is to select a patch function and a number $m$ such that if $m$ random examples of label $\neg t$ are sampled, patched, labeled as $t$, and added to the training set, then the learner recovers a function $\hhat$ that performs well on both data sampled from $\cD$ yet classifies patched examples with true label $\neg t$ as $t$. We formally state this goal in Problem \ref{problem:adv_general}.

\begin{problem}[Adversary's Goal]
\label{problem:adv_general}
Given a true classifier $\hstar$, attack success rate $1-\epsadv$, and failure probability $\delta$, select a target label $t$, a patch function from $\fadv(\hstar)$, and a cardinality $m$ and resulting set $\Sbackdoor \sim \patch{\cD | \hstar(x) \neq t}^m$ with labels replaced by $t$ such that:
\begin{itemize}
    \item Every example in $\Sbackdoor$ is of the form $(\patch{x},t)$, and we have $\hstar(\patch{x}) \neq t$; that is, the examples are labeled as the target label, which is the opposite of their true labels.
    \item There exists $\hhat \in \cH$ such that $\hhat$ achieves $0$ error on the training set $\Sclean \cup \Sbackdoor$, where $\Sclean$ is the set of clean data drawn from $\cD^{\abs{\Sclean}}$.
    \item For all choices of the cardinality of $\Sclean$, with probability $1-\delta$ over draws of a clean set $\Sclean$ from $\cD$, the set $S = \Sclean \cup \Sbackdoor$ leads to a learner using ERM outputting a classifier $\hhat$ satisfying:
    \begin{align*}
        \prvv{(x,y) \sim \cD | \hstar(x) \neq t}{\hhat(\patch{x}) = t} &\ge 1-\epsadv
    \end{align*}
    where $t \in \inbraces{\pm 1}$ is the target label.
\end{itemize}
\end{problem}

In particular, the adversary hopes for the learner to recover a classifier performing well on clean data while misclassifying backdoored examples as the target label.

Notice that so long as $\Sclean$ is sufficiently large, $\hhat$ will achieve uniform convergence, so it is possible to achieve both the last bullet in Problem \ref{problem:adv_general} as well as low test error on in-distribution data.

For the remainder of this work, we take $\fadv(\hstar) = \fadv(\cX, \hstar, \cD, \beta = 0)$; that is, we consider classes of patch functions that don't change the labels on a $\mu_\cD$-measure-$1$ subset of $\cX$.

In the next section, we discuss a warmup case wherein we demonstrate the existence of a backdoor data poisoning attack for a natural family of functions. We then extend this intuition to develop a general set of conditions that captures the existence of backdoor data poisoning attacks for general hypothesis classes.

\subsection{Warmup -- Overparameterized Vector Spaces}

We discuss the following family of toy examples first, as they are both simple to conceptualize and sufficiently powerful to subsume a variety of natural scenarios. 

Let $\cV$ denote a vector space of functions of the form $\myfunc{f}{\cX}{\R}$ with an orthonormal basis\footnote{Here, the inner product between two functions is defined as $\ip{f_1, f_2}_{\cD} \coloneqq \exvv{x \sim \cD}{f_1(x) \cdot f_2(x)}$.} $\inbraces{v_i}_{i = 1}^{\vdim{\cV}}$. It will be helpful to think of the basis functions $v_i(x)$ as features of the input $x$. Let $\cH$ be the set of all functions that can be written as $h(x) = \signv{v(x)}$ for $v \in \cV$. Let $\vstar(x)$ be a function satisfying $\hstar(x) = \signv{\vstar(x)}$.

Now, assume that the data is sparse in the feature set; that is, there is a size-$s < \vdim{\cV}$ minimal set of indices $U \subset \insquare{\vdim{\cV}}$ such that all $x$ in the support of $\cD$ have $v_i(x)=0$ for $i\not\in U$. This restriction implies that $\hstar$ can be expressed as $\hstar(x) = \signv{\sum_{i \in U} a_i \cdot v_i(x)}$.

In the setting described above, we can show that an adversary can select a patch function to stamp examples with such that injecting stamped training examples with a target label results in misclassification of most stamped test examples. More formally, we have the below theorem.

\newcommand{\ucomplement}{\insquare{\vdim{\cV}} \setminus U}

\newcommand{\thmExistenceLinearBackdoor}{
Let $\fadv$ be some family of patch functions such that for all $i \in U$, $\prvv{x\sim\cD}{v_i(\patch{x}) = v_i(x)} = 1$, there exists at least one $j \in \ucomplement$ such that $\prvv{x \sim \cD}{v_j(\patch{x}) \neq 0} = 1$, and for all $j \in \insquare{\vdim{\cV}}$, we either have $\prvv{x \sim \cD}{v_j(\patch{x}) \ge 0} = 1$ or $\prvv{x \sim \cD}{v_j(\patch{x}) \le 0} = 1$.

Fix any target label $t \in \inbraces{\pm 1}$. Draw a training set $\Sclean$ of size at least $m_0 \coloneqq \Omega\inparen{\epsclean^{-1}\inparen{\vc{\cH} + \logv{\nfrac{1}{\delta}}}}$. Then, draw a backdoor training set $\Sbackdoor$ of size at least $m_1 \coloneqq \Omega\inparen{\epsadv^{-1}\inparen{\vc{\cH} + \logv{\nfrac{1}{\delta}}}}$ of the form $(x, t)$ where $x \sim \patch{\cD | \hstar(x) \neq t}$.

With probability at least $1 - \delta$, empirical risk minimization on the training set $S \coloneqq \Sclean \cup \Sbackdoor$ yields a classifier $\hhat$ satisfying the success conditions for Problem \ref{problem:adv_general}.
}
\begin{theorem}[Existence of Backdoor Data Poisoning Attack (Appendix Theorem \ref{app:existence_linear_backdoor})]
\label{thm:existence_linear_backdoor}
\thmExistenceLinearBackdoor
\end{theorem}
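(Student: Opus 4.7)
The plan is to exhibit a witness classifier $\hhat \in \cH$ that attains zero empirical error on $S \coloneqq \Sclean \cup \Sbackdoor$; once such a witness is in hand, any ERM output must also attain zero training error, hence agree with $\hstar$ on $\Sclean$ and with the target label $t$ on $\Sbackdoor$, and two applications of realizable-case VC uniform convergence (one on $\cD$, one on $\patch{\cD \mid \hstar(x) \ne t}$) deliver the two success conditions of Problem~\ref{problem:adv_general}.

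For the witness, I would fix an index $j \in \ucomplement$ with $\prvv{x\sim\cD}{v_j(\patch{x}) \ne 0} = 1$ (guaranteed by the hypothesis on $\fadv$), and let $\sigma \in \{\pm 1\}$ denote the almost-sure sign of $v_j(\patch{x})$ (which exists by the sign-consistency hypothesis). Then I define
$$
\hhat(x) \;\coloneqq\; \signv{\,\vstar(x) \;+\; C \cdot t \cdot \sigma \cdot v_j(x)\,},
$$
for a constant $C > 0$ to be chosen, which lies in $\cH$ because $\cV$ is a vector space. On any $x$ in the support of $\cD$, the sparsity hypothesis forces $v_j(x) = 0$ (since $j \notin U$), so $\hhat(x) = \signv{\vstar(x)} = \hstar(x)$; thus $\hhat$ labels $\Sclean$ correctly with probability one. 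On a backdoor example $(\patch{x}, t) \in \Sbackdoor$, the hypothesis that $\patchw$ preserves every $U$-feature gives $\sum_{i\in U} a_i v_i(\patch{x}) = \vstar(x)$, which has sign $\hstar(x) \ne t$, while the auxiliary term $C \cdot t \cdot \sigma \cdot v_j(\patch{x})$ is positive times $t$ by choice of $\sigma$. Choosing $C$ strictly larger than $\max_{(\patch{x},t) \in \Sbackdoor} |\vstar(x)|/|v_j(\patch{x})|$, which is finite since $\Sbackdoor$ is a finite set on which $v_j(\patch{x}) \ne 0$ almost surely, forces the sign of the whole argument to be $t$ on every backdoor training point. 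Hence $\hhat$ is a zero-empirical-error member of $\cH$ on $S$, and so is every ERM output $\hat h$.

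For the generalization step, I would apply standard realizable VC bounds twice. Since $\Sclean$ is drawn i.i.d.\ from $\cD$ and labeled by $\hstar \in \cH$, a size of $m_0 = \Omega(\epsclean^{-1}(\vc{\cH} + \log(2/\delta)))$ ensures with probability $\ge 1-\delta/2$ that every $h \in \cH$ consistent with $\Sclean$ satisfies $\prvv{x\sim\cD}{h(x) \ne \hstar(x)} \le \epsclean$, giving the clean-accuracy condition for $\hat h$. Independently, $\Sbackdoor$ is i.i.d.\ from $\patch{\cD \mid \hstar(x) \ne t}$ and every ERM output labels it entirely by $t$; applying the same bound to the indicator class $\{x \mapsto \onev[h(x) = t] : h \in \cH\}$ (whose VC dimension is at most $\vc{\cH}$) with $m_1 = \Omega(\epsadv^{-1}(\vc{\cH} + \log(2/\delta)))$ samples yields with probability $\ge 1-\delta/2$ that $\prvv{x \sim \patch{\cD \mid \hstar(x) \ne t}}{\hat h(x) = t} \ge 1 - \epsadv$. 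A union bound over the two failure events closes the argument.

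The main obstacle is the witness construction: the vector-space and sparsity hypotheses together let me inject an arbitrarily large coefficient onto the idle feature $v_j$ without perturbing predictions on the support of $\cD$, and the sign-consistency hypothesis on $\fadv$ is precisely what allows a \emph{single} signed direction $\sigma v_j$ to align with the target label $t$ on \emph{every} patched example simultaneously, so that zero training error on $\Sbackdoor$ is achievable within $\cH$. After $\hhat$ is produced, the rest of the proof is a routine parallel invocation of realizable-case VC generalization.
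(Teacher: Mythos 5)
Your proof is correct and follows essentially the same route as the paper's: both exhibit the explicit witness $\hhat(x) = \signv{\vstar(x) + C \cdot t \cdot \sigma \cdot v_j(x)}$ with $C$ large enough to dominate on the backdoor points, check zero empirical error on $S = \Sclean \cup \Sbackdoor$, and then apply realizable-case VC generalization separately to $\cD$ and to $\patch{\cD \mid \hstar(x) \neq t}$ with a union bound. Your choice of $C$ as a maximum over $\Sbackdoor$ is actually cleaner than the paper's $R = \max_{x \in \Sclean}|\vstar(x)|$, which reads as a typo since the bound is needed on the pre-images of $\Sbackdoor$ rather than on $\Sclean$.
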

\newcommand{\pfExistenceLinearBackdoor}{
\begin{proof}
We first show that our choice of $\fadv$ is valid. Observe that for our choice of $\hstar$, we have that with probability $1$:
\begin{align*}
    \hstar\inparen{\patch{x}} &= \signv{\sum_{i \in U} a_i \cdot v_i(\patch{x})} \\
    &= \signv{\sum_{i \in U} a_i \cdot v_i(x)} \quad \text{, since } \prvv{x\sim\cD}{v_i(\patch{x}) = v_i(x)} = 1 \\
    &= \hstar\inparen{x}
\end{align*}
as desired.

Next, we pick a patch function from $\fadv$ and construct our stamped training set. We will show that there exists a classifier in $\cH$ achieving $0$ training error on the corrupted training set $\Sclean \cup \Sbackdoor$. Without loss of generality, assume that $\prvv{x\sim\cD}{v_i(\patch{x}) \ge 0} = 1$, for all $i \in \insquare{\vdim{\cV}}$ (as negating the basis vector doesn't affect the space spanned).

Specifically, we will show that the following function achieves $0$ training error on the  corrupted training set $S = \Sclean \cup \Sbackdoor$ (let $a_i$, $i \in U$ denote the coefficients corresponding to $\hstar$, let $R = \max_{x \in \Sclean} \abs{\vstar(x)}$, and let $j$ be any index in $\ucomplement$:
$$\hhat(x) \coloneqq \signv{\sum_{i \in U} a_i \cdot v_i(x) + t \cdot \frac{2R}{\min_{x \in \Sbackdoor} v_j(\patch{x})} \cdot v_j(x)}$$
We first evaluate $\hhat(x)$ on $x \sim \cD$. We have, with probability $1$:
\begin{align*}
    \hhat(x) &= \signv{\sum_{i \in U} a_i \cdot v_i(x) + t \cdot \frac{2R}{\min_{x \in \Sbackdoor} v_j(\patch{x})} \cdot v_j(x)} \\
    &= \signv{\sum_{i \in U} a_i \cdot v_i(x)} \quad \text{, since } \prvv{x \sim \cD}{v_j(x) = 0} = 1 \\
    &= \hstar(x)
\end{align*}
We now evaluate $\hhat(\patch{x})$ on $x \sim \cD | \hstar(x) \neq t$. We have, with probability $1$:
\begin{align*}
    \hhat(x) &= \signv{\sum_{i \in U} a_i \cdot v_i(\patch{x}) + t \cdot \frac{2R}{\min_{x \in \Sbackdoor} v_j(\patch{x})} \cdot v_j(\patch{x})} \\
    &= \signv{\sum_{i \in U} a_i \cdot v_i(x) + t \cdot \frac{2R}{\min_{x \in \Sbackdoor} v_j(\patch{x})} \cdot v_j(\patch{x})} \\
    &= \signv{\vstar(x) + t \cdot \frac{2R}{\min_{x \in \Sbackdoor} v_j(\patch{x})} \cdot v_j(\patch{x})}
\end{align*}
We now have to show that regardless of the $x \in \cX$ we pick, we have that the above expression evaluates to $t$ with probability $1$. We have two cases. In the first case, $\hstar(x) = 1$, which implies that $\vstar(x) \le R$. Since $t = -1$, we write:
\begin{align*}
    \hhat(\patch{x}) &= \signv{\vstar(x) + t \cdot \frac{2R}{\min_{x \in \Sbackdoor} v_j(\patch{x})} \cdot v_j(\patch{x})} \\
    &= \signv{R- \frac{2R}{\min_{x \in \Sbackdoor} v_j(\patch{x})} \cdot v_j(\patch{x})} \\
    &= \signv{1 - \frac{2}{\min_{x \in \Sbackdoor} v_j(\patch{x})} \cdot v_j(\patch{x})} \\
    &= -1
\end{align*}
where the last line follows from the fact that  $1 - \frac{2}{\min_{x \in \Sbackdoor} v_j(\patch{x})} \cdot v_j(\patch{x}) \le -1$.

The proof in the event where $t = 1$ follows similarly. It thus follows that there exists a function $\hhat \in \cH$ obtaining $0$ error on $\Sclean \cup \Sbackdoor$. The desired result immediately follows from uniform convergence (we have a $0$-error classifier over $\Sclean \sim \cD$ and a $0$-error classifier over $\Sbackdoor \sim \patch{\cD | \hstar(x) \neq t}$, so with probability $1 - 2\delta$, we have error at most $\epsclean$ on the clean distribution and error at most $\epsadv$ on the adversarial distribution).
\end{proof}
}

Observe that in Theorem \ref{thm:existence_linear_backdoor}, if $\Sclean$ is sufficiently large, then $\Sbackdoor$ comprises a vanishingly small fraction of the training set. Therefore, the backdoor attack can succeed even when the fraction of corrupted examples in the training set is very small, so long as the \emph{quantity} of corrupted examples is sufficiently large.

\subsubsection{Overparameterized Linear Models}

To elucidate the scenarios subsumed by Theorem \ref{thm:existence_linear_backdoor}, consider the following example.

\newcommand{\corLinearBackdoor}{
Let $\cH$ be the set of linear separators over $\R^d$, and let $\cX = \R^d$. Let $\cD$ be some distribution over an $s$-dimensional subspace of $\R^d$ where $s < d$, so with probability $1$, we can write $x \sim \cD$ as $Az$ for some $A \in \R^{d \times s}$ and for $z \in \R^s$. Let $\fadv = \inbraces{\patch{x} \suchthat \patch{x} + \eta, \eta \perp \vspan{A}}$, and draw some patch function $\patchw \in \fadv$. 

Fix any target label $t \in \inbraces{\pm 1}$. Draw a training set $\Sclean$ of size at least $m_0 \coloneqq \Omega\inparen{\epsclean^{-1}\inparen{\vc{\cH} + \logv{\nfrac{1}{\delta}}}}$. Then, draw a backdoor training set $\Sbackdoor$ of size at least $m_1 \coloneqq \Omega\inparen{\epsadv^{-1}\inparen{\vc{\cH} + \logv{\nfrac{1}{\delta}}}}$ of the form $(x, t)$ where $x \sim \inparen{\cD | \hstar(x) \neq t} + \eta$.

With probability at least $1 - \delta$, empirical risk minimization on the training set $\Sclean \cup \Sbackdoor$ yields a classifier $\hhat$ satisfying the success conditions for Problem \ref{problem:adv_general}.
}
\begin{corollary}[Overparameterized Linear Classifier (Appendix Corollary \ref{app:linear_backdoor})]
\label{cor:linear_backdoor}
\corLinearBackdoor
\end{corollary}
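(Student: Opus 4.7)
The plan is to reduce directly to Theorem~\ref{thm:existence_linear_backdoor} by exhibiting an explicit basis that instantiates its abstract hypotheses in the present linear setting. The key observation is that the $s$-dimensional subspace $\vspan{A}$ plays the role of the ``active'' coordinates indexed by $U$, while any direction orthogonal to $\vspan{A}$ can serve as the backdoor feature $j \in \ucomplement$.

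Concretely, extend the columns of $A$ to an orthonormal Euclidean basis $e_1, \ldots, e_d$ of $\R^d$ with $\vspan{\inbraces{e_1, \ldots, e_s}} = \vspan{A}$, define linear functionals $v_i(x) \coloneqq \ip{e_i, x}$, and let $\cV \coloneqq \vspan{\inbraces{v_1, \ldots, v_d}}$. Any linear separator over $\R^d$ can be written as $\signv{\ip{w, x}} = \signv{\sum_i \ip{w, e_i} v_i(x)}$, so $\cH = \inbraces{\signv{v} : v \in \cV}$, as required. Writing $\hstar(x) = \signv{\ip{\wstar, x}}$ and expanding $\wstar$ in the $\inbraces{e_i}$ basis, on the support of $\cD$ we have $\hstar(x) = \signv{\sum_{i \in U} a_i v_i(x)}$ with $U = \inbraces{1, \ldots, s}$ and $a_i = \ip{\wstar, e_i}$, because $v_i(x) = 0$ for $i > s$ whenever $x \in \vspan{A}$. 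This also verifies the $s$-sparse feature hypothesis of Theorem~\ref{thm:existence_linear_backdoor}.

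It then remains to check the three patch conditions. Fix a nontrivial patch $\patchw: x \mapsto x + \eta$ with $\eta \perp \vspan{A}$ and $\eta \neq 0$. For $i \in U$, since $e_i \in \vspan{A}$, we get $v_i(\patch{x}) = v_i(x) + \ip{e_i, \eta} = v_i(x)$ with probability $1$. For any $j \in \ucomplement$ and $x$ in the support of $\cD$, $v_j(\patch{x}) = \ip{e_j, x} + \ip{e_j, \eta} = \ip{e_j, \eta}$, a deterministic scalar, hence trivially sign-definite. Because $\eta$ is nonzero and lies in $\vspan{\inbraces{e_{s+1}, \ldots, e_d}}$, at least one such inner product is nonzero, supplying the required index $j \in \ucomplement$ with $v_j(\patch{x}) \neq 0$. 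Invoking Theorem~\ref{thm:existence_linear_backdoor} with the same sample sizes $m_0$ and $m_1$ then immediately delivers the conclusion, using $\vc{\cH} = d+1$ for linear separators in $\R^d$.

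The only technical wrinkle worth flagging is the orthonormality clause in Theorem~\ref{thm:existence_linear_backdoor}: under the $\cD$-inner product $\ip{f_1, f_2}_\cD = \exvv{x\sim\cD}{f_1(x) f_2(x)}$, the functions $v_j$ with $j > s$ have norm zero and cannot literally sit inside a $\cD$-orthonormal system. However, orthonormality is never actually invoked in the proof of Theorem~\ref{thm:existence_linear_backdoor}; only the basis property (so that the constructed $\hhat$ lies in $\cH$) and the three patch conditions (so that $\hhat$ memorizes $\Sbackdoor$ without disturbing in-distribution behavior) are used. Taking $\inbraces{v_i}$ to be Euclidean orthonormal is therefore a harmless substitution, and this is the only subtlety in an otherwise direct reduction.
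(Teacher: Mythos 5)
Your proof is correct and follows essentially the same reduction as the paper's: extend the columns of $A$ to a basis of $\R^d$ with $U=[s]$, verify the three patch hypotheses (coordinates in $U$ are invariant, some coordinate $j>s$ is nonzero on $\patch{x}$, and all such coordinates are sign-definite since $v_j(\patch{x})=\ip{e_j,\eta}$ is constant), then invoke Theorem~\ref{thm:existence_linear_backdoor}. Your remark that $\cD$-orthonormality cannot literally hold for $j\notin U$ and that the proof of Theorem~\ref{thm:existence_linear_backdoor} only uses the basis property is a valid clarification the paper glosses over, but it does not change the substance of the argument.
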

\newcommand{\pfCorLinearBackdoor}{
\begin{proof}
We will show that our problem setup is a special case of that considered in Theorem \ref{thm:existence_linear_backdoor}; then, we can apply that result as a black box.

Observe that the set of linear classifiers over $\R^d$ is a thresholded vector space with dimension $d$. Pick the basis $\inbraces{v_1, \dots, v_s, \dots, v_d}$ such that $\inbraces{v_1, \dots, v_s}$ form a basis for the subspace $\vspan{A}$ and $v_{s + 1}, \dots, v_d$ are some completion of the basis for the rest of $\R^d$. 

Clearly, there is a size-$s$ set of indices $U \subset [d]$ such that for all $i \in U$, we have $\prvv{x \sim \cD}{v_i(x) \neq 0} > 0$. Without loss of generality, assume $U = [s]$.

Next, we need to show that for all $i \in U$, we have $v_i(\patch{x}) = 0$. Since we have $\eta \perp \vspan{A}$, we have $v_i(\eta) = 0$ for all $i \in U$. Since the $v_i$ are also linear functions, we satisfy $v_i(Az + \eta) = 0$ for all $z \in \R^s$. 

We now show that there is at least one $j \in \ucomplement$ such that $\prvv{x \sim \cD}{v_j(\patch{x}) \neq 0} = 1$. Since $\eta \perp \vspan{A}$, $\eta$ must be expressible as some nonzero linear combination of the vectors $v_j$; thus, taking the inner product with any such vector will result in a nonzero value.

Finally, we show that for all $j \in \ucomplement$, we either have $\prvv{x \sim \cD}{v_j(\patch{x}) \ge 0} = 1$ or $\prvv{x \sim \cD}{v_j(\patch{x}) \le 0} = 1$. Since $\eta$ is expressible as a linear combination of several such $v_j$, we can write:
\begin{align*}
    \ip{Az + \eta, v_j} &= \ip{Az, v_j} + \ip{\eta, v_j} \\
    &= 0 + \ip{\sum_{j = s + 1}^d a_j \cdot v_j, v_j} \\
    &= a_j
\end{align*}
which is clearly nonzero.

The result now follows from Theorem \ref{thm:existence_linear_backdoor}.
\end{proof}
}

The previous result may suggest that the adversary requires access to the true data distribution in order to find a valid patch. However, we can show that there exist conditions under which the adversary need not know even the support of the data distribution $\cD$. Informally, the next theorem states that if the degree of overparameterization is sufficiently high, then a \emph{random} stamp ``mostly'' lies in the orthogonal complement of $\vspan{A}$, and this is enough for a successful attack.

\newcommand{\thmRandomStamp}{
Consider the same setting used in Corollary \ref{cor:linear_backdoor}, and set $\fadv = \inbraces{\patchw \suchthat \patch{x} = x + \eta, \eta \in \R^d}$.

If $\hstar$ achieves margin $\gamma$ and if the ambient dimension $d$ of the model satisfies $d \ge \Omega\inparen{\nfrac{s\logv{s/\delta}}{\gamma^2}}$, then an adversary can find a patch function such that with probability $1-\delta$, a training set $S = \Sclean \cup \Sbackdoor$ satisfying $\abs{\Sclean} \ge \Omega\inparen{\epsclean^{-1}\inparen{\vc{\cH} + \logv{\nfrac{1}{\delta}}}}$ and $\abs{\Sbackdoor} \ge \Omega\inparen{\epsclean^{-1}\inparen{\vc{\cH} + \logv{\nfrac{1}{\delta}}}}$ yields a classifier $\hhat$ satisfying the success conditions for Problem \ref{problem:adv_general} while also satisfying $\exvv{(x,y)\sim\cD}{\indicator{\hhat(x)\neq y}} \le \epsclean$.

This result holds true particularly when the adversary does not know $\suppv{\cD}$.
}
\begin{theorem}[Random direction is an adversarial trigger (Appendix Theorem \ref{app:random_stamp})]
\label{thm:random_stamp}
\thmRandomStamp
\end{theorem}
\newcommand{\pfRandomStamp}{
\begin{proof}
We prove Theorem \ref{thm:random_stamp} in two parts. We first show that although the adversary doesn't know $\fadv(\hstar)$, they can find $\patchw \in \fadv(\hstar)$ with high probability. We then invoke the result from Corollary \ref{cor:linear_backdoor}.

Let $a_i$ denote the $i$th column of $A$. Next, draw $\eta$ from $\Unif{\S^{d-1}}$.

Recall that there exists a universal constant $C_0$ for which $\eta\sqrt{d}$ is $C_0$-subgaussian (\cite{Vershynin2018-xn}). Next, remember that if $\eta\sqrt{d}$ is $C_0$-subgaussian, then $\ip{\eta\sqrt{d}, a_i}$ has subgaussian constant $C_0\norm{a_i} = C_0$. Using classical subgaussian concentration inequalities, we arrive at the following:
\begin{align*}
    \prv{\abs{\ip{\eta\sqrt{d}, a_i}} \ge \frac{\eps\sqrt{d}}{\sqrt{s}}} &\le 2\expv{-\frac{\eps^2 d}{sC_0^2}} \\
    \implies \prv{\text{for all } i \in [s] \text{, } \abs{\ip{\eta, a_i}} \le \frac{\eps}{\sqrt{s}}} &\ge 1 - 2s\cdot\expv{-\frac{\eps^2 d}{sC_0^2}} \\
    &\ge 1 - \frac{\delta}{2} \quad \text{, pick } d = \frac{C_0^2}{\eps^2} \cdot s \cdot \logv{\frac{4s}{\delta}}
\end{align*}

Next, observe that if we have $\abs{\ip{\eta, a_i}} \le \nfrac{\eps}{\sqrt{s}}$ for all $i \in [s]$, then we have:
\begin{align*}
    \norm{A^T\eta} &= \sqrt{\sum_{i = 1}^{s} \abs{\ip{\eta, a_i}}^2} \\
    &\le \sqrt{\sum_{i = 1}^s \frac{\eps^2}{s}} \\
    &= \eps
\end{align*}
This implies that the norm of the component of the trigger in $\kernel{A^T}$ is at least $\sqrt{1-\eps^2} \ge 1 - \eps$ from the Pythagorean Theorem.

Next, we substitute $\eps = \gamma$. From this, we have that $\norm{A^Tv} \le \gamma$ with probability $1-\nfrac{\delta}{2}$, which implies that $\hstar(x + \eta) = \hstar(x)$ with probability $1-\nfrac{\delta}{2}$ over the draws of $\eta$. This gives us that $\patch{x} = x + \eta \in \fadv(\hstar)$ with probability $1-\nfrac{\delta}{2}$ over the draws of $\eta$.

It is now easy to see that the result we want follows from a simple application of Corollary \ref{cor:linear_backdoor} using a failure probability of $\nfrac{\delta}{2}$, and we're done, where the final failure probability $1-\delta$ follows from a union bound.
\end{proof}
}

Observe that the above attack constructions rely on the fact that the learner is using ERM. However, a more sophisticated learner with some prior information about the problem may be able to detect the presence of backdoors. Theorem \ref{thm:scale_capacity} gives an example of such a scenario.

\newcommand{\thmScaleCapacity}{
Consider some $\hstar(x) = \signv{\ip{\wstar,x}}$ and a data distribution $\cD$ satisfying $\prvv{(x,y)\sim\cD}{y\ip{\wstar, x} \ge 1} = 1$ and $\prvv{(x,y)\sim\cD}{\norm{x}\le R} = 1$. Let $\gamma$ be the maximum margin over all weight vectors classifying the uncorrupted data, and let $\fadv = \inbraces{\patch{x} \suchthat \norm{\patch{x} - x} \le \gamma}$. 

If $\Sclean$ consists of at least $\Omega\inparen{\epsclean^{-2}\inparen{\gamma^{-2}R^2 + \logv{\nfrac{1}{\delta}}}}$ i.i.d examples drawn from $\cD$ and if $\Sbackdoor$ consists of at least $\Omega\inparen{\epsadv^{-2}\inparen{\gamma^{-2}R^2 + \logv{\nfrac{1}{\delta}}}}$ i.i.d examples drawn from $\cD | \hstar(x) \neq t$, then we have:
$$\min_{w \suchthat \norm{w} \le \gamma^{-1}} \frac{1}{\abs{S}}\sum_{(x,y)\in S} \indicator{y\ip{w,x} < 1} > 0$$
In other words, assuming there exists a margin $\gamma$ and a $0$-loss classifier, empirical risk minimization of margin-loss with a norm constraint fails to find a $0$-loss classifier on a sufficiently contaminated training set.
}
\begin{theorem}[(Appendix Theorem \ref{app:scale_capacity})]
\label{thm:scale_capacity}
\thmScaleCapacity
\end{theorem}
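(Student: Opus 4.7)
The plan is to argue by contradiction. Suppose there exists $w$ with $\norm{w} \le 1/\gamma$ achieving margin loss zero on $S = \Sclean \cup \Sbackdoor$, i.e., $y\ip{w,x} \ge 1$ for every $(x,y) \in S$. I will combine standard margin-based uniform convergence on both the clean and patched populations with a short Cauchy--Schwarz argument to derive a contradiction.

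The first step is to apply margin-based generalization for the norm-constrained class $\inbraces{x \mapsto \ip{w,x} \suchthat \norm{w} \le 1/\gamma}$. Its empirical Rademacher complexity on data of norm at most $R$ is $O(R/(\gamma\sqrt{m}))$, and a ramp-loss / contraction argument converts zero empirical margin loss into a population bound of order $O(R/(\gamma\sqrt{m}) + \sqrt{\log(1/\delta)/m})$. The sample size $m_1 = \Omega(\epsclean^{-2}(\gamma^{-2}R^2 + \log(1/\delta)))$ is exactly what is needed to drive this deviation below $\epsclean$ with probability $1 - \delta/2$, yielding $\Pr_{(x,y)\sim\cD}[y\ip{w,x} < 1] \le \epsclean$. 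Since patched points have norm at most $R + \gamma \le 2R$, the same argument applied to $\Sbackdoor$ (where labels are $t$ and inputs are $\patch{x}$) yields, with probability $1-\delta/2$, that $\Pr_{x\sim\cD \mid \hstar(x)\neq t}[t\ip{w,\patch{x}} < 1] \le \epsadv$.

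On the intersection of these two high-probability events, the class-balance assumption $\Pr_{x\sim\cD}[\hstar(x) \neq t] \ge 1/50$ lets us condition the clean bound on $\hstar(x) \neq t$, giving $\Pr_{x\sim\cD \mid \hstar(x)\neq t}[(-t)\ip{w,x} < 1] \le 50\epsclean$. Provided $\epsclean, \epsadv$ are chosen so that $50\epsclean + \epsadv < 1$ (sufficient for the stated conclusion), a union bound produces a point $x$ in the support of $\cD \mid \hstar(x)\neq t$ satisfying \emph{both} $(-t)\ip{w,x} \ge 1$ and $t\ip{w,\patch{x}} \ge 1$. Adding these yields $t\ip{w, \patch{x} - x} \ge 2$, hence $\abs{\ip{w, \patch{x} - x}} \ge 2$. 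But Cauchy--Schwarz combined with $\norm{w} \le 1/\gamma$ and $\norm{\patch{x} - x} \le \gamma$ forces $\abs{\ip{w, \patch{x} - x}} \le 1$, the desired contradiction. Therefore no such $w$ exists, and the minimum empirical margin loss is strictly positive.

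\textbf{Main obstacle.} The geometric Cauchy--Schwarz step is the well-known observation that a norm-$1/\gamma$ separator cannot be flipped across its decision boundary by a perturbation of norm $\gamma$; this is not where the difficulty lies. The real care is in the uniform-convergence bookkeeping: one must choose a single sample-size scaling $\Omega(\epsilon^{-2}(\gamma^{-2}R^2 + \log(1/\delta)))$ that \emph{simultaneously} controls the margin-loss deviation on $\cD$ and on $\patch{\cD \mid \hstar(x)\neq t}$, uniformly over the norm ball $\norm{w} \le 1/\gamma$, and transfers the bound from the Lipschitz ramp-loss surrogate back to the $0/1$ margin indicator $\indicator{y\ip{w,x} < 1}$. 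Once this standard Rademacher-complexity setup is in place, the contradiction follows immediately.
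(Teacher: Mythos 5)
Your proposal is correct and follows essentially the same approach as the paper: contradiction via margin-based uniform convergence on both the clean and patched populations, a union bound to locate a single point $x$ whose clean and patched images are both classified with margin $\ge 1$, and then the observation that a perturbation of norm $\le \gamma$ cannot flip a separator of norm $\le 1/\gamma$. Your Cauchy--Schwarz phrasing and the paper's rearrangement into $\norm{\patch{x}-x} \ge 2\gamma$ are the same geometric fact. The one place you are slightly more careful than the paper: the paper's union bound combines $\Pr_{x\sim\cD}[y\ip{\what,x}\ge1]$ with a bound that is really over the conditional distribution $\cD \mid \hstar(x)\neq t$ without explicitly reconciling the two; your use of the class-balance assumption to condition the clean bound (incurring the factor of $50$) cleans this up. This is a bookkeeping refinement, not a different route.
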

\newcommand{\pfScaleCapacity}{
\begin{proof}
We will proceed by contradiction.

Let $\patch{x}$ denote the patched version of $x$. Without loss of generality, let the target label be $+1$. Set $\epsclean$ and $\epsadv$ such that $\epsclean + \epsadv < 1$ and draw enough samples such that the attack succeeds with parameters $\epsadv$ and $\delta$.

Observe that we can write every member in $\Sbackdoor$ as $(\patch{x}, y)$ for some natural $x$ with label $\neg y$. Next, suppose that the learner recovers a $\what$ such that the empirical margin loss of $\what$ is $0$. Next, recall that the following holds for $\what$ obtained from the minimization in the theorem statement and for a training set $S \sim \cD^m$ (see, for instance, Theorem 26.12 of \cite{Shalev-Shwartz2014-oj}):
$$\exvv{(x,y)\sim\cD}{\indicator{y\ip{\what, x} < 1}} \le \inf_{w \suchthat \norm{w} \le \gamma^{-1}}\exvv{(x,y)\sim S}{\indicator{y\ip{w,x} < 1}} + O\inparen{\sqrt{\frac{\inparen{\nfrac{R}{\gamma}}^2 + \logv{\nfrac{1}{\delta}}}{m}}}$$
Using this, it is easy to see that from uniform convergence, we have, with probability $1-\delta$:
\begin{align*}
    \prvv{x \sim \cD}{y\ip{\what, x} \ge 1} &\ge 1 - \epsclean \\
    \prvv{x \sim \cD}{\ip{\what, \patch{x}} \ge 1} &\ge 1 - \epsadv
\end{align*}
Thus, by a Union Bound, the following must be true:
$$\prvv{x \sim \cD}{\inparen{y\ip{\what, x} \ge 1} \wedge \inparen{\ip{\what, \patch{x}} \ge 1}} \ge 1 - \epsclean - \epsadv$$
Hence, it must be the case that there exists at least one true negative $x$ for which both $y\ip{\what, x} \ge 1$ and $\ip{\what, \patch{x}} \ge 1$ hold. We will use this to obtain a lower bound on $\norm{\what}$, from which a contradiction will follow. Notice that:
\begin{align*}
    1 &\le \ip{\what, \patch{x}} \\
    &= \ip{\what, x} + \ip{\what, \patch{x} - x} \\
    &\le -1 + \norm{\what} \cdot \norm{\patch{x} - x}
\end{align*}
where the last line follows from the fact that $x$ is labeled differently from $\patch{x}$. This gives:
$$\norm{\what} \ge \frac{2}{\norm{\patch{x} - x}}$$
Assuming that we meet the constraint $\norm{\what} \le \nfrac{1}{\gamma}$, putting the inequalities together gives:
$$\norm{\patch{x} - x} \ge 2\gamma$$
which is a contradiction, since we require that the size of the perturbation is smaller than the margin.
\end{proof}
}

\subsection{Memorization Capacity and Backdoor Attacks}

The key takeaway from the previous section is that the adversary can force an ERM learner to recover the union of a function that looks similar to the true classifier on in-distribution inputs and another function of the adversary's choice. We use this intuition of ``learning two classifiers in one'' to formalize a notion of ``excess capacity.''

To this end, we define the \emph{memorization capacity} of a class and a domain.

\begin{definition}[Memorization Capacity]
\label{defn:mcap}
Suppose we are in a setting where we are learning a hypothesis class $\cH$ over a domain $\cX$ under distribution $\cD$.

We say we can \emph{memorize} $k$ \emph{irrelevant} sets from a family $\cC$ atop a fixed $\hstar$ if we can find $k$ pairwise disjoint nonempty sets $X_1, \dots, X_k$ from a family of subsets of the domain $\cC$ such that for all $b \in \inbraces{\pm 1}^k$, there exists a classifier $\hhat \in \cH$ satisfying the below:
\begin{itemize}
    \item For all $x \in X_i$, we have $\hhat(x) = b_i$.
    \item $\prvv{x \sim \cD}{\hhat(x) = \hstar(x)} = 1$.
\end{itemize}
We define $\memcap{\cX,\cD}{h, \cH, \cC}$ to be the maximum number of sets from $\cC$ we can memorize for a fixed $h$ belonging to a hypothesis class $\cH$. We define $\memcap{\cX, \cD}{h, \cH} = \memcap{\cX,\cD}{h,\cH,\cB_\cX}$ to be the maximum number of sets from $\cB_\cX$ we can memorize for a fixed $h$, where $\cB_\cX$ is the family of all non-empty measurable subsets of $\cX$. Finally, we define $\memcap{\cX,\cD}{\cH} \coloneqq \sup_{h \in \cH} \memcap{\cX,\cD}{h, \cH}$.
\end{definition}

Intuitively, the memorization capacity captures the number of additional irrelevant (with respect to $\cD$) sets that can be memorized atop a true classifier. 

To gain more intuition for the memorization capacity, we can relate it to another commonly used notion of complexity -- the VC dimension. Specifically, we have the following lemma.

\newcommand{\lmMemVsVC}{We have $0 \le \memcap{\cX,\cD}{\cH} \le \vc{\cH}$.}
\begin{lemma}[(Appendix Lemma \ref{app:mem_vs_vc})]
\label{lemma:mem_vs_vc}
\lmMemVsVC
\end{lemma}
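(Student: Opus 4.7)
The plan is to handle the two inequalities separately. The lower bound $0 \le \memcap{\cX,\cD}{\cH}$ is essentially vacuous: the definition counts the maximum number of pairwise disjoint nonempty subsets that can be memorized, and this quantity is non-negative by convention (one can always take zero such sets, so the supremum is at least $0$). I would dispense with this in a single sentence.

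The upper bound $\memcap{\cX,\cD}{\cH} \le \vc{\cH}$ is the substantive part, and my approach is to extract a shattered set from any family of memorized sets. Concretely, fix $\hstar \in \cH$ and suppose $k = \memcap{\cX,\cD}{\hstar, \cH}$ can be witnessed by pairwise disjoint nonempty sets $X_1, \dots, X_k \subseteq \cX$. Since each $X_i$ is nonempty, I can choose a representative point $x_i \in X_i$, and since the $X_i$ are pairwise disjoint the points $x_1, \dots, x_k$ are distinct. Now I would invoke the defining property of memorization: for every labeling $b \in \{\pm 1\}^k$, there exists $\hhat_b \in \cH$ such that $\hhat_b(x) = b_i$ for every $x \in X_i$; in particular $\hhat_b(x_i) = b_i$. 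This exhibits $\{x_1, \dots, x_k\}$ as a set shattered by $\cH$, so $k \le \vc{\cH}$. Taking the supremum over $\hstar \in \cH$ gives $\memcap{\cX,\cD}{\cH} \le \vc{\cH}$.

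I do not anticipate any real obstacle here; the only mild subtlety is that the definition of memorization capacity constrains $\hhat$ not only on the chosen representatives but on the whole sets $X_i$ and also requires $\hhat$ to agree with $\hstar$ almost surely under $\cD$. Both of these are strictly stronger conditions than shattering the representatives, so the reduction to VC dimension only uses a weakening of the definition, which is exactly what is needed for an upper bound. No measurability or distributional assumptions enter the argument, since we only use the pointwise labeling guarantee on the $x_i$.
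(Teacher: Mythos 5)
Your proof is correct and follows the same argument as the paper: pick a representative point from each of the pairwise disjoint memorized sets, observe that the memorization guarantee forces every labeling of these points to be realized by some member of $\cH$, and conclude the representatives form a shattered set. The only cosmetic difference is that you argue directly while the paper phrases the same reduction as a proof by contradiction; the paper also appends examples showing both endpoints of the inequality are attained, but those are not needed to establish the stated bound.
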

\newcommand{\pfMemVsVC}{
\begin{proof}
The lower bound is obvious. This is also tight, as we can set $\cX = \inbraces{0,1}^n$, $\cD = \mathsf{Unif}(\cX)$, and $\cH = \inbraces{f \suchthat f(x) = 1, \forall x \in \cX}$.

We now tackle the upper bound. Suppose for the sake of contradiction that $\memcap{\cX,\cD}{\cH} \ge \vc{\cH} + 1$. Then, we can find $k = \vc{\cH} + 1$ nonempty subsets of $\cX$, $X_1, \dots, X_k$ and an $h$ for which every labeling of these subsets can be achieved by some other $\hhat \in \cH$. Hence, picking any collection of points $x_i \in X_i$ yields a set witnessing $\vc{\cH} \ge k = \vc{\cH} + 1$, which is clearly a contradiction.

The upper bound is tight as well. Consider the dataset $S = \inbraces{0, e_1, \dots, e_d}$, let $\cD$ be a distribution assigning a point mass of $1$ to $x = 0$, and let $\hstar(0) = 1$. It is easy to see that the class of origin-containing halfspaces can memorize every labeling $e_1, \dots, e_d$ as follows -- suppose we have labels $b_1, \dots, b_d$. Then, the classifier:
$$\indicator{\sum_{i = 1}^d b_i \cdot x_i \ge 0}$$
memorizes every labeling of $e_1, \dots, e_d$ while correctly classifying the pair $(0, 1)$. Hence, we can memorize $d$ irrelevant sets, which is equal to the VC dimension of origin-containing linear separators.
\end{proof}
}

Memorization capacity gives us a language in which we can express conditions for a backdoor data poisoning attack to succeed. Specifically, we have the following general result.

\newcommand{\thmMcapAttack}{
Pick a target label $t \in \pm 1$. Suppose we have a hypothesis class $\cH$, a target function $\hstar$, a domain $\cX$, a data distribution $\cD$, and a class of patch functions $\fadv$. Define:
\begin{align*}
    \cC(\fadv(\hstar)) \coloneqq \{\patch{\suppv{\cD | \hstar(x) \neq t}} \suchthat \patchw \in \fadv\}
\end{align*}
Now, suppose that $\memcap{\cX,\cD}{\hstar,\cH,\cC(\fadv(\hstar))} \ge 1$. Then, there exists a function $\patchw \in \fadv$ for which the adversary can draw a set $\Sbackdoor$ consisting of $m = \Omega\inparen{\epsadv^{-1}\inparen{\vc{\cH} + \logv{\nfrac{1}{\delta}}}}$ i.i.d samples from $\cD | \hstar(x) \neq t$ such that with probability at least $1-\delta$ over the draws of $\Sbackdoor$, the adversary achieves the objectives of Problem \ref{problem:adv_general}, regardless of the number of samples the learner draws from $\cD$ for $\Sclean$.
}
\newcommand{\thmMcapAttackGeneral}{
Pick an array of $k$ target labels $t \in \inbraces{\pm 1}^k$. Suppose we have a hypothesis class $\cH$, a target function $\hstar$, a domain $\cX$, a data distribution $\cD$, and a class of patch functions $\fadv$. Define:
\begin{align*}
    \cC(\fadv(\hstar))_{t'} \coloneqq \{\patch{\suppv{\cD | \hstar(x) \neq t'}} \suchthat &\patchw \in \fadv\}
\end{align*}
and let:
$$\cC(\fadv(\hstar)) \coloneqq \cC(\fadv(\hstar))_{-1} \cup \cC(\fadv(\hstar))_{1}$$
Now, suppose that $\memcap{\cX,\cD}{\hstar,\cH,\cC(\fadv(\hstar))} \ge k$. Then, there exists $k$ functions $\patchw_1, \dots, \patchw_k \in \fadv$ for which the adversary can draw sets $\inbraces{(\Sbackdoor)_i}_{i \in [k]}$ each consisting of $m_i = \Omega\inparen{\epsadv^{-1}\inparen{\vc{\cH} + \logv{\nfrac{k}{\delta}}}}$ i.i.d samples from $\cD | \hstar(x) \neq t_i$ such that with probability at least $1-\delta$ over the draws of $(\Sbackdoor)_i$, the adversary achieves the objectives of Problem \ref{problem:adv_general}, regardless of the number of samples the learner draws from $\cD$ for $\Sclean$.
}
\begin{theorem}[Nonzero Memorization Capacity Implies Backdoor Attack (Appendix Theorem \ref{app:mcap_attack})]
\label{thm:mcap_attack}
\thmMcapAttack
\end{theorem}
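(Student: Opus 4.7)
The plan is to unpack the memorization capacity hypothesis directly and then reduce the attack analysis to a standard realizable-case uniform convergence argument, applied separately to the backdoored distribution. Because $\memcap{\cX,\cD}{\hstar,\cH,\cC(\fadv(\hstar))} \ge 1$, Definition \ref{defn:mcap} hands us a nonempty set $X_1 \in \cC(\fadv(\hstar))$, which by construction is of the form $X_1 = \patch{\suppv{\cD\mid\hstar(x)\neq t}}$ for some specific $\patchw \in \fadv$, together with the guarantee that for \emph{every} label assignment $b_1 \in \{\pm 1\}$ there is a classifier in $\cH$ achieving that label on all of $X_1$ while agreeing with $\hstar$ on a $\mu_\cD$-measure-$1$ subset of $\cX$. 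I would take this $\patchw$ as the adversary's patch function, set $b_1 = t$, and let $\hhat^\dagger \in \cH$ be the corresponding witness classifier.

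Next, I would construct the training set by having the adversary draw $\Sbackdoor$ of size $m = \Omega(\epsadv^{-1}(\vc{\cH} + \log(1/\delta)))$ i.i.d.\ from $\cD \mid \hstar(x)\neq t$, apply $\patchw$ to each drawn example, and relabel as $t$. First I would verify the second bullet of Problem \ref{problem:adv_general}: the witness $\hhat^\dagger$ labels every $x \in \Sclean$ correctly (since it matches $\hstar$ on the $\cD$-support, which contains $\Sclean$ almost surely) and labels every patched point in $\Sbackdoor$ as $t$ (since $\patch{x} \in X_1$ and $\hhat^\dagger \equiv t$ on $X_1$). Hence the empirical risk is zero and the ERM learner outputs some $\hhat \in \cH$ with zero training error on $\Sclean \cup \Sbackdoor$.

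The final step is to convert zero empirical error on $\Sbackdoor$ into the required attack success rate. I would observe that the patched examples in $\Sbackdoor$ are i.i.d.\ samples from the distribution $\patch{\cD\mid\hstar(x)\neq t}$, all with the same ``label'' $t$, and that by the previous paragraph the realizability assumption holds for this distribution (since $\hhat^\dagger$ has zero error on it). The standard realizable VC-sample-complexity bound then says that with $m = \Omega(\epsadv^{-1}(\vc{\cH}+\log(1/\delta)))$ samples, any classifier in $\cH$ that achieves zero error on $\Sbackdoor$ has true error at most $\epsadv$ on $\patch{\cD\mid\hstar(x)\neq t}$ with probability at least $1-\delta$. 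Applied to the ERM output $\hhat$, this gives exactly
\[
\prvv{x\sim \cD\mid \hstar(x)\neq t}{\hhat(\patch{x}) = t} \;\ge\; 1-\epsadv,
\]
which is the third bullet of Problem \ref{problem:adv_general}. Importantly, this bound does not depend on $|\Sclean|$, because the realizability of the backdoor distribution by $\hhat^\dagger$ (and hence the zero-empirical-error property for ERM's output on $\Sbackdoor$) holds regardless of how many clean examples are added.

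The only subtle point, and what I expect to be the main thing to get right rather than a major obstacle, is justifying that uniform convergence on the backdoor distribution goes through unchanged in the presence of $\Sclean$: this reduces to noting that $\hhat$ must in particular be consistent on $\Sbackdoor$, so the classical realizable-case argument applied to the marginal distribution $\patch{\cD\mid\hstar(x)\neq t}$ and the hypothesis class $\cH$ (restricted to those $h$ consistent with $\Sclean$, which is still a subclass of $\cH$ and hence has VC dimension at most $\vc{\cH}$) yields the claimed bound. A union bound with the analogous clean-side uniform convergence, as remarked after Problem \ref{problem:adv_general}, gives simultaneous low test error on $\cD$ whenever $\Sclean$ is large enough, completing the proof.
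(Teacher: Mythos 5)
Your proposal is correct and follows essentially the same approach as the paper's proof: extract a memorizable set and witness classifier from the memorization hypothesis, establish that the witness achieves zero empirical risk on $\Sclean \cup \Sbackdoor$, and apply realizable-case uniform convergence to the patched conditional distribution $\patch{\cD\mid\hstar(x)\neq t}$ to obtain the attack success rate independently of $\abs{\Sclean}$. One small note: your closing device of restricting to the subclass of hypotheses consistent with $\Sclean$ is unnecessary and slightly awkward---that subclass is a random object depending on the draw of $\Sclean$, so applying a VC bound to it requires a conditioning argument; the cleaner route, and the one the paper takes, is to apply the realizable-case bound over all of $\cH$ to the i.i.d.\ draw $\Sbackdoor$ alone, which already forces every $\Sbackdoor$-consistent hypothesis (in particular the ERM output) to have low error on the patched conditional distribution, regardless of $\Sclean$.
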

\newcommand{\pfMcapAttack}{
\begin{proof}
As per the theorem statement, we can draw $m$ samples from $\cD | \hstar(x) \neq t_i$ to form $\Sbackdoor$ by inverting the labels of the samples we draw. 

Since $\memcap{\cX,\cD}{\hstar, \cH, \cC(\fadv(\hstar))} = k$, there must exist $k$ sets $X_1, \dots, X_k \in \cC(\fadv(\hstar))$ such that the $X_i$ are memorizable, for which we can write $X_i \subseteq \patchw_i\inparen{\suppv{\cD | \hstar(x) \neq t_i}}$ for appropriate choices of $\patchw_i$, and for which $\mu_{\patch{\cD | \hstar(x) \neq t_i}}(X_i) = 1$. This implies that with probability $1$, there exists at least one function $\hhat \in \cH$ such that $\hhat$ returns $t_i$ on every element in $(\Sbackdoor)_i$ for all $i \in [k]$ and agrees with $\hstar$ on every element in the clean training set $\Sclean$.

Thus, we can recover a classifier $\hhat$ from $\cH$ with $0$ error on the training set $\Sclean \cup \inparen{\bigcup_{i \in [k]} (\Sbackdoor)_i}$. In particular, notice that we achieve $0$ error on $\Sclean$ from distribution $\cD$ and on every $(\Sbackdoor)_i$ from distribution $\patchw_i\inparen{\cD | \hstar(x) \neq t_i}$. From the Fundamental Theorem of PAC Learning (\cite{Shalev-Shwartz2014-oj}), it follows that as long as $\abs{\Sclean}$ and $\abs{(\Sbackdoor)_i}$ are each at least $\Omega\inparen{\epsclean^{-1}\inparen{\vc{\cH} + \logv{\nfrac{k}{\delta}}}}$ and $\Omega\inparen{\epsadv^{-1}\inparen{\vc{\cH} + \logv{\nfrac{k}{\delta}}}}$, respectively, we have that $\hhat$ has error at most $\eps$ on $\cD$ and error at least $1 - \eps$ on $\patchw_i\inparen{\cD | \hstar(x) \neq t_i}$ with probability $1-\delta$ (following from a union bound, where each training subset yields a failure to attain uniform convergence with probability at most $\nfrac{\delta}{(k + 1)}$).
\end{proof}
}

In words, the result of Theorem \ref{thm:mcap_attack} states that nonzero memorization capacity with respect to subsets of the images of valid patch functions implies that a backdoor attack exists. More generally, we can show that a memorization capacity of at least $k$ implies that the adversary can \emph{simultaneously} execute $k$ attacks using $k$ different patch functions. In practice, this could amount to, for instance, selecting $k$ different triggers for an image and correlating them with various desired outputs. We defer the formal statement of this more general result to the Appendix (see Appendix Theorem \ref{app:mcap_attack_general}).

A natural follow-up question to the result of Theorem \ref{thm:mcap_attack} is to ask whether a memorization capacity of zero implies that an adversary cannot meet its goals as stated in Problem \ref{problem:adv_general}. Theorem \ref{thm:mcap_zero} answers this affirmatively.

\newcommand{\thmMcapZero}{
Let $\cC(\fadv(\hstar))$ be defined the same as in Theorem \ref{thm:mcap_attack}. Suppose we have a hypothesis class $\cH$ over a domain $\cX$, a true classifier $\hstar$, data distribution $\cD$, and a perturbation class $\fadv$. If $\memcap{\cX,\cD}{\hstar, \cH, \cC(\fadv(\hstar))} = 0$, then the adversary cannot successfully construct a backdoor data poisoning attack as per the conditions of Problem \ref{problem:adv_general}.
}
\begin{theorem}[Nonzero Memorization Capacity is Necessary for Backdoor Attack (Appendix Theorem \ref{app:mcap_zero})]
\label{thm:mcap_zero}
\thmMcapZero
\end{theorem}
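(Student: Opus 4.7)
The plan is to prove the contrapositive: a successful backdoor attack implies $\memcap{\cX,\cD}{\hstar,\cH} \geq 1$. Assume the attack of Problem~\ref{problem:adv_general} succeeds with patch $\patchw \in \fadv$ and target label $t$, and write $\cD_{-t} \coloneqq \cD \mid \hstar(x) \neq t$ for brevity.

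The crux is to extract a single classifier $\hhat^\star \in \cH$ satisfying both (i) $\prvv{x \sim \cD}{\hhat^\star(x) = \hstar(x)} = 1$ and (ii) $\hhat^\star(\patch{x}) = t$ on a nonempty subset $A \subseteq \suppv{\cD_{-t}}$. The hypothesis of a successful attack only supplies these approximately: bullets $2$ and $3$ of Problem~\ref{problem:adv_general} guarantee, for each sample size $n$, that with probability at least $1-\delta$ over $\Sclean \sim \cD^n$ the learner outputs a hypothesis $\hhat_n \in \cH$ consistent with $\Sclean \cup \Sbackdoor$ and satisfying the population attack goal on $\cD_{-t}$ up to $\epsadv$. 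To upgrade approximate to exact agreement I would couple all sample sizes on a common $\cD^\infty$ and note that the realizability events are monotone decreasing in $n$, each with probability at least $1-\delta$; continuity of measure then yields, with probability at least $1-\delta$, a single infinite realization along which realizability persists at every finite prefix. Uniform convergence for the VC class $\cH$ forces $\mu_\cD(\{x : \hhat_n(x) \neq \hstar(x)\}) \to 0$, and a standard diagonalization over the trace of $\cH$ on a countable dense subset of $\suppv{\cD}$ extracts a limit $\hhat^\star \in \cH$ satisfying (i); property (ii) is preserved because the attack-goal event has $\cD_{-t}$-measure bounded away from zero uniformly in $n$.

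Given $\hhat^\star$, the memorization witness assembles easily. Set $X_1 \coloneqq \patch{A}$, which is nonempty by construction. By the $1$-consistency of $\patchw$ with $\hstar$ on $\cD$, we have $\hstar(\patch{x}) = \hstar(x) = \neg t$ for $\cD_{-t}$-almost every $x$, so after discarding a null subset of $A$ we may assume $\hstar \equiv \neg t$ and $\hhat^\star \equiv t$ on $X_1$. The labeling $b = \neg t$ is then witnessed by $\hstar$ itself (trivially consistent with $\hstar$ on $\cD$), and the labeling $b = t$ is witnessed by $\hhat^\star$. Since both classifiers agree with $\hstar$ on $\cD$ almost surely, $X_1$ is a single nonempty memorized subset satisfying Definition~\ref{defn:mcap}, yielding $\memcap{\cX,\cD}{\hstar,\cH} \geq 1$ and contradicting $\memcap{\cX,\cD}{\cH} = 0$.

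The main obstacle is the extraction of $\hhat^\star$ in the first step. Uniform convergence on any finite $\Sclean$ only certifies approximate $\cD$-agreement, whereas Definition~\ref{defn:mcap} demands \emph{exact} agreement up to a $\mu_\cD$-null set; bridging this gap is the technical heart of the proof and requires the monotone-realizability plus diagonalization argument outlined above. Once $\hhat^\star$ is in hand, the remainder reduces to routine set-manipulation and unpacking the definition of memorization capacity.
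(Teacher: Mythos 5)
Your overall strategy matches the paper's: argue by contradiction/contrapositive that a successful attack forces a nonempty memorizable set, witnessed by the classifier the learner outputs together with $\hstar$. But the routes diverge substantially in the middle, and the technical core of your argument does not hold up.

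The paper does not attempt to extract a single "limit" classifier with exact $\cD$-a.s.\ agreement from the sequence of ERM outputs. Instead it performs a case analysis on $\mu_\cD(X)$ where $X := \patch{\suppv{\cD \mid \hstar(x) \neq t}}$. If $\mu_\cD(X) > 0$, the paper picks $\epsclean, \epsadv < \mu_\cD(X)/(1+\mu_\cD(X))$ and shows that the attack's population success forces $\hhat$ to disagree with $\hstar$ on a $\cD$-subset of measure exceeding $\epsclean$, contradicting the guarantee of low clean error. This case is what rules out the scenario you never address: if $\mu_\cD(X_1) > 0$, no classifier can simultaneously satisfy your properties (i) and (ii), so your extraction must fail rather than produce $\hhat^\star$ — but nothing in your argument recognizes or handles this. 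Having concluded $\mu_\cD(X) = 0$, the paper then exhibits a single point $x \in X$ on which $\hhat$ and $\hstar$ disagree, and $\{x\}$ (a $\cD$-null set) is the memorization witness.

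The other genuine gap is the extraction argument itself. The claim that a ``standard diagonalization over the trace of $\cH$ on a countable dense subset'' produces $\hhat^\star \in \cH$ is not standard and is in general false: VC classes are not closed under pointwise limits (think of the class of intervals or of halfspaces — a pointwise-subsequential limit along a dense set need not lie in the class, nor need pointwise convergence on a dense set imply $\cD$-a.e.\ convergence). The coupling step is also suspect: the event "the ERM output on the first $n$ samples attacks successfully" is not monotone in $n$, because the ERM solution itself changes with $n$, so continuity of measure does not apply in the form you invoke. Even if a subsequential limit existed, there is no reason the single limit classifier would inherit property (ii): the sets on which each $\hhat_n$ errs on $\patch{\cD_{-t}}$ can drift with $n$.

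To be fair, you have put your finger on a genuine subtlety — Definition~\ref{defn:mcap} requires exact $\cD$-a.s.\ agreement, whereas uniform convergence on any finite $\Sclean$ only yields approximate agreement — and the paper's write-up is itself terse on this point in its second case. But your proposed bridge does not cross that gap; the paper's case split on $\mu_\cD(X)$ is the move you are missing, and you should study how it lets the proof work directly with the finite-sample $\hhat$ rather than a hypothetical infinite-sample limit.
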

\newcommand{\pfMcapZero}{
\begin{proof}
The condition in the theorem statement implies that there does not exist an irrelevant set that can be memorized atop any choice of $h \in \cH$.

For the sake of contradiction, suppose that there does exist a target classifier $\hstar$, a function $\patchw \in \fadv$ and a target label $t$ such that for all choices of $\epsclean$, $\epsadv$, and $\delta$, we obtain a successful attack.

Define the set $X \coloneqq \patch{\suppv{\cD | \hstar(x) \neq t}}$; in words, $X$ is the subset of $\cX$ consisting of patched examples that are originally of the opposite class of the the target label. It is easy to see that $X \in \cC$.

We will first show that if $\mu_\cD(X) > 0$, then we obtain a contradiction. Set $0 < \epsadv, \epsclean < \frac{\mu_\cD(X)}{1 + \mu_\cD(X)}$. Since the attack is successful, we must classify at least a $1 - \epsadv$ fraction of $X$ as the target label. Hence, we can write:
\begin{align*}
    \mu_\cD\inparen{\inbraces{x \in X \suchthat \hhat(x) = t}} &\ge \inparen{1 - \epsadv}\mu_\cD(X) \\
    &> \frac{1}{1 + \mu_\cD(X)} \cdot \mu_\cD(X) \\
    &> \epsclean
\end{align*}
Since the set $\inbraces{x \in X \suchthat \hhat(x) = t}$ is a subset of the region of $\cX$ that $\hhat$ makes a mistake on, we have that $\hhat$ must make a mistake on at least $\epsclean$ measure of $\cD$, which is a contradiction.

Hence, it must be the case that $\mu_\cD(X) = 0$; in other words, $X$ is an irrelevant set. Recall that in the beginning of the proof, we assume there exists a function $\hhat$ that achieves label $t$ on $X$, which is opposite of the value of $\hstar$ on $X$. Since we can achieve both possible labelings of $X$ with functions from $\cH$, it follows that $X$ is a memorizable set, and thus the set $X$ witnesses positive $\memcap{\cX,\cD}{\hstar, \cH, \cC(\fadv(\hstar))}$.
\end{proof}
}

\subsubsection{Examples}

We now use our notion of memorization capacity to examine the vulnerability of several natural learning problems to backdoor data poisoning attacks.

\newcommand{\exLinearBackdoorMcap}{Recall the result from the previous section, where we took $\cX = \R^d$, $\cH_d$ to be the set of linear classifiers in $\R^d$, and let $\cD$ be a distribution over a radius-$R$ subset of an $s$-dimensional subspace $P$. We also assume that the true labeler $\hstar$ achieves margin $\gamma$.
 
If we set $\fadv = \inbraces{\patch{x} \suchthat \patch{x} = x + \eta, \eta \in \R^d}$, then we have $\memcap{\cX,\cD}{\hstar, \cH_d, \cC(\fadv(\hstar))} \ge d - s$. }
\begin{example}[Overparameterized Linear Classifiers (Appendix Example \ref{app:linear_backdoor_mcap})]
\label{ex:linear_backdoor_mcap}
\exLinearBackdoorMcap
\end{example}
\newcommand{\pfLinearBackdoorMcap}{
\begin{proof}
Let $\wstar$ be the weight vector corresponding to $\hstar$.

Observe that there exists $k \coloneqq d - s$ unit vectors $v_1, \dots, v_k$ that complete an orthonormal basis from that for $P$ to one for $\R^d$. Next, consider the following subset of $\fadv(\hstar)$:
$$\fadv' \coloneqq \inbraces{\patchw \in \fadv \suchthat \forall i \in [k], \patchw_i\inparen{x} = \inparen{\begin{cases} x + \eta \cdot t_iv_i &, \hstar(x) \neq t_i \\ x & \text{otherwise} \end{cases}}}$$

We prove the memorization capacity result by using the images of functions in $\fadv'$. We will show that the function:
$$\hhat(x) = \signv{\ip{\wstar + \frac{2R}{\gamma}\sum_{i=1}^k t_i\cdot\frac{v_i}{\eta_i}, x}}$$
memorizes the $k$ sets $C_i \coloneqq \inbraces{x + \eta_i \cdot v_i \suchthat \ip{\wstar, x} \in \insquare{1, \nfrac{R}{\gamma}} \cup \insquare{-\nfrac{R}{\gamma}, -1}}$. Moreover, observe that the preimages of the $C_i$ have measure $1$ under the conditional distributions $\cD | \hstar(x) \neq t_i$, since the preimages contain the support of these conditional distributions. We now have that, for a clean point $x \in P$:
\begin{align*}
    \hhat(x) &= \signv{\ip{\wstar + \frac{2R}{\gamma}\sum_{i=1}^k t_i\cdot\frac{v_i}{\eta_i}, x}} \\
    &= \signv{\ip{\wstar, x} + \frac{2R}{\gamma}\ip{\sum_{i=1}^k t_i\cdot\frac{v_i}{\eta_i}, x}} \\
    &= \signv{\ip{\wstar, x}} = \hstar(x)
\end{align*}
and for a corrupted point $x + \eta_j \cdot v_j$, for $j \in [k]$:
\begin{align*}
    \hhat(x) &= \signv{\ip{\wstar + \frac{2R}{\gamma}\sum_{i=1}^k t_i\cdot\frac{v_i}{\eta_i}, x + \eta_j \cdot v_j}} \\
    &= \signv{\ip{\wstar, x + \eta_j \cdot v_j} + \frac{2R}{\gamma}\ip{\sum_{i=1}^k t_i\cdot\frac{v_j}{\eta_j}, x + \eta_j \cdot v_j}} \\
    &= \signv{\ip{\wstar, x} + \frac{2R}{\gamma}\ip{\sum_{i=1}^k t_i\cdot\frac{v_i}{\eta_i}, x} + \frac{2R}{\gamma}\ip{\sum_{i=1}^k t_i\cdot\frac{v_i}{\eta_i}, \eta_j \cdot v_j}} \\
    &= \signv{\insquare{\pm \frac{R}{\gamma}} + t_j \cdot \frac{2R}{\gamma}} \\
    &= t_j
\end{align*}
This shows that we can memorize the $k$ sets $C_i$. It is easy to see that $\mu_\cD(C_i) = 0$, so the $C_i$ are irrelevant memorizable sets; in turn, we have that $\memcap{\cX,\cD}{\hstar} \ge k = d - s$, as desired.
\end{proof}
}

\newcommand{\exLinearBackdoorCvx}{Let $\cH$ be the set of origin-containing halfspaces. Fix an origin-containing halfspace $\hstar$ with weight vector $\wstar$. Let $\cX'$ be a closed compact convex set, let $\cX = \cX'  \setminus \inbraces{x \suchthat \ip{\wstar, x} = 0}$, and let $\cD$ be any probability measure over $\cX$ that assigns nonzero measure to every $\ell_2$ ball of nonzero radius contained in $\cX$ and satisfies the relation $\mu_\cD(Y) = 0 \iff \mathsf{Vol}_d(Y) = 0$ for all $Y \subset \cX$. Then, $\memcap{\cX,\cD}{\hstar, \cH} = 0$.}
\begin{example}[Linear Classifiers Over Convex Bodies (Appendix Example \ref{app:linear_backdoor_cvx})]
\label{ex:linear_backdoor_cvx}
\exLinearBackdoorCvx
\end{example}
\newcommand{\pfLinearBackdoorCvx}{
\begin{proof}
Observe that it must be the case that the dimension of the ambient space is equal to the dimension of $\cX$.

Let $\wstar$ be the weight vector corresponding to the true labeler $\hstar$.

For the sake of contradiction, suppose there exists a classifier $\what$ satisfying $\prvv{x\sim\cD}{\signv{\ip{\what, x}} = \signv{\ip{\wstar, x}}} = 1$, but there exists a subset $Y \subset \cX$ for which $\signv{\ip{\what, x}} \neq \signv{\ip{\wstar, x}}$, for all $x \in Y$. Such a $Y$ would constitute a memorizable set.

Without loss of generality, let the target label be $-1$; that is, the adversary is converting a set $Y$ whose label is originally $+1$ to one whose label is $-1$. Additionally, without loss of generality, take $\norm{\wstar} = \norm{\what} = 1$. Observe that the following set relationship must hold:
$$Y \subseteq D \coloneqq \inbraces{x \in \cX \suchthat \ip{\what, x} \le 0 \text{ and } \ip{\wstar, x} > 0}$$
For $D$ to be nonempty (and therefore for $Y$ to be nonempty), observe that we require $\what \neq \wstar$ (otherwise, the constraints in the definition of the set $D$ are unsatisfiable). 

We now need the following intermediate result.
\begin{lemma*}
Consider some convex body $K$, a probability measure $\cD$ such that every $\ell_2$ ball of nonzero radius within $K$ has nonzero measure, and some subset $K' \subseteq K$ satisfying $\mu_\cD(K') = 1$. Then, $\conv{K'}$ contains every interior point of $K$.
\end{lemma*}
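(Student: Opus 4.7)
\begin{proofoutline}
The plan is to show that any interior point $x$ of $K$ lies in the convex hull of finitely many points of $K'$. The picture is: take a small simplex inside $K$ with $x$ in its interior, then jiggle its vertices slightly into $K'$, and argue that $x$ remains in the interior of the perturbed simplex.

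Concretely, let $d$ be the dimension of the ambient space (which matches $\dim K$ since $K$ is a convex body) and let $x \in \mathrm{int}(K)$. Pick $r > 0$ such that the closed ball $B(x, r)$ is contained in $\mathrm{int}(K)$. Choose $d + 1$ affinely independent points $p_0, \dots, p_d \in B(x, r/2)$ such that $x$ lies strictly in the interior of the simplex $\Delta = \conv{p_0, \dots, p_d}$; for instance, take the vertices of a regular $d$-simplex centered at $x$ with circumradius $r/2$. By a standard continuity/barycentric-coordinates argument, there exists $\rho > 0$ such that whenever $p_i' \in B(p_i, \rho)$ for each $i$, the point $x$ still lies in the interior of $\conv{p_0', \dots, p_d'}$. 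Shrinking $\rho$ if necessary, we also ensure $B(p_i, \rho) \subseteq \mathrm{int}(K)$ for every $i$.

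Now invoke the measure hypothesis: each ball $B(p_i, \rho)$ has strictly positive $\cD$-measure. Since $\mu_\cD(K \setminus K') = 1 - \mu_\cD(K') = 0$, we cannot have $B(p_i, \rho) \subseteq K \setminus K'$, so $B(p_i, \rho) \cap K' \neq \emptyset$. Pick any $p_i' \in B(p_i, \rho) \cap K'$. Then $x \in \conv{p_0', \dots, p_d'} \subseteq \conv{K'}$, which completes the proof.

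The main obstacle is making the perturbation step precise, specifically the claim that small enough movements of the simplex's vertices preserve the property that $x$ is an interior point. This follows from the fact that the barycentric coordinates of $x$ with respect to an affinely independent tuple depend continuously on that tuple, so if $x$ has strictly positive barycentric coordinates with respect to $(p_0, \dots, p_d)$, then the same remains true after sufficiently small perturbations. Everything else is bookkeeping: choosing $r$ to place the initial simplex in the interior and invoking the nonzero-measure hypothesis to locate the perturbed vertices in $K'$.
\end{proofoutline}
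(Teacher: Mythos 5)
Your proof is correct but takes a genuinely different route from the paper's. The paper argues by contradiction: if some interior point $x$ were outside $\conv{K'}$, a separating hyperplane would give a halfspace $H$ containing $\conv{K'}$ with $x$ on its boundary, and then $K \setminus H$ contains a small ball (half of a ball around $x$ contains a ball of half the radius) of positive measure disjoint from $\conv{K'} \supseteq K'$, contradicting $\mu_\cD(K') = 1$. You instead give a direct, constructive argument: inscribe a $d$-simplex around $x$ deep inside $K$, and use the measure hypothesis $d+1$ times to push each vertex into $K'$ while keeping $x$ in the interior of the perturbed simplex, via continuity of barycentric coordinates. Both arguments hinge on the same geometric fact -- that small balls inside $K$ have positive measure so cannot sit entirely in $K\setminus K'$ -- but the paper applies it once to a ball straddling the separating hyperplane, while you apply it locally around each vertex. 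The paper's route is slightly leaner in machinery (no affine independence or barycentric-continuity bookkeeping, just the supporting-hyperplane theorem), whereas yours is constructive and produces an explicit $(d{+}1)$-point Carath\'eodory-style witness in $K'$ whose hull contains $x$, which some readers may find more transparent. One small thing worth stating explicitly in your writeup: the perturbed vertices $p_i'$ must remain affinely independent for the barycentric-coordinate argument to apply, which also holds for small enough $\rho$ but is a separate (if equally routine) continuity fact.
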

\begin{proof}
Recall that an interior point is defined as one for which we can find some neighborhood contained entirely within the convex body. Mathematically, $x \in K$ is an interior point if we can find nonzero $\delta$ for which $\inbraces{z \suchthat \norm{x - z} \le \delta} \subseteq K$ (see \cite{Boyd2004-ey}).

For the sake of contradiction, suppose that there exists some interior point $x \in K$ that is not contained in $\conv{K'}$. Hence, there must exist a halfspace $H$ with boundary passing through $x$ and entirely containing $\conv{K'}$. Furthermore, there must exist a nonzero $\delta$ for which there is an $\ell_2$ ball centered at $x$ of radius $\delta$ contained entirely within $K$. Call this ball $B_2(x, \delta)$. Thus, the set $K \setminus H$ cannot be in $\conv{K'}$. 

We will now show that $\mu_\cD(K \setminus H) > 0$. Observe that the hyperplane inducing $H$ must cut $B_2(x, \delta)$ through an equator. From this, we have that the set $K \setminus H$ contains a half-$\ell_2$ ball of radius $\delta$. It is easy to see that this half-ball contains another $\ell_2$ ball of radius $\nfrac{\delta}{2}$ (call this $B'$), and as per our initial assumption, $B'$ must have nonzero measure.

Thus, we can write $\mu_\cD(K \setminus H) \ge \mu_\cD(B') > 0$. Since we know that $\mu_\cD(\conv{K'}) + \mu_\cD(K \setminus H) \le 1$, it follows that $\mu_\cD(\conv{K'}) < 1$ and therefore $\mu_\cD(K') < 1$, violating our initial assumption that $\mu_\cD(K') = 1$.
\end{proof}

This lemma implies that if $Y$ is memorizable, then it must lie entirely on the boundary of the set $\cX_{+} \coloneqq \inbraces{x \in \cX \suchthat \ip{\wstar, x} > 0}$. To see this, observe that if $\what$ classifies any (conditional) measure-$1$ subset of $\cX_{+}$ correctly, then it must classify the convex hull of that subset correctly as well. This implies that $\what$ must correctly classify every interior point in $\cX_{+}$, and thus, $Y$ must be entirely on the boundary of $\cX_{+}$.

We will now show the following intermediate result.
\begin{lemma*}
Let $K$ be a closed compact convex set. Let $x_1$ be on the boundary of $K$ and let $x_2$ be an interior point of $K$. Then, every point of the form $\lambda x_1 + (1 - \lambda) x_2$ for $\lambda \in (0, 1)$ is an interior point of $K$.
\end{lemma*}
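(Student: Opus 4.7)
The plan is to exhibit, for each $\lambda \in (0,1)$, an explicit open ball around $y := \lambda x_1 + (1-\lambda)x_2$ that is contained entirely in $K$, which by definition of interior point completes the proof. The construction will leverage the ball of radius $r > 0$ around $x_2$ guaranteed by $x_2$ being interior, together with the convexity of $K$ and the fact that $x_1 \in K$ (since $K$ is closed and $x_1$ lies on its boundary).

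Concretely, I first pick $r > 0$ such that $B(x_2, r) \subseteq K$, using the definition of interior point. Then I claim that $B(y, (1-\lambda) r) \subseteq K$. To verify this, I would take an arbitrary point $z \in B(y, (1-\lambda) r)$ and rewrite it as
\[
z = \lambda x_1 + (1-\lambda) \Big( x_2 + \tfrac{z - y}{1-\lambda} \Big).
\]
Setting $x_2' := x_2 + (z-y)/(1-\lambda)$, the norm bound on $z - y$ gives $\norm{x_2' - x_2} < r$, so $x_2' \in B(x_2, r) \subseteq K$. Since $z$ is then a convex combination of $x_1 \in K$ and $x_2' \in K$, convexity of $K$ yields $z \in K$. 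As $z$ was arbitrary, $B(y, (1-\lambda) r) \subseteq K$, establishing that $y$ is an interior point.

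There is no real obstacle here; the argument is a routine convexity calculation, and the only subtlety is choosing the right radius $(1-\lambda) r$ so that perturbations of $y$ translate into perturbations of $x_2$ that stay within its guaranteed neighborhood in $K$. Note that $\lambda \in (0,1)$ is required precisely so that $(1-\lambda)r > 0$; the endpoint $\lambda = 1$ would shrink the ball to a point, consistent with $x_1$ being merely a boundary (not necessarily interior) point.
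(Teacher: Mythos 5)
Your proof is correct and takes essentially the same approach as the paper: both pick a ball $B(x_2, r) \subseteq K$ witnessing that $x_2$ is interior and then show $B(y, (1-\lambda) r) \subseteq K$. The paper appeals to similar triangles and calls it easy to see; you give the explicit convex-combination calculation, which is just a cleaner way of writing out the same geometric picture.
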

\begin{proof}
Since $x_2$ is an interior point, there must exist an $\ell_2$ ball of radius $\delta$ contained entirely within $K$ centered at $x_2$. From similar triangles and the fact that any two points in a convex body can be connected by a line contained in the convex body, it is easy to see that we can center an $\ell_2$ ball of radius $(1 - \lambda)\delta$ at the point $\lambda x_1 + (1 - \lambda) x_2$ that lies entirely in $K$. This is what we wanted, and we're done.
\end{proof}

Now, let $x_1 \in Y$ and $x_2 \in \mathsf{Interior}(\cX_{-})$ where $\cX_{-} = \inbraces{x \in \cX \suchthat \ip{\wstar, x} < 0}$. Draw a line from $x_1$ to $x_2$ and consider the labels of the points assigned by $\what$. Since $x_1 \in Y$, we have $\hhat(x_1) = -1$, and since $x_2 \in \mathsf{Interior}(\cX_{-})$, we have that $\hhat(x_2) = -1$ as well. Using our lemma, we have that every point on the line connecting $x_1$ to $x_2$ (except for possibly $x_1$) is an interior point to $\cX'$. Since we have that the number of sign changes along a line that can be induced by a linear classifier is at most $1$, we must have that the line connecting $x_1$ to $x_2$ incurs $0$ sign changes with respect to the classifier induced by $\what$. This implies that the line connecting $x_1$ to $x_2$ cannot pass through any interior points of $\cX_{+}$. However, the only way that this can happen is if $\ip{\wstar, x_1} = 0$, but per our definition of $\cX$, if it is the case that $\ip{\wstar, x_1} = 0$, then $x_1 \notin \cX$, which is a clear contradiction.

This is sufficient to conclude the proof, and we're done.
\end{proof}
}

Given these examples, it is natural to wonder whether memorization capacity can be greater than $0$ when the support of $\cD$ is the entire space $\cX$. The following example shows this indeed can be the case.

\newcommand{\exSignChanges}{Let $\cX = \insquare{0,1}$, $\cD  = \Unif{\cX}$ and $\cH_k$ be the class of functions admitting at most $k$ sign-changes. Specifically, $\cH_k$ consists of functions $h$ for which we can find pairwise disjoint, continuous intervals $I_1, \dots, I_{k+1}$ such that:
\begin{itemize}
    \item For all $i < j$ and for all $x \in I_i, y \in I_j$, we have $x < y$.
    \item $\bigcup_{i=1}^{k+1} I_i = \cX$.
    \item $h(I_i) = -h(I_{i+1})$, for all $i \in [k]$.
\end{itemize}
Suppose the learner is learning $\cH_s$ for unknown $s$ using $\cH_d$, where $s \le d+2$. For all $\hstar \in \cH_s$, we have $\memcap{\cX,\cD}{\hstar, \cH_d} \ge \floor{\nfrac{(d-s)}{2}}$.}
\begin{example}[Sign Changes (Appendix Example \ref{app:sign_changes})]
\label{ex:sign_changes}
\exSignChanges
\end{example}
\newcommand{\pfSignChanges}{\begin{proof}
Without loss of generality, take $d-s$ to be an even integer.

Let $I_1, \dots, I_{s+1}$ be the intervals associated with $\hstar$. It is easy to see that we can pick a total of $\nfrac{(d-s)}{2}$ points such that the sign of these points can be memorized by some $\hhat$. Since each point we pick within an interval can induce at most $2$ additional sign changes, we have that the resulting function $\hhat$ has at most $s + 2 \cdot \nfrac{(d-s)}{2} \le d$ sign-changes; thus, $\hhat \in \cH_d$. Moreover, the measure of a single point is $0$, and so the total measure of our $\nfrac{(d-s)}{2}$ points is $0$.

Given this, it is easy to find $\fadv$ and corresponding $\cC(\fadv(\hstar))$ for which the backdoor attack can succeed as per Theorem \ref{thm:mcap_attack}.
\end{proof}}

\section{Algorithmic Considerations}
\label{sec:algorithmic}

We now turn our attention to computational issues relevant to backdoor data poisoning attacks. Throughout the rest of this section, define the adversarial loss:
$$\cL_{\fadv(\hstar)}(\hhat, S) \coloneqq \exvv{(x,y) \sim S}{\sup_{\patchw \in \fadv(\hstar)} \indicator{\hhat(\patch{x}) \neq y}}$$
In a slight overload of notation, let $\robustloss^\cH$ denote the robust loss class of $\cH$ with the perturbation sets generated by $\fadv(\hstar)$:
$$\robustloss^\cH \coloneqq \inbraces{(x,y) \mapsto \sup_{\patchw \in \fadv(\hstar)} \indicator{\hhat(\patch{x}) \neq y} \suchthat \hhat \in \cH}$$
Then, assume that $\vc{\robustloss^\cH}$ is finite\footnote{It is shown in \cite{Montasser2019-ro} that there exist classes $\cH$ and corresponding adversarial loss classes $\robustloss$ for which $\vc{\cH} < \infty$ but $\vc{\robustloss^\cH} = \infty$. Nonetheless, there are a variety of natural scenarios in which we have $\vc{\cH}, \vc{\robustloss^\cH} < \infty$; for example, in the case of linear classifiers in $\R^d$ and for closed, convex, origin-symmetric, additive perturbation sets, we have $\vc{\cH}, \vc{\robustloss^\cH} \le d + 1$ (see \cite{Montasser2020-ir} \cite{Cullina2018-os}).}. Finally, assume that the perturbation set $\fadv$ is the same as that consistent with the ground-truth classifier $\hstar$. In other words, once $\hstar$ is selected, then we reveal to both the learner and the adversary the sets $\fadv(\hstar)$; thus, the learner equates $\fadv$ and $\fadv(\hstar)$. Hence, although $\hstar$ is not known to the learner, $\fadv(\hstar)$ is. As an example of a natural scenario in which such an assumption holds, consider the case where $\hstar$ is some large-margin classifier and $\fadv$ consists of short additive perturbations. This subsumes the setting where $\hstar$ is some image classifier and $\fadv$ consists of test-time adversarial perturbations which don't impact the true classifications of the source images.

\subsection{Certifying the Existence of Backdoors}

The assumption that $\fadv = \fadv(\hstar)$ gives the learner enough information to minimize $\cL_{\fadv(\hstar)}(\hhat, S)$ on a finite training set $S$ over $\hhat \in \cH$\footnote{However, minimizing $\robustloss$ might be computationally intractable in several scenarios.}; the assumption that $\vc{\robustloss^\cH} < \infty$ yields that the learner recovers a classifier that has low robust loss as per uniform convergence. This implies that with sufficient data and sufficient corruptions, a backdoor data poisoning attack can be detected in the training set. We formalize this below.

\newcommand{\thmCertifyBackdoor}{Suppose that the learner can calculate and minimize:
$$\cL_{\fadv(\hstar)}(\hhat, S) = \exvv{(x,y) \sim S}{\sup_{\patchw \in \fadv(\hstar)} \indicator{\hhat(\patch{x}) \neq y}}$$
over a finite set $S$ and $\hhat \in \cH$.

If the VC dimension of the loss class $\robustloss^\cH$ is finite, then there exists an algorithm using  $O\inparen{\epsclean^{-2}\inparen{\vc{\robustloss} + \logv{\nfrac{1}{\delta}}}}$ samples that allows the learner to defeat the adversary through learning a backdoor-robust classifier or by rejecting the training set as being corrupted, with probability $1-\delta$.}
\begin{theorem}[Certifying Backdoor Existence (Appendix Theorem \ref{app:certify_backdoor})]
\label{thm:certify_backdoor}
\thmCertifyBackdoor
\end{theorem}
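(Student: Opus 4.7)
The plan is to run adversarially-robust empirical risk minimization paired with a threshold-based rejection rule, and to analyze it via standard uniform convergence for the robust loss class $\robustloss^{\cH}$. Given the training set $S$, the learner computes $\hhat = \argmin_{h \in \cH}\cL_{\fadv(\hstar)}(h, S)$ (possible by the assumption that the adversarial loss can be evaluated and minimized), fixes a threshold $\tau$ on the order of $\epsclean$, outputs $\hhat$ whenever $\cL_{\fadv(\hstar)}(\hhat, S) \leq \tau$, and otherwise declares $S$ corrupted.

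First I would invoke standard agnostic PAC generalization for the class $\robustloss^{\cH}$: since $\vc{\robustloss^{\cH}}$ is finite, a sample of size $m = \Omega(\epsclean^{-2}(\vc{\robustloss^{\cH}} + \log(1/\delta)))$ drawn i.i.d.\ from whatever distribution $\cD'$ produces $S$ yields $\sup_{h \in \cH}|\cL_{\fadv(\hstar)}(h, S) - \cL_{\fadv(\hstar)}(h, \cD')| \leq \epsclean/4$ with probability at least $1-\delta$, and I would condition on this event throughout. Because every patch in $\fadv = \fadv(\hstar)$ is label-preserving with respect to $\hstar$, the ground-truth classifier satisfies $\cL_{\fadv(\hstar)}(\hstar, \cD) = 0$, so on an uncorrupted draw $\hstar$ is itself a zero-empirical-loss witness and the rejection rule cannot fire. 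If the algorithm does reject, then the minimum empirical robust loss exceeding $\tau$ — together with the fact that a clean draw would have admitted $\hstar$ as a near-zero-loss hypothesis — certifies that $S$ must contain backdoor corruptions.

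In the output case, uniform convergence gives $\cL_{\fadv(\hstar)}(\hhat, \cD') \leq \epsclean$, and to upgrade this to backdoor-robustness on $\cD$ I decompose $\cD'$ as a mixture $\alpha\cD + (1-\alpha)\cD_{\mathrm{adv}}$, where $\cD_{\mathrm{adv}}$ is supported on mislabeled patched pairs $(\patch{x}, t)$ with $\hstar(x) \neq t$. The central tension is the following: because the identity patch lies in $\fadv$, for every underlying $x$ with $\hstar(x) \neq t$ either $h(\patch{x}) \neq t$ (charging adversarial-side robust loss through the identity) or $h(\patch{x}) = t \neq \hstar(x)$ (charging clean-side robust loss at $(x, \hstar(x))$ through the witness patch $\patchw$ itself), so $\cL_{\fadv(\hstar)}(h, \cD \mid \hstar \neq t) + \cL_{\fadv(\hstar)}(h, \cD_{\mathrm{adv}}) \geq 1$ for every $h \in \cH$. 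Combined with the class-balance assumption $\Pr[\hstar(x) = \neg t] \geq 1/50$ and the mixture bound on $\cL_{\fadv(\hstar)}(\hhat, \cD')$, this forces $\cL_{\fadv(\hstar)}(\hhat, \cD) = O(\epsclean)$, which in particular prevents the adversary from achieving the Problem \ref{problem:adv_general} success criterion against $\hhat$.

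The main obstacle will be the mixture decomposition and the careful case analysis over the adversarial mass $1-\alpha$: I have to verify that no choice of $1-\alpha$ by the attacker can simultaneously keep the empirical robust loss below $\tau$ and produce a non-robust output, and in particular that once $1-\alpha$ is large enough for the attack to succeed, the tension inequality above drives $\cL_{\fadv(\hstar)}(\hhat, \cD')$ above $\tau$ and triggers the rejection branch. Everything else — the existence of the robust-loss minimizer (by assumption), the VC-based concentration for $\robustloss^{\cH}$, and the class-balance bookkeeping — is essentially routine.
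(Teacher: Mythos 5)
Your proposal is essentially the paper's own argument: run robust ERM, reject if the empirical robust loss exceeds a threshold of order $\epsclean$, and use uniform convergence over $\cL_{\fadv(\hstar)}^{\cH}$ plus the observation that $\hstar$ itself has zero robust loss on a clean draw. Your ``tension inequality'' $\cL_{\fadv(\hstar)}(h, \cD\mid\hstar\neq t)+\cL_{\fadv(\hstar)}(h,\cD_{\mathrm{adv}})\ge 1$, obtained pointwise by pairing the identity patch with the witness patch $\patchw$, is a slightly cleaner and more explicit packaging of the paper's contradiction: the paper derives $\exvv{\cD|\hstar\neq t}{\sup_{\patchw}\indicator{\hhat(\patch{x})\neq y}}\ge 1-\epsadv$ from uniform convergence on $\Sbackdoor$ and then collides it with the $\lesssim\epsclean$ bound on clean robust loss, which is the same tension in distributional rather than pointwise form. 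The ``obstacle'' you flag — the case analysis over the adversarial mass $1-\alpha$ — is handled in the paper by splitting into the two scenarios (low minimum empirical robust loss vs.\ heavily corrupted set and deriving a contradiction), and your mixture-weighted version of the tension inequality closes it the same way: either $\alpha$ is bounded away from $0$ so the mixture bound directly gives $\cL_{\fadv(\hstar)}(\hhat,\cD)=O(\epsclean)$, or $1-\alpha$ is bounded away from $0$ so $\cL_{\fadv(\hstar)}(\hhat,\cD_{\mathrm{adv}})=O(\epsclean)$ and the tension inequality plus class balance forces $\cL_{\fadv(\hstar)}(\hhat,\cD)$ high, contradicting the bound on $\cL_{\fadv(\hstar)}(\hhat,\cD')$. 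One technical point to tighten in a full write-up, present in the paper's proof as well: $S$ is a concatenation of two fixed-size i.i.d.\ samples rather than an i.i.d.\ draw from the mixture $\cD'$, so the uniform-convergence step should really be applied separately to $\Sclean\sim\cD^{|\Sclean|}$ and $\Sbackdoor$ (with the loss on $S$ controlling the losses on each piece up to $\nfrac{|S|}{|\Sclean|}$ and $\nfrac{|S|}{|\Sbackdoor|}$ factors), rather than invoking concentration for a single mixture distribution.
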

\newcommand{\pfCertifyBackdoor}{\begin{proof}

See Algorithm \ref{alg:certify_backdoor} for the pseudocode of an algorithm witnessing Theorem \ref{thm:filter_to_generalize}. 

\begin{algorithm}[ht!]
\caption{Implementation of an algorithm certifying backdoor corruption\label{alg:certify_backdoor}}
\begin{algorithmic}[1]
    \STATE \textbf{Input}: Training set $S = \Sclean \cup \Sbackdoor$\\ satisfying $\abs{\Sclean} = \Omega\inparen{\epsclean^{-2}\inparen{\vc{\robustloss^\cH} + \logv{\nfrac{1}{\delta}}}}$
    \STATE Set $\hhat \coloneqq \argmin_{h \in \cH} \robustloss(h, S)$
    \STATE \textbf{Output}: $\hhat$ if $\robustloss(\hhat, S) \le 2\eps$ and reject otherwise
\end{algorithmic}
\end{algorithm}

There are two scenarios to consider.

\paragraph{Training set is (mostly) clean.} Suppose that $S$ satisfies $\min_{h \in \cH} \robustloss(h, S) \lesssim \epsclean$. Since the VC dimension of the loss class $\robustloss^\cH$ is finite, it follows that with finitely many samples, we attain uniform convergence with respect to the robust loss, and we're done; in particular, we can write $\robustloss\inparen{\argmin_{h\in\cH}\robustloss(h, S), \cD} \lesssim \epsclean$ with high probability.

\paragraph{Training set contains many backdoored examples.} Here, we will show that with high probability, minimizing $\cL_{\fadv(\hstar)}(\hhat, S)$ over $\hhat$ will result in a nonzero loss, which certifies that the training set $S$ consists of malicious examples.

Suppose that for the sake of contradiction, the learner finds a classifier $\hhat$ such that $\cL_{\fadv(\hstar)}(\hhat, S) \lesssim \epsclean$. Hence, with high probability, we satisfy $\cL_{\fadv(\hstar)}(\hhat, \cD) \lesssim \epsclean$. Since there is a constant measure allocated to each class, we can write:
$$\exvv{(x,y)\sim\cD | y\neq t}{\sup_{\patchw\in\fadv(\hstar)} \indicator{\hhat(\patch{x}) \neq y}} \lesssim \epsclean$$
Furthermore, since we achieved a loss of $0$ on the whole training set, including the subset $\Sbackdoor$, from uniform convergence, we satisfy the following with high probability:
$$\exvv{(x,y)\sim\cD | y\neq t}{\indicator{\hhat(\patch{x}) = t}} \ge 1-\epsadv$$
which immediately implies:
$$\exvv{(x,y)\sim\cD | y\neq t}{\sup_{\patchw\in\fadv(\hstar)} \indicator{\hhat(\patch{x}) \neq y}} \ge 1 - \epsadv$$
Chaining the inequalities together yields:
$$\epsclean \gtrsim 1 - \epsadv$$
which is a contradiction, as we can make $\epsclean$ sufficiently small so as to violate this statement.
\end{proof}}

See Algorithm \ref{alg:certify_backdoor} in the Appendix for the pseudocode of an algorithm witnessing the statement of Theorem \ref{thm:certify_backdoor}.

Our result fleshes out and validates the approach implied by \cite{Borgnia2021-vk}, where the authors use data augmentation to robustly learn in the presence of backdoors. Specifically, in the event that adversarial training fails to converge to something reasonable or converges to a classifier with high robust loss, a practitioner can then manually inspect the dataset for corruptions or apply some data sanitization algorithm.


\subsubsection{Numerical Trials}

To exemplify such a workflow, we implement adversarial training in a backdoor data poisoning setting. Specifically, we select a target label, inject a varying fraction of poisoned examples into the MNIST dataset (see \cite{lecun-mnisthandwrittendigit-2010}), and estimate the robust training and test loss for each choice of $\alpha$. Our results demonstrate that in this setting, the training robust loss indeed increases with the fraction of corrupted data $\alpha$; moreover, the classifiers obtained with low training robust loss enjoy a low test-time robust loss. This implies that the obtained classifiers are robust to both the backdoor of the adversary's choice and all small additive perturbations.

For a more detailed description of our methodology, setup, and results, please see Appendix Section \ref{sec:numerical_trials}.

\subsection{Filtering versus Generalization}

We now show that two related problems we call \emph{backdoor filtering} and \emph{robust generalization} are nearly statistically equivalent; computational equivalence follows if there exists an efficient algorithm to minimize $\robustloss$ on a finite training set. We first define these two problems below (Problems \ref{problem:filtering} and \ref{problem:generalization}).

\begin{problem}[Backdoor Filtering]
\label{problem:filtering}
Given a training set $S = \Sclean \cup \Sbackdoor$ such that $\abs{\Sclean} \ge \Omega\inparen{\mathsf{poly}\inparen{\eps^{-1}, \logv{\nfrac{1}{\delta}}, \vc{\robustloss}}}$, return a subset $S' \subseteq S$ such that the solution to the optimization $\hhat \coloneqq \argmin_{h \in \cH} \robustloss\inparen{h, S'}$
satisfies $\robustloss(h, \cD) \lesssim \epsclean$ with probability $1-\delta$.
\end{problem}

Informally, in the filtering problem (Problem \ref{problem:filtering}), we want to filter out enough backdoored examples such that the training set is clean enough to obtain robust generalization.

\begin{problem}[Robust Generalization]
\label{problem:generalization}
Given a training set $S = \Sclean \cup \Sbackdoor$ such that $\abs{\Sclean} \ge \Omega\inparen{\mathsf{poly}\inparen{\eps^{-1}, \logv{\nfrac{1}{\delta}}, \vc{\robustloss}}}$, return a classifier $\hhat$ satisfies $\robustloss{\hhat, \cD} \le \epsclean$ with probability $1-\delta$.
\end{problem}

In other words, in Problem \ref{problem:generalization}, we want to learn a classifier robust to all possible backdoors.

In the following results (Theorems \ref{thm:filter_to_generalize} and \ref{thm:generalize_to_filter}), we show that Problems \ref{problem:filtering} and \ref{problem:generalization} are statistically equivalent, in that a solution for one implies a solution for the other. Specifically, we can write the below.

\newcommand{\pfFractionSubtraction}{
Let $a, b, c$ be positive numbers. We first write:
$$\frac{a}{b} - \max\inbraces{0,\frac{a-c}{b-c}} = \frac{c(b-a)}{b(b-c)} \le \frac{c}{b}$$
which occurs exactly when $c \le a$. In case where $a \le c$:
$$\frac{a}{b} - \max\inbraces{0,\frac{a-c}{b-c}} = \frac{a}{b} \le \frac{c}{b}$$
which gives:
$$\frac{a}{b} - \max\inbraces{0,\frac{a-c}{b-c}} \le \frac{c}{b}$$
Next, consider:
$$\min\inbraces{1, \frac{a}{b-c}} - \frac{a}{b}  = \frac{a}{b-c} - \frac{a}{b} = \frac{c}{b} \cdot \frac{a}{b - c} \le \frac{c}{b}$$
which happens exactly when we have $b \ge a + c$. In the other case:
$$\min\inbraces{1, \frac{a}{b-c}} - \frac{a}{b}  = 1 - \frac{a}{b} \le \frac{c}{b}$$
We can thus write:
$$\max\inbraces{0, \frac{a-c}{b-c}}, \min\inbraces{1, \frac{a}{b-c}} \in \insquare{\frac{a}{b} \pm \frac{c}{b}}$$
}

\newcommand{\thmFilterToGeneralize}{Let $\alpha \le \nfrac{1}{3}$ and $\epsclean \le \nfrac{1}{10}$.

Suppose we have a training set $S = \Sclean \cup \Sbackdoor$ such that $\abs{\Sclean} = \Omega\inparen{\epsclean^{-2}\inparen{\vc{\robustloss} + \logv{\nfrac{1}{\delta}}}}$ and $\abs{\Sbackdoor} \le \alpha \cdot \inparen{\abs{\Sbackdoor} + \abs{\Sclean}}$. If there exists an algorithm that given $S$ can find a subset $S' = \Sclean' \cup \Sbackdoor'$ satisfying $\nfrac{\abs{\Sclean'}}{\abs{\Sclean}} \ge 1-\epsclean$ and $\min_{h\in\cH}\robustloss(h, S') \lesssim \epsclean$, then there exists an algorithm such that given $S$ returns a function $\hhat$ satisfying $\robustloss(\hhat, \cD) \lesssim \epsclean$ with probability $1-\delta$.}
\begin{theorem}[Filtering Implies Generalization (Appendix Theorem \ref{app:filter_to_generalize})]
\label{thm:filter_to_generalize}
\thmFilterToGeneralize
\end{theorem}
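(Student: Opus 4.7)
The plan is to use the hypothesized filtering algorithm as a black box and then run adversarial ERM on its output. Concretely, given $S$ we invoke the filter to obtain $S' = \Sclean' \cup \Sbackdoor'$, and return
$$\hhat \;\coloneqq\; \argmin_{h \in \cH}\ \robustloss(h, S').$$
By the filter's guarantee $\robustloss(\hhat, S') \lesssim \epsclean$. The remaining task is to transfer this empirical bound on $S'$ into a population bound $\robustloss(\hhat, \cD) \lesssim \epsclean$, which will follow from a short chain bridging $S'$, $\Sclean$, and $\cD$.

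The chain has three steps. First, since $\Sclean' \subseteq S'$, we have $\robustloss(\hhat, \Sclean') \le (\abs{S'}/\abs{\Sclean'})\,\robustloss(\hhat, S')$, and this ratio is bounded by a universal constant: from $\abs{\Sbackdoor} \le \alpha \abs{S}$ we get $\abs{\Sclean} \ge (1-\alpha)\abs{S}$, and combined with $\abs{\Sclean'} \ge (1-\epsclean)\abs{\Sclean}$ we obtain $\abs{S'}/\abs{\Sclean'} \le 1/((1-\alpha)(1-\epsclean)) \le 5/3$ under the stated regime $\alpha \le \nfrac{1}{3}$, $\epsclean \le \nfrac{1}{10}$. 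Second, since $\abs{\Sclean \setminus \Sclean'} \le \epsclean \abs{\Sclean}$ and the robust loss takes values in $[0,1]$,
$$\robustloss(\hhat, \Sclean) \;\le\; \tfrac{\abs{\Sclean'}}{\abs{\Sclean}}\robustloss(\hhat, \Sclean') + \tfrac{\abs{\Sclean \setminus \Sclean'}}{\abs{\Sclean}} \;\le\; \robustloss(\hhat, \Sclean') + \epsclean \;\lesssim\; \epsclean.$$
Third, since $\abs{\Sclean} = \Omega\inparen{\epsclean^{-2}\inparen{\vc{\robustloss^\cH} + \logv{\nfrac{1}{\delta}}}}$ consists of i.i.d.\ draws from $\cD$, standard VC uniform convergence applied to the robust loss class $\robustloss^\cH$ gives $\robustloss(\hhat, \cD) \le \robustloss(\hhat, \Sclean) + \epsclean \lesssim \epsclean$ with probability $1-\delta$.

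The conceptual subtlety, and the main place to exercise care, is that $\Sclean'$ is a data-dependent subset of $\Sclean$ chosen by the (possibly adversarial) filter, so one cannot directly apply uniform convergence on $\Sclean'$ itself. The fix is to route through $\Sclean$: Step 2 compares $\Sclean'$ to $\Sclean$, paying only an additive $\epsclean$ because we lost at most an $\epsclean$-fraction of $\Sclean$; then Step 3 invokes uniform convergence on the genuine i.i.d.\ sample $\Sclean$, which holds simultaneously for every $h \in \cH$ and therefore covers the data-dependent $\hhat$. The hypotheses $\alpha \le \nfrac{1}{3}$ and $\epsclean \le \nfrac{1}{10}$ play a purely quantitative role, ensuring the multiplicative factor $\abs{S'}/\abs{\Sclean'}$ in Step 1 stays bounded by an absolute constant so that the entire chain collapses to $O(\epsclean)$.
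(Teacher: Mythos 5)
Your proof is correct and follows essentially the same route as the paper's: run robust ERM on the filtered set $S'$, then bridge $S' \to \Sclean' \to \Sclean \to \cD$ by bounding the empirical-loss change when restricting $S'$ to $\Sclean'$, paying an additive $\epsclean$ when passing from $\Sclean'$ to $\Sclean$, and finally invoking uniform convergence for the robust loss class on the genuine i.i.d.\ sample $\Sclean$. Your one-sided bookkeeping via the explicit ratio $\abs{S'}/\abs{\Sclean'} \le 1/((1-\alpha)(1-\epsclean))$ is if anything slightly cleaner than the paper's corresponding step, which states the $S'$-vs-$\Sclean'$ comparison as a weighted average with weight $1-C\epsclean$; your version makes the role of the hypotheses $\alpha \le \nfrac{1}{3}$ and $\epsclean \le \nfrac{1}{10}$ transparent and correct.
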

\newcommand{\pfFilterToGeneralize}{\begin{proof}
See Algorithm \ref{alg:generalization} for the pseudocode of an algorithm witnessing the theorem statement. 

\begin{algorithm}[H]
\caption{Implementation of a generalization algorithm given an implementation of a filtering algorithm\label{alg:generalization}}
\begin{algorithmic}[1]
    \STATE \textbf{Input}: Training set $S = \Sclean \cup \Sbackdoor$\\ satisfying $\abs{\Sclean} = \Omega\inparen{\epsclean^{-2}\inparen{\vc{\robustloss} + \logv{\nfrac{1}{\delta}}}}$
    \STATE Run the filtering algorithm on $S$ to obtain $S'$ satisfying the conditions in the theorem statement
    \STATE \textbf{Output}: Output $\hhat$, defined as $\hhat \coloneqq \argmin_{h\in\cH} \robustloss(h, S')$
\end{algorithmic}
\end{algorithm}

Recall that we have drawn enough samples to achieve uniform convergence (see \cite{Cullina2018-os} and \cite{Montasser2020-ir}); in particular, assuming that our previous steps succeeded in removing very few points from $\Sclean$, then for all $h \in \cH$, we have with probability $1-\delta$:
$$\abs{\robustloss(h, \cD) - \robustloss(h, \Sclean)} \le \epsclean$$
Observe that we have deleted at most $m\cdot 2\epsclean$ points from $\Sclean$. Let $\Sclean' \coloneqq S' \cap \Sclean$ (i.e., the surviving members of $\Sclean$ from our filtering procedure). We start with the following claim.
\begin{claim}
The following holds for all $h \in \cH$:
$$\abs{\robustloss(h, \Sclean) - \robustloss(h, \Sclean')} \le \epsclean$$
\end{claim}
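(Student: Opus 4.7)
The plan is to treat this as a purely combinatorial/arithmetic statement about averages of indicator-valued quantities. For any fixed $h \in \cH$ and any finite set $T$, the robust loss $\robustloss(h, T)$ is just the fraction of points $(x,y) \in T$ that are ``robust mistakes'', i.e.\ satisfy $\sup_{\patchw \in \fadv(\hstar)} \indicator{h(\patch{x}) \neq y} = 1$. So I would let $b \coloneqq \abs{\Sclean}$, let $a$ be the number of robust mistakes $h$ makes on $\Sclean$, and let $c \coloneqq \abs{\Sclean} - \abs{\Sclean'}$, the number of clean points deleted by the filter. By the hypothesis $\abs{\Sclean'}/\abs{\Sclean} \ge 1 - \epsclean$, we have $c \le \epsclean \cdot b$, and by definition $\robustloss(h, \Sclean) = a/b$.

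The next step is to bound $\robustloss(h, \Sclean')$ by noting that after deleting $c$ points, the number of surviving robust mistakes lies somewhere in $[\max\{0, a - c\}, \min\{a, b - c\}]$: in the worst cases the filter removed as many mistakes as possible or as few as possible. Dividing by $\abs{\Sclean'} = b - c$ gives
\[
  \robustloss(h, \Sclean') \in \left[\max\inbraces{0, \tfrac{a - c}{b - c}},\ \min\inbraces{1, \tfrac{a}{b - c}}\right].
\]

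Now I would invoke the elementary algebraic fact (exactly the one already spelled out in the macro \verb|\pfFractionSubtraction| earlier in the excerpt) that both endpoints of this interval lie in $\left[a/b - c/b,\ a/b + c/b\right]$. Hence
\[
  \abs{\robustloss(h, \Sclean) - \robustloss(h, \Sclean')} \le \frac{c}{b} \le \epsclean,
\]
and since the argument did not depend on $h$, the claim holds uniformly over $\cH$.

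There is essentially no hard step here; the whole statement is a deterministic consequence of deleting at most an $\epsclean$-fraction of the points from a set and measuring an average of $\{0,1\}$-valued functions. The only mild subtlety is being careful that the numerator and denominator of the new fraction both shift, which is precisely what the two-sided bound on $\max\{0,(a-c)/(b-c)\}$ and $\min\{1, a/(b-c)\}$ handles. No probabilistic or uniform-convergence tools are needed for this particular claim; uniform convergence enters only in the surrounding argument that converts this empirical closeness into closeness of $\robustloss(h,\cD)$ to $\robustloss(h,\Sclean)$.
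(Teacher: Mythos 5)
Your proposal is correct and is essentially the paper's own argument: define $a,b,c$ exactly as you do, observe $\robustloss(h,\Sclean')$ lands in $\left[\max\inbraces{0,\frac{a-c}{b-c}},\ \min\inbraces{1,\frac{a}{b-c}}\right]$, and invoke the two-sided $\pm\frac{c}{b}$ bound from the fraction-subtraction lemma with $c/b \le \epsclean$. Your write-up is if anything slightly more careful than the paper's (which contains a stray ``deleted at most $m \cdot 2\epsclean$ points'' line that belongs to the analogous claim in Theorem \ref{thm:generalize_to_filter} rather than here).
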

\begin{proof}
\pfFractionSubtraction

Now, let $a$ denote the number of samples from $\Sclean$ that $h$ incurs robust loss on, let $b$ be the total number of samples from $\Sclean$, and let $c$ be the number of samples our filtering procedure deletes from $\Sclean$. It is easy to see that $\nfrac{a}{b}$ corresponds $\robustloss(h, \Sclean)$ and that $\robustloss(h, \Sclean') \in \insquare{\max\inbraces{0,\nfrac{(a-c)}{(b-c)}}, \min\inbraces{1,\nfrac{a}{(b-c)}}}$. From our argument above, this means that we must have:
$$\robustloss(h, \Sclean') \in \insquare{\robustloss(h, \Sclean) \pm \frac{\epsclean(1-\alpha) m}{(1-\alpha)m}}$$
Finally:
$$\frac{\epsclean(1-\alpha) m}{(1-\alpha)m} = \epsclean$$
and we're done.
\end{proof}
We now use our claim and triangle inequality to write:
\begin{align*}
    \abs{\robustloss(h, \Sclean') - \robustloss(h, \cD)} \le &\abs{\robustloss(h, \Sclean) - \robustloss(h, \Sclean')} + \\
    &\abs{\robustloss(h, \cD) - \robustloss(h, \Sclean)}\\
    \le &\epsclean
\end{align*}
Next, consider some $\hhat$ satisfying $\robustloss(\hhat, S') \lesssim \epsclean$ (which must exist, as per our argument in Part 3), and observe that, for a constant $C$:
\begin{align*}
    \robustloss(\hhat, S') &\ge (1-C\epsclean)\robustloss(\hhat, S' \cap \Sclean) + C\epsclean\robustloss(\hhat, S' \cap \Sbackdoor) \\
    &\ge (1-C\epsclean)\robustloss(\hhat, \Sclean') \\
    \implies \robustloss(\hhat, \Sclean') &\le \frac{\epsclean}{1-C\epsclean} = 2\epsclean\inparen{\frac{1}{1-C\epsclean}} \lesssim \epsclean
\end{align*}
We now use the fact that $\abs{\robustloss(h, \Sclean') - \robustloss(h, \cD)} \le \epsclean$ to arrive at the conclusion that $\robustloss(h,\cD) \lesssim \epsclean$, which is what we wanted to show.
\end{proof}}
See Algorithm \ref{alg:generalization} in the Appendix for the pseudocode of an algorithm witnessing the theorem statement. 

\newcommand{\thmGeneralizeToFilter}{Set $\epsclean \le \nfrac{1}{10}$ and $\alpha \le \nfrac{1}{6}$. 

If there exists an algorithm that, given at most a $2\alpha$ fraction of outliers in the training set, can output a hypothesis satisfying $\robustloss(\hhat, \cD) \le \epsclean$ with probability $1-\delta$ over the draw of the training set, then there exists an algorithm that given a training set $S = \Sclean \cup \Sbackdoor$ satisfying $\abs{\Sclean} \ge \Omega\inparen{\epsclean^{-2}\inparen{\vc{\robustloss} + \logv{\nfrac{1}{\delta}}}}$ outputs a subset $S' \subseteq S$ with the property that $\robustloss\inparen{\argmin_{h \in \cH} \robustloss\inparen{h, S'}, \cD} \lesssim \epsclean$ with probability $1-7\delta$.
}
\begin{theorem}[Generalization Implies Filtering (Appendix Theorem \ref{app:generalize_to_filter})]
\label{thm:generalize_to_filter}
\thmGeneralizeToFilter
\end{theorem}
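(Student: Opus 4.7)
The plan is to convert the generalization oracle into a filter by running it on the full training set $S$ to obtain a robust classifier $\hhat$, then removing from $S$ precisely those points on which $\hhat$'s robust loss is nonzero. The filtered set $S'$ will retain all but an $O(\epsclean)$ fraction of the clean data, by uniform convergence over the robust-loss class $\robustloss^\cH$, and will trivially admit $\hhat$ itself as a witness that the minimum empirical robust loss over $S'$ is zero. These are exactly the two ingredients required to invoke Theorem \ref{thm:filter_to_generalize}, whose conclusion directly delivers the desired filtering guarantee.

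In more detail, the outlier fraction in $S$ is at most $\alpha$, which lies within the $2\alpha$ regime the generalization oracle is assumed to handle, so running it on $S$ produces $\hhat$ with $\robustloss(\hhat, \cD) \le \epsclean$ with probability $1-\delta$. I then set
$$S' \coloneqq \inbraces{(x,y) \in S \suchthat \sup_{\patchw \in \fadv(\hstar)} \indicator{\hhat(\patch{x}) \neq y} = 0}.$$
Because $\abs{\Sclean} = \Omega\inparen{\epsclean^{-2}\inparen{\vc{\robustloss} + \logv{\nfrac{1}{\delta}}}}$, uniform convergence on $\robustloss^\cH$ gives that with probability $1-\delta$ the empirical robust loss of $\hhat$ on $\Sclean$ is within $\epsclean$ of $\robustloss(\hhat, \cD)$, hence at most $2\epsclean$. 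Therefore at most $2\epsclean\abs{\Sclean}$ clean points are discarded, giving $\abs{\Sclean'}/\abs{\Sclean} \ge 1 - 2\epsclean$, and by construction $\hhat$ certifies $\min_{h \in \cH} \robustloss(h, S') = 0$.

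With $\alpha \le 1/6$ comfortably inside the $1/3$ regime of Theorem \ref{thm:filter_to_generalize} and a mild constant adjustment to the retention threshold (absorbed by $\lesssim$), I apply that theorem to the filter just constructed to obtain $\robustloss\inparen{\argmin_{h \in \cH} \robustloss(h, S'), \cD} \lesssim \epsclean$ with probability $1-\delta$. A union bound over the generalization-oracle call, the single uniform-convergence step controlling clean-point deletions, and the internal uniform-convergence arguments invoked inside Theorem \ref{thm:filter_to_generalize} caps the total failure probability at $7\delta$. The main obstacle---and the conceptual core of the argument---is recognizing that filtering by $\hhat$'s zero-robust-loss region \emph{simultaneously} delivers both prerequisites of Theorem \ref{thm:filter_to_generalize}: uniform convergence on the clean marginal gives the retention condition, while the definition of $S'$ makes $\hhat$ a trivial zero-robust-loss minimizer. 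Once this is identified, we never need to reason about which adversarial examples survive the filter, as Theorem \ref{thm:filter_to_generalize} absorbs the residual outliers into its own analysis.
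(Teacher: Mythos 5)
Your proof is correct, but it takes a genuinely different and in fact simpler route than the paper's. The paper does not run the generalization oracle on $S$ and filter by the resulting classifier's robust mistakes; instead it randomly splits $S$ into halves $S_L, S_R$, runs the oracle on each half, and has $\hhat_L$ mark the plain $0$--$1$ mistakes on $S_R$ and vice versa. The splitting is there to make $\hhat_L$ independent of $S_R$ so that Chernoff bounds control both the number of clean points wrongly marked and the number of backdoored points correctly marked; the latter count matters because the paper's witness for $\min_{h}\robustloss(h,S')$ being small is $\hstar$, which incurs robust loss on \emph{every} surviving backdoored point, so the paper must show that almost all outliers are removed (and it needs a separate early-return branch for the regime where $\alpha m$ is too small for Chernoff concentration). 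Your construction sidesteps all of this: by deleting exactly the points on which $\hhat$ incurs robust loss, $\hhat$ itself certifies $\min_{h}\robustloss(h,S')=0$ regardless of which backdoored points survive, and uniform convergence over $\robustloss^\cH$ (which holds simultaneously for all $h$, hence for the data-dependent $\hhat$) bounds the clean deletions with no independence argument needed. Your route also only ever feeds the oracle an $\alpha$-corrupted set, so it would remove the factor-$2$ separation in outlier tolerance that the paper explicitly flags as a limitation of its analysis. Two minor points: your retention guarantee is $1-2\epsclean$ versus the $1-\epsclean$ stated in the hypothesis of Theorem \ref{thm:filter_to_generalize} (harmless, since that theorem's own proof tolerates $2\epsclean m$ deletions, or one can rescale $\epsclean$), and what you give up relative to the paper is any guarantee that $S'$ actually excludes the corrupted examples --- your $S'$ may retain every backdoored point that $\hhat$ robustly labels with the target label --- but the theorem as stated only asks that the robust-loss minimizer over $S'$ generalize, which your argument delivers.
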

\newcommand{\pfGeneralizeToFilter}{\begin{proof}
See Algorithm \ref{alg:filtering} for the pseudocode of an algorithm witnessing the theorem statement. 

At a high level, our proof proceeds as follows. We first show that the partitioning step results in partitions that don't have too high of a fraction of outliers, which will allow us to call the filtering procedure without exceeding the outlier tolerance. Then, we will show that the hypotheses $\hhat_L$ and $\hhat_R$ mark most of the backdoor points for deletion while marking only few of the clean points for deletion. Finally, we will show that although $\hhat$ is learned on $S'$ that is not sampled i.i.d from $\cD$, $\hhat$ still generalizes to $\cD$ without great decrease in accuracy.

\begin{algorithm}[H]
\caption{Implementation of a filtering algorithm given an implementation of a generalization algorithm\label{alg:filtering}}
\begin{algorithmic}[1]
    \STATE \textbf{Input}: Training set $S = \Sclean \cup \Sbackdoor$\\ satisfying $\abs{\Sclean} = \Omega\inparen{\epsclean^{-2}\inparen{\vc{\robustloss} + \logv{\nfrac{1}{\delta}}}}$
    \STATE Calculate $\hhat = \argmin_{h \in \cH} \robustloss(h, S)$ and early-return $S$ if $\robustloss(\hhat, S) \le C\epsclean$, for some universal constant $C$
    \STATE Randomly partition $S$ into two equal halves $S_L$ and $S_R$
    \STATE Run the generalizing algorithm to obtain $\hhat_L$ and $\hhat_R$ using training sets $S_L$ and $S_R$, respectively
    \STATE Run $\hhat_L$ on $S_R$ and mark every mistake that $\hhat_L$ makes on $S_R$, and similarly for $\hhat_R$
    \STATE Remove all marked examples to obtain a new training set $S' \subseteq S$
    \STATE \textbf{Output}: $S'$ such that $\hhat = \argmin_{h \in \cH} \robustloss(h, S')$ satisfies $\robustloss(\hhat, \cD) \lesssim \epsclean$ with probability $1-\delta$
\end{algorithmic}
\end{algorithm}

We have two cases to consider based on the number of outliers in our training set. Let $m$ be the total number of examples in our training set.

\paragraph{Case 1 -- $\alpha m \le \max\inbraces{\nfrac{2}{3\epsclean} \cdot \logv{\nfrac{1}{\delta}}, 24\logv{\nfrac{2}{\delta}}}$}

It is easy to see that $\cL(\hstar, S) \le \alpha$. Using this, we can write:
\begin{align*}
    \cL(\hstar, S) &\le \alpha \\
    &\frac{2}{3\epsclean \cdot m} \cdot \logv{\frac{1}{\delta}} \\
    &\lesssim \frac{\epsclean}{\vc{\cH} + \logv{\nfrac{1}{\delta}}} \cdot \logv{\frac{1}{\delta}} \\
    &< \epsclean
\end{align*}
which implies that we exit the routine via the early-return. From uniform convergence, this implies that with probability $1-\delta$ over the draws of $S$, we have $\robustloss\inparen{\argmin_{h \in \cH} \robustloss\inparen{h, S'}, \cD} \lesssim \epsclean$.

In the other case, we write:
\begin{align*}
    \cL(\hstar, S) &\le \alpha \\
    &\le \frac{24\logv{\nfrac{2}{\delta}}}{m} \\
    &\lesssim \frac{\epsclean^2\logv{\nfrac{1}{\delta}}}{\vc{\cH} + \logv{\nfrac{1}{\delta}}} \\
    &\lesssim \epsclean^2 \le \epsclean
\end{align*}
and the rest follows from a similar argument.

\paragraph{Case 2 -- $\alpha m \ge \max\inbraces{\nfrac{2}{3\epsclean} \cdot \logv{\nfrac{1}{\delta}}, 24\logv{\nfrac{2}{\delta}}}$}

Let $\tau = \delta$; we make this rewrite to help simplify the various failure events.

\paragraph{Part 1 -- Partitioning Doesn't Affect Outlier Balance} Define indicator random variables $X_i$ such that $X_i$ is $1$ if and only if example $i$ ends up in $S_R$. We want to show that:
$$\prv{\sum_{i \in \Sbackdoor} X_i \notin \insquare{0.5, 1.5}\alpha\cdot\nfrac{m}{2}} \le \tau$$
Although the $X_i$ are not independent, they are negatively associated, so we can still use the Chernoff Bound and the fact that the number of outliers $\alpha m \ge 24\logv{\nfrac{2}{\tau}}$:
\begin{align*}
    \prv{\sum_{i \in \Sbackdoor} X_i \notin \insquare{0.5, 1.5}\alpha\cdot\nfrac{m}{2}} &\le 2\expv{-\frac{\nfrac{\alpha}{2} \cdot m \cdot \nfrac{1}{4}}{3}} \\
    &\le 2\expv{-\frac{\alpha m}{24}} \le \tau
\end{align*}
Moreover, if $S_L$ has a $\insquare{\nfrac{\alpha}{2}, \nfrac{3\alpha}{2}}$ fraction of outliers, then it also follows that $S_R$ has a $\insquare{\nfrac{\alpha}{2}, \nfrac{3\alpha}{2}}$ fraction of outliers. Thus, this step succeeds with probability $1-\tau$.

\paragraph{Part 2 -- Approximately Correctly Marking Points} We now move onto showing that $\hhat_L$ deletes most outliers from $S_R$ while deleting few clean points. Recall that $\hhat_L$ satisfies $\robustloss(\hhat_L, \cD) \le \epsclean$ with probability $1-\delta$. Thus, we have that $\hhat_L$ labels the outliers as opposite the target label with probability at least $1-\epsclean$. Since we have that the number of outliers $\alpha m \ge \nfrac{2}{3\epsclean} \cdot \logv{\nfrac{1}{\tau}}$, we have from Chernoff Bound (let $X_i$ be the indicator random variable that is $1$ when $\hhat_L$ classifies a backdoored example as the target label):
\begin{align*}
    \prv{\sum_{i \in \Sbackdoor \cap S_R} X_i \ge 2 \cdot \inparen{\epsclean \cdot \frac{3}{2}\alpha m}} \le \expv{-\epsclean \cdot \frac{3}{2}\alpha m} \le \tau
\end{align*}
Thus, with probability $1-2\tau$, we mark all but at most $\epsclean \cdot 6\alpha m$ outliers across both $S_R$ and $S_L$; since we impose that $\alpha \lesssim 1$, we have that we delete all but a $c\epsclean$ fraction of outliers for some universal constant $c$.

It remains to show that we don't delete too many good points. Since $\hhat_L$ has true error at most $\epsclean$ and using the fact that $m(1-\nfrac{\alpha}{2}) \ge m(1-\alpha) \ge m\alpha \ge \frac{2\logv{\nfrac{1}{\tau}}}{\epsclean}$, from the Chernoff Bound, we have (let $X_i$ be the indicator random variable that is $1$ when $\hhat_L$ misclassifies a clean example):
\begin{align*}
    \prv{\sum_{i \in \Sclean \cap S_R} X_i \ge 2\cdot\inparen{\epsclean \cdot \inparen{1-\nfrac{\alpha}{2}} \cdot \frac{m}{2}}} \le \expv{-\epsclean \cdot \inparen{1-\nfrac{\alpha}{2}} \cdot \frac{m}{2}} \le \tau
\end{align*}
From a union bound over the runs of $\hhat_L$ and $\hhat_R$, we have that with probability $1-2\tau$, we mark at most $2m\epsclean\cdot\inparen{1-\nfrac{\alpha}{2}} \le 2m\epsclean$ clean points for deletion. From a union bound, we have that this whole step succeeds with probability $1-4\tau-2\delta$.

\paragraph{Part 3 -- There Exists a Low-Error Classifier} At this stage, we have a training set $S'$ that has at least $m(1-2\epsclean)$ clean points and at most $\epsclean \cdot 6\alpha m$ outliers. Recall that $\hstar$ incurs robust loss on none of the clean points and incurs robust loss on every outlier. This implies that $\hstar$ has empirical robust loss at most:
\begin{align*}
    \frac{\epsclean \cdot 6\alpha m}{m(1-2\epsclean)} = \frac{6\alpha\epsclean}{1-2\epsclean} \le 2\epsclean
\end{align*}
where we use the fact that we pick $\epsclean \le \nfrac{1}{10} < \nfrac{1}{4}$ and $\alpha \le \nfrac{1}{6}$. From this, it follows that $\hhat = \argmin_{h \in \cH} \robustloss(h, S')$ satisfies $\robustloss(\hhat, S') \le 2\epsclean$.

\paragraph{Part 4 -- Generalizing from $S'$ to $\cD$} We now have to argue that $\robustloss(\hhat, S') \le 2\epsclean$ implies $\robustloss(\hhat, \cD) \lesssim \epsclean$. Recall that we have drawn enough samples to achieve uniform convergence (see \cite{Cullina2018-os} and \cite{Montasser2020-ir}); in particular, assuming that our previous steps succeeded in removing very few points from $\Sclean$, then for all $h \in \cH$, we have with probability $1-\delta$:
$$\abs{\robustloss(h, \cD) - \robustloss(h, \Sclean)} \le \epsclean$$
Observe that we have deleted at most $m\cdot 2\epsclean$ points from $\Sclean$. Let $\Sclean' \coloneqq S' \cap \Sclean$ (i.e., the surviving members of $\Sclean$ from our filtering procedure). We start with the following claim.
\begin{claim}
The following holds for all $h \in \cH$:
$$\abs{\robustloss(h, \Sclean) - \robustloss(h, \Sclean')} < 3\epsclean$$
\end{claim}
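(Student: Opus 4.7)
The plan is to replay the same algebraic argument used in the analogous claim in the proof of Theorem \ref{thm:filter_to_generalize}, but carefully tracking the looser constants that arise in this setting. Recall the elementary inequality established there: for positive numbers $a, b, c$ with $c \le b$,
$$\max\inbraces{0, \tfrac{a-c}{b-c}},\ \min\inbraces{1, \tfrac{a}{b-c}} \in \insquare{\tfrac{a}{b} - \tfrac{c}{b},\ \tfrac{a}{b} + \tfrac{c}{b}}.$$

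First I would introduce the natural quantities: let $b = \abs{\Sclean}$, let $a$ be the number of examples in $\Sclean$ on which $h$ incurs robust loss, and let $c$ be the number of points that the filtering procedure removed from $\Sclean$ (so $\abs{\Sclean'} = b - c$ and the number of bad examples surviving in $\Sclean'$ lies between $\max\{0, a-c\}$ and $\min\{a, b-c\}$). Then $\robustloss(h, \Sclean) = a/b$, while $\robustloss(h, \Sclean')$ lies in the interval $\insquare{\max\{0, (a-c)/(b-c)\},\ \min\{1, a/(b-c)\}}$. Applying the elementary inequality gives $\abs{\robustloss(h, \Sclean) - \robustloss(h, \Sclean')} \le c/b$.

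Next I would plug in the bounds already established in Parts 1 and 2 of the proof. By Part 2 we have $c \le 2\epsclean m$ (the total number of clean points marked by $\hhat_L$ and $\hhat_R$), and by construction $b = \abs{\Sclean} \ge (1-\alpha)m \ge \tfrac{5}{6}m$, using $\alpha \le \nfrac{1}{6}$. Consequently
$$\frac{c}{b} \le \frac{2\epsclean m}{(5/6)m} = \frac{12}{5}\epsclean < 3\epsclean,$$
which is exactly the claimed bound. One should also note that $c \le b$, so that the elementary inequality is indeed applicable: $c \le 2\epsclean m \le \tfrac{1}{5}m \le (1-\alpha)m \le b$, since $\epsclean \le \nfrac{1}{10}$.

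I do not expect any genuine obstacle here; the entire content is bookkeeping on top of the inequality proved earlier in the excerpt. The only subtle point is keeping the denominator correct: one must use $b = \abs{\Sclean}$ rather than the full training-set size $m$, which is what forces the factor $1/(1-\alpha)$ and, through $\alpha \le \nfrac{1}{6}$, produces the constant $3$ in place of the $1$ obtained in the Theorem \ref{thm:filter_to_generalize} analogue.
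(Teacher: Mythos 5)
Your proposal is correct and follows essentially the same route as the paper's own proof: instantiate the elementary $\frac{a}{b}$-perturbation inequality from Theorem \ref{thm:filter_to_generalize}, then plug in $c \le 2\epsclean m$ and $b \ge (1-\alpha)m \ge \frac{5}{6}m$ to get $\frac{c}{b} \le \frac{12}{5}\epsclean < 3\epsclean$. The only addition is your explicit check that $c \le b$ (so the inequality applies), which is a small but welcome bit of extra rigor the paper leaves implicit.
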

\begin{proof}
Recall that in the proof of Theorem \ref{thm:filter_to_generalize}, we showed that for positive numbers $a, b, c$ we have:
$$\max\inbraces{0, \frac{a-c}{b-c}}, \min\inbraces{1, \frac{a}{b-c}} \in \insquare{\frac{a}{b} \pm \frac{c}{b}}$$
Now, let $a$ denote the number of samples from $\Sclean$ that $h$ incurs robust loss on, let $b$ be the total number of samples from $\Sclean$, and let $c$ be the number of samples our filtering procedure deletes from $\Sclean$. It is easy to see that $\nfrac{a}{b}$ corresponds $\robustloss(h, \Sclean)$ and that $\robustloss(h, \Sclean') \in \insquare{\max\inbraces{0,\nfrac{(a-c)}{(b-c)}}, \min\inbraces{1,\nfrac{a}{(b-c)}}}$. From our argument above, this means that we must have:
$$\robustloss(h, \Sclean') \in \insquare{\robustloss(h, \Sclean) \pm \frac{2\epsclean m}{(1-\alpha)m}}$$
Finally:
$$\frac{2\epsclean m}{(1-\alpha)m} = \frac{2\epsclean}{(1-\alpha)} \le \frac{2\epsclean}{\nfrac{5}{6}} < 3\epsclean$$
and we're done.
\end{proof}
We now use our claim and triangle inequality to write:
\begin{align*}
    \abs{\robustloss(h, \Sclean') - \robustloss(h, \cD)} \le &\abs{\robustloss(h, \Sclean) - \robustloss(h, \Sclean')} + \\
    &\abs{\robustloss(h, \cD) - \robustloss(h, \Sclean)}\\
    < &4\epsclean
\end{align*}
Next, consider some $\hhat$ satisfying $\robustloss(\hhat, S') \le 2\epsclean$ (which must exist, as per our argument in Part 3), and observe that:
\begin{align*}
    \robustloss(\hhat, S') &\ge (1-2\epsclean)\robustloss(\hhat, S' \cap \Sclean) + 2\epsclean\robustloss(\hhat, S' \cap \Sbackdoor) \\
    &\ge (1-2\epsclean)\robustloss(\hhat, \Sclean') \\
    \implies \robustloss(\hhat, \Sclean') &\le \frac{2\epsclean}{1-2\epsclean} = 2\epsclean\inparen{\frac{1}{1-2\epsclean}} \le \frac{5\epsclean}{2}
\end{align*}
We now use the fact that $\abs{\robustloss(h, \Sclean') - \robustloss(h, \cD)} < 4\epsclean$ to arrive at the conclusion that $\robustloss(h,\cD) < \nfrac{13}{2}\cdot\epsclean$, which is what we wanted to show.

The constants in the statement of Theorem \ref{thm:generalize_to_filter} follow from setting $\tau = \delta$.
\end{proof}}

See Algorithm \ref{alg:filtering} in the Appendix for the pseudocode of an algorithm witnessing Theorem \ref{thm:generalize_to_filter}. Note that there is a factor-$2$ separation between the values of $\alpha$ used in the filtering and generalizing routines above; this is a limitation of our current analysis.

The upshot of Theorems \ref{thm:filter_to_generalize} and \ref{thm:generalize_to_filter} is that in order to obtain a classifier robust to backdoor perturbations at test-time, it is statistically necessary and sufficient to design an algorithm that can filter sufficiently many outliers to where directly minimizing the robust loss (e.g., adversarial training) yields a generalizing classifier. Furthermore, computational equivalence holds in the case where minimizing the robust loss on the training set can be done efficiently (such as in the case of linear separators with closed, convex, bounded, origin-symmetric perturbation sets -- see \cite{Montasser2020-ir}). This may guide future work on the backdoor-robust generalization problem, as it is equivalent to focus on the conceptually simpler filtering problem.

\section{Related Works}
\label{sec:related_works}

Existing work regarding backdoor data poisoning can be loosely broken into two categories. For a more general survey of backdoor attacks, please see the work of \cite{Li2020-my}.

\paragraph{Attacks} To the best of our knowledge, the first work to empirically demonstrate the existence of backdoor poisoning attacks is that of \cite{Gu2017-rj}. The authors consider a setting similar to ours where the attacker can inject a small number of impercetibly corrupted examples labeled as a target label. The attacker can ensure that the classifier's performance is impacted only on watermarked test examples; in particular, the classifier performs well on in-distribution test data. Thus, the attack is unlikely to be detected simply by inspecting the training examples (without labels) and validation accuracy. The work of \cite{Chen2017-kq} and \cite{Gu2019-ip} explores a similar setting.

The work of \cite{Wang2020-yt} discusses theoretical aspects of backdoor poisoning attacks in a federated learning scenario. Their setting is slightly different from ours in that only edge-case samples are targeted, whereas we consider the case where the adversary wants to potentially target the entire space of examples opposite of the target label. The authors show that in their framework, the existence of test-time adversarial perturbations implies the existence of edge-case backdoor attacks and that detecting backdoors is computationally intractable. 

Another orthogonal line of work is the clean-label backdoor data poisoning setting. Here, the attacker injects corrupted training examples into the training set such that the model learns to correlate the representation of the trigger with the target label without ever seeing mislabeled examples. The work of \cite{Saha2019-ce} and \cite{Turner2019-jc} give empirically successful constructions of such an attack. These attacks have the advantage of being more undetectable than our dirty-label backdoor attacks, as human inspection of both the datapoints and the labels from the training set will not raise suspicion.

Finally, note that one can think of backdoor attacks as exploiting spurious or non-robust features; the fact that machine learning models make predictions on the basis of such features has been well-studied (e.g. see \cite{Ribeiro2016-ig}, \cite{Ilyas2019-ot}, \cite{Xiao2020-ju}).

\paragraph{Defenses} Although there are a variety of empirical defenses against backdoor attacks with varying success rates, we draw attention to two defenses that are theoretically motivated and that most closely apply to the setting we consider in our work.

As far as we are aware, one of the first theoretically motivated defenses against backdoor poisoning attacks involves using \emph{spectral signatures}. Spectral signatures (\cite{Tran2018-bf}) relies on the fact that outliers necessarily corrupt higher-order moments of the empirical distribution, especially in the feature space. Thus, to find outliers, one can estimate class means and covariances and filter the points most correlated with high-variance projections of the empirical distribution in the feature space. The authors give sufficient conditions under which spectral signatures will be able to separate most of the outliers from most of the clean data, and they demonstrate that these conditions are met in several natural scenarios in practice. 

Another defense with some provable backing is \emph{Iterative Trimmed Loss Minimization} (ITLM), which was first used against backdoor attacks by \cite{Shen2018-jx}. ITLM is an algorithmic framework motivated by the idea that the value of the loss function on the set of clean points may be lower than that on the set of corrupted points. Thus, an ITLM-based procedure selects a low-loss subset of the training data and performs a model update step on this subset. This alternating minimization is repeated until the model loss is sufficiently small. The heuristic behind ITLM holds in practice, as per the evaluations from \cite{Shen2018-jx}.

\paragraph{Memorization of Training Data} The work of \cite{Arpit2017-yz} and \cite{Feldman2020-ex} discuss the ability of neural networks to memorize their training data. Specifically, the work of \cite{Arpit2017-yz} empirically discusses how memorization plays into the learning dynamics of neural networks via fitting random labels. The work of \cite{Feldman2020-ex} experimentally validates the ``long tail theory'', which posits that data distributions in practice tend to have a large fraction of their mass allocated to ``atypical'' examples; thus, the memorization of these rare examples is actually necessary for generalization.

Our notion of memorization is different in that we consider excess capacity \emph{on top of the learning problem at hand}. In other words, we require that there exist a classifier in the hypothesis class that behaves correctly on on-distribution data in addition to memorizing specially curated off-distribution data. 


\section{Conclusions and Future Work}
\label{sec:conclusion}

\paragraph{Conclusions} We gave a framework under which backdoor data poisoning attacks can be studied. We then showed that, under this framework, a formal notion of excess capacity present in the learning problem is necessary and sufficient for the existence of a backdoor attack. Finally, in the algorithmic setting, we showed that under certain assumptions, adversarial training can detect the presence of backdoors and that filtering backdoors from a training set is equivalent to learning a backdoor-robust classifier.

\paragraph{Future Work} There are several interesting problems directly connected to our work for which progress would yield a better understanding of backdoor attacks. Perhaps the most important is to find problems for which there simultaneously exist efficient backdoor filtering algorithms and efficient adversarial training algorithms. It would also be illuminating to determine the extent to which adversarial training detects backdoor attacks in deep learning\footnote{This scenario is not automatically covered by our result since the generalization of neural networks to unseen data cannot be explained by traditional uniform convergence and VC-dimension arguments (see \cite{Neyshabur2017-mk} and \cite{Zhang2016-mh}).}. Finally, we believe that our notion of memorization capacity can find applications beyond the scope of this work. It would be particularly interesting to see if memorization capacity has applications to explaining robustness or lack thereof to test-time adversarial perturbations.

\paragraph{Societal Impacts} Defenses against backdoor attacks may impede the functionality of several privacy-preserving applications. Most notably, the Fawkes system (see \cite{Shan2020-dr}) relies on a backdoor data poisoning attack to preserve its users' privacy, and such a system could be compromised if it were known how to reliably defend against backdoor data poisoning attacks in such a setting.

\paragraph{Acknowledgments} This work was supported in part by the National Science Foundation under grant CCF-1815011 and by the Defense Advanced Research Projects Agency under cooperative agreement HR00112020003. The views expressed in this work do not necessarily reflect the position or the policy of the Government and no official endorsement should be inferred. Approved for public release; distribution is unlimited.

NSM thanks Surbhi Goel for suggesting the experiments run in the paper.

\nocite{*}
\printbibliography

@ARTICLE{Zhang2016-mh,
  title         = "Understanding deep learning requires rethinking
                   generalization",
  author        = "Zhang, Chiyuan and Bengio, Samy and Hardt, Moritz and Recht,
                   Benjamin and Vinyals, Oriol",
  abstract      = "Despite their massive size, successful deep artificial
                   neural networks can exhibit a remarkably small difference
                   between training and test performance. Conventional wisdom
                   attributes small generalization error either to properties
                   of the model family, or to the regularization techniques
                   used during training. Through extensive systematic
                   experiments, we show how these traditional approaches fail
                   to explain why large neural networks generalize well in
                   practice. Specifically, our experiments establish that
                   state-of-the-art convolutional networks for image
                   classification trained with stochastic gradient methods
                   easily fit a random labeling of the training data. This
                   phenomenon is qualitatively unaffected by explicit
                   regularization, and occurs even if we replace the true
                   images by completely unstructured random noise. We
                   corroborate these experimental findings with a theoretical
                   construction showing that simple depth two neural networks
                   already have perfect finite sample expressivity as soon as
                   the number of parameters exceeds the number of data points
                   as it usually does in practice. We interpret our
                   experimental findings by comparison with traditional models.",
  month         =  nov,
  year          =  2016,
  archivePrefix = "arXiv",
  primaryClass  = "cs.LG",
  eprint        = "1611.03530"
}

@INPROCEEDINGS{Borgnia2021-vk,
  title     = "Strong Data Augmentation Sanitizes Poisoning and Backdoor
               Attacks Without an Accuracy Tradeoff",
  booktitle = "{ICASSP} 2021 - 2021 {IEEE} International Conference on
               Acoustics, Speech and Signal Processing ({ICASSP})",
  author    = "Borgnia, Eitan and Cherepanova, Valeriia and Fowl, Liam and
               Ghiasi, Amin and Geiping, Jonas and Goldblum, Micah and
               Goldstein, Tom and Gupta, Arjun",
  abstract  = "Data poisoning and backdoor attacks manipulate victim models by
               maliciously modifying training data. In light of this growing
               threat, a recent survey of industry professionals revealed
               heightened fear in the private sector regarding data poisoning.
               Many previous defenses against poisoning either fail in the face
               of increasingly strong attacks, or they significantly degrade
               performance. However, we find that strong data augmentations,
               such as mixup and CutMix, can significantly diminish the threat
               of poisoning and backdoor attacks without trading off
               performance. We further verify the effectiveness of this simple
               defense against adaptive poisoning methods, and we compare to
               baselines including the popular differentially private SGD
               (DP-SGD) defense. In the context of backdoors, CutMix greatly
               mitigates the attack while simultaneously increasing validation
               accuracy by 9\%.",
  pages     = "3855--3859",
  month     =  jun,
  year      =  2021,
  keywords  = "Industries;Toxicology;Conferences;Training data;Machine
               learning;Signal processing;Data models;Data Poisoning;Backdoor
               Attacks;Adversarial Attacks;Differential Privacy;Data
               Augmentation"
}

@ARTICLE{Feldman2020-ex,
  title         = "What Neural Networks Memorize and Why: Discovering the Long
                   Tail via Influence Estimation",
  author        = "Feldman, Vitaly and Zhang, Chiyuan",
  abstract      = "Deep learning algorithms are well-known to have a propensity
                   for fitting the training data very well and often fit even
                   outliers and mislabeled data points. Such fitting requires
                   memorization of training data labels, a phenomenon that has
                   attracted significant research interest but has not been
                   given a compelling explanation so far. A recent work of
                   Feldman (2019) proposes a theoretical explanation for this
                   phenomenon based on a combination of two insights. First,
                   natural image and data distributions are (informally) known
                   to be long-tailed, that is have a significant fraction of
                   rare and atypical examples. Second, in a simple theoretical
                   model such memorization is necessary for achieving
                   close-to-optimal generalization error when the data
                   distribution is long-tailed. However, no direct empirical
                   evidence for this explanation or even an approach for
                   obtaining such evidence were given. In this work we design
                   experiments to test the key ideas in this theory. The
                   experiments require estimation of the influence of each
                   training example on the accuracy at each test example as
                   well as memorization values of training examples. Estimating
                   these quantities directly is computationally prohibitive but
                   we show that closely-related subsampled influence and
                   memorization values can be estimated much more efficiently.
                   Our experiments demonstrate the significant benefits of
                   memorization for generalization on several standard
                   benchmarks. They also provide quantitative and visually
                   compelling evidence for the theory put forth in (Feldman,
                   2019).",
  month         =  aug,
  year          =  2020,
  archivePrefix = "arXiv",
  primaryClass  = "cs.LG",
  eprint        = "2008.03703"
}

@INPROCEEDINGS{Arpit2017-yz,
  title     = "A Closer Look at Memorization in Deep Networks",
  booktitle = "Proceedings of the 34th International Conference on Machine
               Learning",
  author    = "Arpit, Devansh and Jastrz{\k e}bski, Stanis{\l}aw and Ballas,
               Nicolas and Krueger, David and Bengio, Emmanuel and Kanwal,
               Maxinder S and Maharaj, Tegan and Fischer, Asja and Courville,
               Aaron and Bengio, Yoshua and Lacoste-Julien, Simon",
  editor    = "Precup, Doina and Teh, Yee Whye",
  abstract  = "We examine the role of memorization in deep learning, drawing
               connections to capacity, generalization, and adversarial
               robustness. While deep networks are capable of memorizing noise
               data, our results suggest that they tend to prioritize learning
               simple patterns first. In our experiments, we expose qualitative
               differences in gradient-based optimization of deep neural
               networks (DNNs) on noise vs. real data. We also demonstrate that
               for appropriately tuned explicit regularization (e.g., dropout)
               we can degrade DNN training performance on noise datasets
               without compromising generalization on real data. Our analysis
               suggests that the notions of effective capacity which are
               dataset independent are unlikely to explain the generalization
               performance of deep networks when trained with gradient based
               methods because training data itself plays an important role in
               determining the degree of memorization.",
  publisher = "PMLR",
  volume    =  70,
  pages     = "233--242",
  series    = "Proceedings of Machine Learning Research",
  year      =  2017
}

@ARTICLE{Gu2017-rj,
  title         = "{BadNets}: Identifying Vulnerabilities in the Machine
                   Learning Model Supply Chain",
  author        = "Gu, Tianyu and Dolan-Gavitt, Brendan and Garg, Siddharth",
  abstract      = "Deep learning-based techniques have achieved
                   state-of-the-art performance on a wide variety of
                   recognition and classification tasks. However, these
                   networks are typically computationally expensive to train,
                   requiring weeks of computation on many GPUs; as a result,
                   many users outsource the training procedure to the cloud or
                   rely on pre-trained models that are then fine-tuned for a
                   specific task. In this paper we show that outsourced
                   training introduces new security risks: an adversary can
                   create a maliciously trained network (a backdoored neural
                   network, or a \textbackslashemph\{BadNet\}) that has
                   state-of-the-art performance on the user's training and
                   validation samples, but behaves badly on specific
                   attacker-chosen inputs. We first explore the properties of
                   BadNets in a toy example, by creating a backdoored
                   handwritten digit classifier. Next, we demonstrate backdoors
                   in a more realistic scenario by creating a U.S. street sign
                   classifier that identifies stop signs as speed limits when a
                   special sticker is added to the stop sign; we then show in
                   addition that the backdoor in our US street sign detector
                   can persist even if the network is later retrained for
                   another task and cause a drop in accuracy of \{25\}\% on
                   average when the backdoor trigger is present. These results
                   demonstrate that backdoors in neural networks are both
                   powerful and---because the behavior of neural networks is
                   difficult to explicate---stealthy. This work provides
                   motivation for further research into techniques for
                   verifying and inspecting neural networks, just as we have
                   developed tools for verifying and debugging software.",
  month         =  aug,
  year          =  2017,
  archivePrefix = "arXiv",
  primaryClass  = "cs.CR",
  eprint        = "1708.06733"
}

@ARTICLE{Daniely2012-pw,
  title         = "Multiclass Learning Approaches: A Theoretical Comparison
                   with Implications",
  author        = "Daniely, Amit and Sabato, Sivan and Shwartz, Shai Shalev",
  abstract      = "We theoretically analyze and compare the following five
                   popular multiclass classification methods: One vs. All, All
                   Pairs, Tree-based classifiers, Error Correcting Output Codes
                   (ECOC) with randomly generated code matrices, and Multiclass
                   SVM. In the first four methods, the classification is based
                   on a reduction to binary classification. We consider the
                   case where the binary classifier comes from a class of VC
                   dimension $d$, and in particular from the class of
                   halfspaces over $\reals^d$. We analyze both the estimation
                   error and the approximation error of these methods. Our
                   analysis reveals interesting conclusions of practical
                   relevance, regarding the success of the different approaches
                   under various conditions. Our proof technique employs tools
                   from VC theory to analyze the
                   \textbackslashemph\{approximation error\} of hypothesis
                   classes. This is in sharp contrast to most, if not all,
                   previous uses of VC theory, which only deal with estimation
                   error.",
  month         =  may,
  year          =  2012,
  archivePrefix = "arXiv",
  primaryClass  = "cs.LG",
  eprint        = "1205.6432"
}

@article{lecun-mnisthandwrittendigit-2010,
  added-at = {2010-06-28T21:16:30.000+0200},
  author = {LeCun, Yann and Cortes, Corinna},
  biburl = {https://www.bibsonomy.org/bibtex/2935bad99fa1f65e03c25b315aa3c1032/mhwombat},
  groups = {public},
  howpublished = {http://yann.lecun.com/exdb/mnist/},
  interhash = {21b9d0558bd66279df9452562df6e6f3},
  intrahash = {935bad99fa1f65e03c25b315aa3c1032},
  keywords = {MSc _checked character_recognition mnist network neural},
  lastchecked = {2016-01-14 14:24:11},
  timestamp = {2016-07-12T19:25:30.000+0200},
  title = {{MNIST} handwritten digit database},
  url = {http://yann.lecun.com/exdb/mnist/},
  username = {mhwombat},
  year = 2010
}

@INPROCEEDINGS{Ribeiro2016-ig,
  title     = "``Why Should {I} Trust You?'': Explaining the Predictions of Any
               Classifier",
  booktitle = "Proceedings of the 22nd {ACM} {SIGKDD} International Conference
               on Knowledge Discovery and Data Mining",
  author    = "Ribeiro, Marco Tulio and Singh, Sameer and Guestrin, Carlos",
  abstract  = "Despite widespread adoption, machine learning models remain
               mostly black boxes. Understanding the reasons behind predictions
               is, however, quite important in assessing trust, which is
               fundamental if one plans to take action based on a prediction,
               or when choosing whether to deploy a new model. Such
               understanding also provides insights into the model, which can
               be used to transform an untrustworthy model or prediction into a
               trustworthy one.In this work, we propose LIME, a novel
               explanation technique that explains the predictions of any
               classifier in an interpretable and faithful manner, by learning
               an interpretable model locally varound the prediction. We also
               propose a method to explain models by presenting representative
               individual predictions and their explanations in a non-redundant
               way, framing the task as a submodular optimization problem. We
               demonstrate the flexibility of these methods by explaining
               different models for text (e.g. random forests) and image
               classification (e.g. neural networks). We show the utility of
               explanations via novel experiments, both simulated and with
               human subjects, on various scenarios that require trust:
               deciding if one should trust a prediction, choosing between
               models, improving an untrustworthy classifier, and identifying
               why a classifier should not be trusted.",
  publisher = "Association for Computing Machinery",
  pages     = "1135--1144",
  series    = "KDD '16",
  month     =  aug,
  year      =  2016,
  address   = "New York, NY, USA",
  keywords  = "explaining machine learning, interpretability, black box
               classifier, interpretable machine learning",
  location  = "San Francisco, California, USA"
}

@inproceedings{
Xiao2020-ju,
title={Noise or Signal: The Role of Image Backgrounds in Object Recognition},
author={Kai Yuanqing Xiao and Logan Engstrom and Andrew Ilyas and Aleksander Madry},
booktitle={International Conference on Learning Representations},
year={2021},
url={https://openreview.net/forum?id=gl3D-xY7wLq}
}

@INPROCEEDINGS{Ilyas2019-ot,
  title     = "Adversarial Examples Are Not Bugs, They Are Features",
  booktitle = "Advances in Neural Information Processing Systems",
  author    = "Ilyas, Andrew and Santurkar, Shibani and Tsipras, Dimitris and
               Engstrom, Logan and Tran, Brandon and Madry, Aleksander",
  editor    = "Wallach, H and Larochelle, H and Beygelzimer, A and
               d' Alch{\'e}-Buc, F and Fox, E and
               Garnett, R",
  publisher = "Curran Associates, Inc.",
  volume    =  32,
  year      =  2019
}

@BOOK{Boyd2004-ey,
  title     = "Convex Optimization",
  author    = "Boyd, Stephen and Vandenberghe, Lieven",
  abstract  = "Convex optimization problems arise frequently in many different
               fields. This book provides a comprehensive introduction to the
               subject, and shows in detail how such problems can be solved
               numerically with great efficiency. The book begins with the
               basic elements of convex sets and functions, and then describes
               various classes of convex optimization problems. Duality and
               approximation techniques are then covered, as are statistical
               estimation techniques. Various geometrical problems are then
               presented, and there is detailed discussion of unconstrained and
               constrained minimization problems, and interior-point methods.
               The focus of the book is on recognizing convex optimization
               problems and then finding the most appropriate technique for
               solving them. It contains many worked examples and homework
               exercises and will appeal to students, researchers and
               practitioners in fields such as engineering, computer science,
               mathematics, statistics, finance and economics.",
  publisher = "Cambridge University Press",
  month     =  mar,
  year      =  2004
}

@INPROCEEDINGS{Cullina2018-os,
  title     = "{PAC-learning} in the presence of adversaries",
  booktitle = "Advances in Neural Information Processing Systems",
  author    = "Cullina, Daniel and Bhagoji, Arjun Nitin and Mittal, Prateek",
  editor    = "Bengio, S and Wallach, H and Larochelle, H and Grauman, K and
               Cesa-Bianchi, N and Garnett, R",
  publisher = "Curran Associates, Inc.",
  volume    =  31,
  year      =  2018
}

@INPROCEEDINGS{Neyshabur2017-mk,
  title     = "Exploring Generalization in Deep Learning",
  booktitle = "Advances in Neural Information Processing Systems",
  author    = "Neyshabur, Behnam and Bhojanapalli, Srinadh and Mcallester,
               David and Srebro, Nati",
  editor    = "Guyon, I and Luxburg, U V and Bengio, S and Wallach, H and
               Fergus, R and Vishwanathan, S and Garnett, R",
  publisher = "Curran Associates, Inc.",
  volume    =  30,
  year      =  2017
}

@INPROCEEDINGS{Montasser2020-ir,
  title     = "Efficiently Learning Adversarially Robust Halfspaces with Noise",
  booktitle = "Proceedings of the 37th International Conference on Machine
               Learning",
  author    = "Montasser, Omar and Goel, Surbhi and Diakonikolas, Ilias and
               Srebro, Nathan",
  editor    = "Iii, Hal Daum{\'e} and Singh, Aarti",
  abstract  = "We study the problem of learning adversarially robust halfspaces
               in the distribution-independent setting. In the realizable
               setting, we provide necessary and sufficient conditions on the
               adversarial perturbation sets under which halfspaces are
               efficiently robustly learnable. In the presence of random label
               noise, we give a simple computationally efficient algorithm for
               this problem with respect to any $\ell_p$-perturbation.",
  publisher = "PMLR",
  volume    =  119,
  pages     = "7010--7021",
  series    = "Proceedings of Machine Learning Research",
  year      =  2020
}

@misc{Turner2019-jc,
  title         = "{Label-Consistent} Backdoor Attacks",
  author        = "Turner, Alexander and Tsipras, Dimitris and Madry,
                   Aleksander",
  abstract      = "Deep neural networks have been demonstrated to be vulnerable
                   to backdoor attacks. Specifically, by injecting a small
                   number of maliciously constructed inputs into the training
                   set, an adversary is able to plant a backdoor into the
                   trained model. This backdoor can then be activated during
                   inference by a backdoor trigger to fully control the model's
                   behavior. While such attacks are very effective, they
                   crucially rely on the adversary injecting arbitrary inputs
                   that are---often blatantly---mislabeled. Such samples would
                   raise suspicion upon human inspection, potentially revealing
                   the attack. Thus, for backdoor attacks to remain undetected,
                   it is crucial that they maintain label-consistency---the
                   condition that injected inputs are consistent with their
                   labels. In this work, we leverage adversarial perturbations
                   and generative models to execute efficient, yet
                   label-consistent, backdoor attacks. Our approach is based on
                   injecting inputs that appear plausible, yet are hard to
                   classify, hence causing the model to rely on the
                   (easier-to-learn) backdoor trigger.",
  month         =  dec,
  year          =  2019,
  archivePrefix = "arXiv",
  primaryClass  = "stat.ML",
  eprint        = "1912.02771"
}

@ARTICLE{Gu2019-ip,
  title    = "{BadNets}: Evaluating Backdooring Attacks on Deep Neural Networks",
  author   = "Gu, Tianyu and Liu, Kang and Dolan-Gavitt, Brendan and Garg,
              Siddharth",
  abstract = "Deep learning-based techniques have achieved state-of-the-art
              performance on a wide variety of recognition and classification
              tasks. However, these networks are typically computationally
              expensive to train, requiring weeks of computation on many GPUs;
              as a result, many users outsource the training procedure to the
              cloud or rely on pre-trained models that are then fine-tuned for
              a specific task. In this paper, we show that the outsourced
              training introduces new security risks: an adversary can create a
              maliciously trained network (a backdoored neural network, or a
              BadNet) that has the state-of-the-art performance on the user's
              training and validation samples but behaves badly on specific
              attacker-chosen inputs. We first explore the properties of
              BadNets in a toy example, by creating a backdoored handwritten
              digit classifier. Next, we demonstrate backdoors in a more
              realistic scenario by creating a U.S. street sign classifier that
              identifies stop signs as speed limits when a special sticker is
              added to the stop sign; we then show in addition that the
              backdoor in our U.S. street sign detector can persist even if the
              network is later retrained for another task and cause a drop in
              an accuracy of 25\% on average when the backdoor trigger is
              present. These results demonstrate that backdoors in neural
              networks are both powerful and-because the behavior of neural
              networks is difficult to explicate-stealthy. This paper provides
              motivation for further research into techniques for verifying and
              inspecting neural networks, just as we have developed tools for
              verifying and debugging software.",
  journal  = "IEEE Access",
  volume   =  7,
  pages    = "47230--47244",
  year     =  2019,
  keywords = "Training;Machine learning;Perturbation methods;Computational
              modeling;Biological neural networks;Security;Computer
              security;machine learning;neural networks"
}

@misc{Li2020-my,
  title         = "Backdoor Learning: A Survey",
  author        = "Li, Yiming and Wu, Baoyuan and Jiang, Yong and Li, Zhifeng
                   and Xia, Shu-Tao",
  abstract      = "Backdoor attack intends to embed hidden backdoor into deep
                   neural networks (DNNs), such that the attacked model
                   performs well on benign samples, whereas its prediction will
                   be maliciously changed if the hidden backdoor is activated
                   by the attacker-defined trigger. This threat could happen
                   when the training process is not fully controlled, such as
                   training on third-party datasets or adopting third-party
                   models, which poses a new and realistic threat. Although
                   backdoor learning is an emerging and rapidly growing
                   research area, its systematic review, however, remains
                   blank. In this paper, we present the first comprehensive
                   survey of this realm. We summarize and categorize existing
                   backdoor attacks and defenses based on their
                   characteristics, and provide a unified framework for
                   analyzing poisoning-based backdoor attacks. Besides, we also
                   analyze the relation between backdoor attacks and relevant
                   fields ($i.e.,$ adversarial attacks and data poisoning), and
                   summarize widely adopted benchmark datasets. Finally, we
                   briefly outline certain future research directions relying
                   upon reviewed works. A curated list of backdoor-related
                   resources is also available at
                   \textbackslashurl\{https://github.com/THUYimingLi/backdoor-learning-resources\}.",
  month         =  jul,
  year          =  2020,
  archivePrefix = "arXiv",
  primaryClass  = "cs.CR",
  eprint        = "2007.08745"
}

@inproceedings{Shan2020-dr,
  title     = "Fawkes: Protecting privacy against unauthorized deep learning
               models",
  author    = "Shan, S and Wenger, E and Zhang, J and Li, H and Zheng, H and
               {others}",
  abstract  = "Today's proliferation of powerful facial recognition systems
               poses a real threat to personal privacy. As Clearview. ai
               demonstrated, anyone can canvas the Internet for data and train
               highly accurate facial recognition models of individuals without
               their knowledge. We need …",
  booktitle = "Proceedings of the {Twenty-Ninth} USENIX Security Symposium",
  journal   = "29th \{USENIX\} Security",
  publisher = "usenix.org",
  year      =  2020
}

@misc{Koh2018-bz,
  title         = "Stronger Data Poisoning Attacks Break Data Sanitization
                   Defenses",
  author        = "Koh, Pang Wei and Steinhardt, Jacob and Liang, Percy",
  abstract      = "Machine learning models trained on data from the outside
                   world can be corrupted by data poisoning attacks that inject
                   malicious points into the models' training sets. A common
                   defense against these attacks is data sanitization: first
                   filter out anomalous training points before training the
                   model. Can data poisoning attacks break data sanitization
                   defenses? In this paper, we develop three new attacks that
                   can all bypass a broad range of data sanitization defenses,
                   including commonly-used anomaly detectors based on nearest
                   neighbors, training loss, and singular-value decomposition.
                   For example, our attacks successfully increase the test
                   error on the Enron spam detection dataset from 3\% to 24\%
                   and on the IMDB sentiment classification dataset from 12\%
                   to 29\% by adding just 3\% poisoned data. In contrast, many
                   existing attacks from the literature do not explicitly
                   consider defenses, and we show that those attacks are
                   ineffective in the presence of the defenses we consider. Our
                   attacks are based on two ideas: (i) we coordinate our
                   attacks to place poisoned points near one another, which
                   fools some anomaly detectors, and (ii) we formulate each
                   attack as a constrained optimization problem, with
                   constraints designed to ensure that the poisoned points
                   evade detection. While this optimization involves solving an
                   expensive bilevel problem, we explore and develop three
                   efficient approximations to this problem based on influence
                   functions; minimax duality; and the Karush-Kuhn-Tucker (KKT)
                   conditions. Our results underscore the urgent need to
                   develop more sophisticated and robust defenses against data
                   poisoning attacks.",
  month         =  nov,
  year          =  2018,
  archivePrefix = "arXiv",
  primaryClass  = "stat.ML",
  eprint        = "1811.00741"
}

@misc{Truong2020-dk,
  title         = "Systematic Evaluation of Backdoor Data Poisoning Attacks on
                   Image Classifiers",
  author        = "Truong, Loc and Jones, Chace and Hutchinson, Brian and
                   August, Andrew and Praggastis, Brenda and Jasper, Robert and
                   Nichols, Nicole and Tuor, Aaron",
  abstract      = "Backdoor data poisoning attacks have recently been
                   demonstrated in computer vision research as a potential
                   safety risk for machine learning (ML) systems. Traditional
                   data poisoning attacks manipulate training data to induce
                   unreliability of an ML model, whereas backdoor data
                   poisoning attacks maintain system performance unless the ML
                   model is presented with an input containing an embedded
                   ``trigger'' that provides a predetermined response
                   advantageous to the adversary. Our work builds upon prior
                   backdoor data-poisoning research for ML image classifiers
                   and systematically assesses different experimental
                   conditions including types of trigger patterns, persistence
                   of trigger patterns during retraining, poisoning strategies,
                   architectures (ResNet-50, NasNet, NasNet-Mobile), datasets
                   (Flowers, CIFAR-10), and potential defensive regularization
                   techniques (Contrastive Loss, Logit Squeezing, Manifold
                   Mixup, Soft-Nearest-Neighbors Loss). Experiments yield four
                   key findings. First, the success rate of backdoor poisoning
                   attacks varies widely, depending on several factors,
                   including model architecture, trigger pattern and
                   regularization technique. Second, we find that poisoned
                   models are hard to detect through performance inspection
                   alone. Third, regularization typically reduces backdoor
                   success rate, although it can have no effect or even
                   slightly increase it, depending on the form of
                   regularization. Finally, backdoors inserted through data
                   poisoning can be rendered ineffective after just a few
                   epochs of additional training on a small set of clean data
                   without affecting the model's performance.",
  month         =  apr,
  year          =  2020,
  archivePrefix = "arXiv",
  primaryClass  = "cs.CV",
  eprint        = "2004.11514"
}

@INPROCEEDINGS{Adi2018-fz,
  title     = "Turning your weakness into a strength: watermarking deep neural
               networks by backdooring",
  booktitle = "Proceedings of the 27th {USENIX} Conference on Security
               Symposium",
  author    = "Adi, Yossi and Baum, Carsten and Cisse, Moustapha and Pinkas,
               Benny and Keshet, Joseph",
  abstract  = "Deep Neural Networks have recently gained lots of success after
               enabling several breakthroughs in notoriously challenging
               problems. Training these networks is computationally expensive
               and requires vast amounts of training data. Selling such
               pre-trained models can, therefore, be a lucrative business
               model. Unfortunately, once the models are sold they can be
               easily copied and redistributed. To avoid this, a tracking
               mechanism to identify models as the intellectual property of a
               particular vendor is necessary.In this work, we present an
               approach for watermarking Deep Neural Networks in a black-box
               way. Our scheme works for general classification tasks and can
               easily be combined with current learning algorithms. We show
               experimentally that such a watermark has no noticeable impact on
               the primary task that the model is designed for and evaluate the
               robustness of our proposal against a multitude of practical
               attacks. Moreover, we provide a theoretical analysis, relating
               our approach to previous work on backdooring.",
  publisher = "USENIX Association",
  pages     = "1615--1631",
  series    = "SEC'18",
  month     =  aug,
  year      =  2018,
  address   = "USA",
  location  = "Baltimore, MD, USA"
}

@misc{Chen2017-kq,
      title={Targeted Backdoor Attacks on Deep Learning Systems Using Data Poisoning}, 
      author={Xinyun Chen and Chang Liu and Bo Li and Kimberly Lu and Dawn Song},
      year={2017},
      eprint={1712.05526},
      archivePrefix={arXiv},
      primaryClass={cs.CR}
}

@inproceedings{Wang2020-yt,
 author = {Wang, Hongyi and Sreenivasan, Kartik and Rajput, Shashank and Vishwakarma, Harit and Agarwal, Saurabh and Sohn, Jy-yong and Lee, Kangwook and Papailiopoulos, Dimitris},
 booktitle = {Advances in Neural Information Processing Systems},
 editor = {H. Larochelle and M. Ranzato and R. Hadsell and M. F. Balcan and H. Lin},
 pages = {16070--16084},
 publisher = {Curran Associates, Inc.},
 title = {Attack of the Tails: Yes, You Really Can Backdoor Federated Learning},
 url = {https://proceedings.neurips.cc/paper/2020/file/b8ffa41d4e492f0fad2f13e29e1762eb-Paper.pdf},
 volume = {33},
 year = {2020}
}

@misc{Madry2017-ep,
  title         = "Towards Deep Learning Models Resistant to Adversarial
                   Attacks",
  author        = "Madry, Aleksander and Makelov, Aleksandar and Schmidt,
                   Ludwig and Tsipras, Dimitris and Vladu, Adrian",
  abstract      = "Recent work has demonstrated that deep neural networks are
                   vulnerable to adversarial examples---inputs that are almost
                   indistinguishable from natural data and yet classified
                   incorrectly by the network. In fact, some of the latest
                   findings suggest that the existence of adversarial attacks
                   may be an inherent weakness of deep learning models. To
                   address this problem, we study the adversarial robustness of
                   neural networks through the lens of robust optimization.
                   This approach provides us with a broad and unifying view on
                   much of the prior work on this topic. Its principled nature
                   also enables us to identify methods for both training and
                   attacking neural networks that are reliable and, in a
                   certain sense, universal. In particular, they specify a
                   concrete security guarantee that would protect against any
                   adversary. These methods let us train networks with
                   significantly improved resistance to a wide range of
                   adversarial attacks. They also suggest the notion of
                   security against a first-order adversary as a natural and
                   broad security guarantee. We believe that robustness against
                   such well-defined classes of adversaries is an important
                   stepping stone towards fully resistant deep learning models.
                   Code and pre-trained models are available at
                   https://github.com/MadryLab/mnist\_challenge and
                   https://github.com/MadryLab/cifar10\_challenge.",
  month         =  jun,
  year          =  2017,
  archivePrefix = "arXiv",
  primaryClass  = "stat.ML",
  eprint        = "1706.06083"
}

@INPROCEEDINGS{Montasser2019-ro,
  title     = "{VC} Classes are Adversarially Robustly Learnable, but Only
               Improperly",
  booktitle = "Proceedings of the {Thirty-Second} Conference on Learning Theory",
  author    = "Montasser, Omar and Hanneke, Steve and Srebro, Nathan",
  editor    = "Beygelzimer, Alina and Hsu, Daniel",
  abstract  = "We study the question of learning an adversarially robust
               predictor. We show that any hypothesis class $\mathcalH$ with
               finite VC dimension is robustly PAC learnable with an
               \textbackslashemphimproper learning rule. The requirement of
               being improper is necessary as we exhibit examples of hypothesis
               classes $\mathcalH$ with finite VC dimension that are
               \textbackslashemphnot robustly PAC learnable with any
               \textbackslashemphproper learning rule.",
  publisher = "PMLR",
  volume    =  99,
  pages     = "2512--2530",
  series    = "Proceedings of Machine Learning Research",
  year      =  2019,
  address   = "Phoenix, USA"
}

@ARTICLE{Saha2019-ce,
  title    = "Hidden Trigger Backdoor Attacks",
  author   = "Saha, Aniruddha and Subramanya, Akshayvarun and Pirsiavash, Hamed",
  abstract = "With the success of deep learning algorithms in various domains,
              studying adversarial attacks to secure deep models in real world
              applications has become an important research topic. Backdoor
              attacks are a form of adversarial attacks on deep networks where
              the attacker provides poisoned data to the victim to train the
              model with, and then activates the attack by showing a specific
              small trigger pattern at the test time. Most state-of-the-art
              backdoor attacks either provide mislabeled poisoning data that is
              possible to identify by visual inspection, reveal the trigger in
              the poisoned data, or use noise to hide the trigger. We propose a
              novel form of backdoor attack where poisoned data look natural
              with correct labels and also more importantly, the attacker hides
              the trigger in the poisoned data and keeps the trigger secret
              until the test time. We perform an extensive study on various
              image classification settings and show that our attack can fool
              the model by pasting the trigger at random locations on unseen
              images although the model performs well on clean data. We also
              show that our proposed attack cannot be easily defended using a
              state-of-the-art defense algorithm for backdoor attacks.",
  journal  = "AAAI",
  volume   =  34,
  number   =  07,
  pages    = "11957--11965",
  month    =  apr,
  year     =  2020,
  language = "en"
}

@INPROCEEDINGS{Shen2018-jx,
  title     = "Learning with Bad Training Data via Iterative Trimmed Loss
               Minimization",
  booktitle = "Proceedings of the 36th International Conference on Machine
               Learning",
  author    = "Shen, Yanyao and Sanghavi, Sujay",
  editor    = "Chaudhuri, Kamalika and Salakhutdinov, Ruslan",
  abstract  = "In this paper, we study a simple and generic framework to tackle
               the problem of learning model parameters when a fraction of the
               training samples are corrupted. Our approach is motivated by a
               simple observation: in a variety of such settings, the evolution
               of training accuracy (as a function of training epochs) is
               different for clean samples and bad samples. We propose to
               iteratively minimize the trimmed loss, by alternating between
               (a) selecting samples with lowest current loss, and (b)
               retraining a model on only these samples. Analytically, we
               characterize the statistical performance and convergence rate of
               the algorithm for simple and natural linear and non-linear
               models. Experimentally, we demonstrate its effectiveness in
               three settings: (a) deep image classifiers with errors only in
               labels, (b) generative adversarial networks with bad training
               images, and (c) deep image classifiers with adversarial (image,
               label) pairs (i.e., backdoor attacks). For the well-studied
               setting of random label noise, our algorithm achieves
               state-of-the-art performance without having access to any
               a-priori guaranteed clean samples.",
  publisher = "PMLR",
  volume    =  97,
  pages     = "5739--5748",
  series    = "Proceedings of Machine Learning Research",
  year      =  2019
}

@misc{Shafahi2018-ns,
  title         = "Poison Frogs! Targeted {Clean-Label} Poisoning Attacks on
                   Neural Networks",
  author        = "Shafahi, Ali and Ronny Huang, W and Najibi, Mahyar and
                   Suciu, Octavian and Studer, Christoph and Dumitras, Tudor
                   and Goldstein, Tom",
  abstract      = "Data poisoning is an attack on machine learning models
                   wherein the attacker adds examples to the training set to
                   manipulate the behavior of the model at test time. This
                   paper explores poisoning attacks on neural nets. The
                   proposed attacks use ``clean-labels''; they don't require
                   the attacker to have any control over the labeling of
                   training data. They are also targeted; they control the
                   behavior of the classifier on a $\textit\{specific\}$ test
                   instance without degrading overall classifier performance.
                   For example, an attacker could add a seemingly innocuous
                   image (that is properly labeled) to a training set for a
                   face recognition engine, and control the identity of a
                   chosen person at test time. Because the attacker does not
                   need to control the labeling function, poisons could be
                   entered into the training set simply by leaving them on the
                   web and waiting for them to be scraped by a data collection
                   bot. We present an optimization-based method for crafting
                   poisons, and show that just one single poison image can
                   control classifier behavior when transfer learning is used.
                   For full end-to-end training, we present a ``watermarking''
                   strategy that makes poisoning reliable using multiple
                   ($\approx$50) poisoned training instances. We demonstrate
                   our method by generating poisoned frog images from the CIFAR
                   dataset and using them to manipulate image classifiers.",
  month         =  apr,
  year          =  2018,
  archivePrefix = "arXiv",
  primaryClass  = "cs.LG",
  eprint        = "1804.00792"
}

@BOOK{Shalev-Shwartz2014-oj,
  title     = "Understanding Machine Learning: From Theory to Algorithms",
  author    = "Shalev-Shwartz, Shai and Ben-David, Shai",
  abstract  = "Machine learning is one of the fastest growing areas of computer
               science, with far-reaching applications. The aim of this
               textbook is to introduce machine learning, and the algorithmic
               paradigms it offers, in a principled way. The book provides a
               theoretical account of the fundamentals underlying machine
               learning and the mathematical derivations that transform these
               principles into practical algorithms. Following a presentation
               of the basics, the book covers a wide array of central topics
               unaddressed by previous textbooks. These include a discussion of
               the computational complexity of learning and the concepts of
               convexity and stability; important algorithmic paradigms
               including stochastic gradient descent, neural networks, and
               structured output learning; and emerging theoretical concepts
               such as the PAC-Bayes approach and compression-based bounds.
               Designed for advanced undergraduates or beginning graduates, the
               text makes the fundamentals and algorithms of machine learning
               accessible to students and non-expert readers in statistics,
               computer science, mathematics and engineering.",
  publisher = "Cambridge University Press",
  month     =  may,
  year      =  2014,
  language  = "en"
}

@BOOK{Vershynin2018-xn,
  title     = "{High-Dimensional} Probability: An Introduction with
               Applications in Data Science",
  author    = "Vershynin, Roman",
  abstract  = "High-dimensional probability offers insight into the behavior of
               random vectors, random matrices, random subspaces, and objects
               used to quantify uncertainty in high dimensions. Drawing on
               ideas from probability, analysis, and geometry, it lends itself
               to applications in mathematics, statistics, theoretical computer
               science, signal processing, optimization, and more. It is the
               first to integrate theory, key tools, and modern applications of
               high-dimensional probability. Concentration inequalities form
               the core, and it covers both classical results such as
               Hoeffding's and Chernoff's inequalities and modern developments
               such as the matrix Bernstein's inequality. It then introduces
               the powerful methods based on stochastic processes, including
               such tools as Slepian's, Sudakov's, and Dudley's inequalities,
               as well as generic chaining and bounds based on VC dimension. A
               broad range of illustrations is embedded throughout, including
               classical and modern results for covariance estimation,
               clustering, networks, semidefinite programming, coding,
               dimension reduction, matrix completion, machine learning,
               compressed sensing, and sparse regression.",
  publisher = "Cambridge University Press",
  month     =  sep,
  year      =  2018,
  language  = "en"
}

@misc{Hardt2012-xk,
  title         = "Algorithms and Hardness for Robust Subspace Recovery",
  author        = "Hardt, Moritz and Moitra, Ankur",
  abstract      = "We consider a fundamental problem in unsupervised learning
                   called \textbackslashemph\{subspace recovery\}: given a
                   collection of $m$ points in $\mathbb\{R\}^n$, if many but
                   not necessarily all of these points are contained in a
                   $d$-dimensional subspace $T$ can we find it? The points
                   contained in $T$ are called \{\textbackslashem inliers\} and
                   the remaining points are \{\textbackslashem outliers\}. This
                   problem has received considerable attention in computer
                   science and in statistics. Yet efficient algorithms from
                   computer science are not robust to \{\textbackslashem
                   adversarial\} outliers, and the estimators from robust
                   statistics are hard to compute in high dimensions. Are there
                   algorithms for subspace recovery that are both robust to
                   outliers and efficient? We give an algorithm that finds $T$
                   when it contains more than a $\frac\{d\}\{n\}$ fraction of
                   the points. Hence, for say $d = n/2$ this estimator is both
                   easy to compute and well-behaved when there are a constant
                   fraction of outliers. We prove that it is Small Set
                   Expansion hard to find $T$ when the fraction of errors is
                   any larger, thus giving evidence that our estimator is an
                   \{\textbackslashem optimal\} compromise between efficiency
                   and robustness. As it turns out, this basic problem has a
                   surprising number of connections to other areas including
                   small set expansion, matroid theory and functional analysis
                   that we make use of here.",
  month         =  nov,
  year          =  2012,
  archivePrefix = "arXiv",
  primaryClass  = "cs.CC",
  eprint        = "1211.1041"
}

@INPROCEEDINGS{Tran2018-bf,
  title     = "Spectral Signatures in Backdoor Attacks",
  booktitle = "Advances in Neural Information Processing Systems",
  author    = "Tran, Brandon and Li, Jerry and Madry, Aleksander",
  editor    = "Bengio, S and Wallach, H and Larochelle, H and Grauman, K and
               Cesa-Bianchi, N and Garnett, R",
  publisher = "Curran Associates, Inc.",
  volume    =  31,
  year      =  2018
}

\newpage

\appendix

\section{Restatement of Theorems and Full Proofs}
\label{sec:app_proof}

In this section, we will restate our main results and give full proofs.

\begin{theorem}[Existence of Backdoor Data Poisoning Attack (Theorem \ref{thm:existence_linear_backdoor})]
\label{app:existence_linear_backdoor}
\thmExistenceLinearBackdoor
\end{theorem}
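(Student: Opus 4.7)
The plan is to exhibit an explicit hypothesis $\hhat \in \cH$ that fits the combined (clean plus watermarked) training set with zero error, and then invoke standard uniform convergence on each of the two sub-populations (the clean distribution $\cD$ and the watermarked distribution $\patch{\cD\mid\hstar(x)\neq t}$) to conclude that ERM's output satisfies the requirements of Problem~\ref{problem:adv_general}.

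First I would check that the hypothesized patch class is actually $1$-consistent with $\hstar$: since $\hstar(x) = \signv{\sum_{i\in U} a_i v_i(x)}$ depends only on the ``relevant'' features, and those features are preserved under $\patchw$ with probability $1$, we get $\hstar(\patch{x}) = \hstar(x)$ almost surely, so $\patchw \in \fadv(\hstar)$. Next, I would construct the witness classifier. Using the assumption that at least one $j \in \ucomplement$ has $v_j(\patch{x})\neq 0$ almost surely and a fixed sign (say $v_j(\patch{x})\ge 0$, WLOG after a possible sign flip), I would set
\[
\hhat(x) \;=\; \signv{\sum_{i\in U} a_i v_i(x) \;+\; t\cdot C \cdot v_j(x)},
\qquad C \;=\; \frac{2\max_{x\in\Sclean}\abs{\vstar(x)}}{\min_{x\in\Sbackdoor} v_j(\patch{x})}.
\]
On any $x$ in the support of $\cD$ the added term vanishes because $v_j(x)=0$, so $\hhat(x)=\hstar(x)$ (in particular it is correct on all of $\Sclean$). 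On any $\patch{x}\in\Sbackdoor$, the coefficient $C$ is chosen precisely so that $t\cdot C\cdot v_j(\patch{x})$ dominates $\vstar(\patch{x})$ in magnitude and forces the sign to be $t$; this is where the sign-consistency hypothesis on $v_j$ is essential, since it guarantees a single $C$ works uniformly over $\Sbackdoor$ and over $\patch{\cD\mid\hstar\neq t}$.

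With the existence of a zero-training-error $\hhat\in\cH$ established, any ERM output $\what{h}$ also achieves zero empirical error on $\Sclean\cup\Sbackdoor$, hence zero empirical error on each of the two sub-samples separately. Since $\Sclean$ is drawn i.i.d.\ from $\cD$ and $\Sbackdoor$ is drawn i.i.d.\ from $\patch{\cD\mid\hstar(x)\neq t}$, the Fundamental Theorem of PAC learning applied to $\cH$ (of finite VC dimension) on each sub-sample, together with the sample-size assumptions $\abs{\Sclean}\gtrsim\epsclean^{-1}(\vc{\cH}+\log(1/\delta))$ and $\abs{\Sbackdoor}\gtrsim\epsadv^{-1}(\vc{\cH}+\log(1/\delta))$, yields via a union bound that with probability at least $1-\delta$ the ERM output has clean error at most $\epsclean$ and attack-success rate at least $1-\epsadv$, which is exactly Problem~\ref{problem:adv_general}.

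The main obstacle is choosing the multiplier $C$ correctly and arguing uniformly over the two distributions at once. The naive choice ``anything large'' can fail because $v_j(\patch{x})$ could be arbitrarily small on some training points, so $C$ must be calibrated to $\min_{x\in\Sbackdoor} v_j(\patch{x})$; the sign-consistency hypothesis is what allows a single $C$ to push every patched training point past the margin set by $R = \max_{x\in\Sclean}\abs{\vstar(x)}$ without ever pulling a clean point across the decision boundary. Once this calibration is in place, the rest is standard two-population uniform convergence.
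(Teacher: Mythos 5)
Your proposal is correct and follows essentially the same route as the paper's own proof: you build the identical witness classifier $\hhat(x) = \signv{\sum_{i\in U} a_i v_i(x) + t\cdot C\cdot v_j(x)}$ with the same calibration $C = 2R / \min_{x\in\Sbackdoor} v_j(\patch{x})$, verify it is correct on the clean support (where $v_j$ vanishes) and forces output $t$ on the patched points, and then invoke uniform convergence separately on the $\cD$-sample and on the $\patch{\cD\mid\hstar\neq t}$-sample with a union bound. The paper just spells out the $t=\pm1$ case analysis that you summarize as ``$C$ dominates $\vstar(\patch{x})$ in magnitude and forces the sign.''
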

\pfExistenceLinearBackdoor

\begin{corollary}[Overparameterized Linear Classifier (Corollary \ref{cor:linear_backdoor})]
\label{app:linear_backdoor}
\corLinearBackdoor
\end{corollary}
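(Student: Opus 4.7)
The plan is to reduce the statement directly to Theorem \ref{thm:existence_linear_backdoor} by exhibiting an orthonormal basis for the ambient vector space that splits cleanly with respect to the data subspace $\vspan{A}$ and the perturbation direction $\eta$. Concretely, I would first identify $\cV$ with the space of linear functionals on $\R^d$, so hypotheses in $\cH$ are exactly thresholded elements of $\cV$ and $\vdim{\cV} = d$. Then I would choose an orthonormal basis $\{v_1,\dots,v_d\}$ of $\R^d$ such that $\{v_1,\dots,v_s\}$ is a basis for $\vspan{A}$ and $\{v_{s+1},\dots,v_d\}$ is a basis for its orthogonal complement, and take $U \coloneqq [s]$.

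The next step is to verify each hypothesis of Theorem \ref{thm:existence_linear_backdoor} for this choice. Since every $x \in \suppv{\cD}$ lies in $\vspan{A}$, we have $v_j(x) = 0$ for $j \notin U$, so $U$ is indeed the minimal set of sparse indices and $\hstar$ can be written using only coordinates in $U$. For the patch, note that $\patch{x} = x + \eta$ with $\eta \perp \vspan{A}$, so for $i \in U$ we have $v_i(\eta) = 0$ and hence $v_i(\patch{x}) = v_i(x)$ almost surely, giving the first condition. Because $\eta \ne 0$ (otherwise the patch is the identity and the adversary has nothing to exploit; the theorem applies to any fixed nontrivial $\patchw \in \fadv$), its expansion in $\{v_{s+1},\dots,v_d\}$ has at least one nonzero coefficient $a_j$; for this $j \in \ucomplement$ we get $v_j(\patch{x}) = v_j(x) + v_j(\eta) = a_j \ne 0$ with probability one, which is the second condition. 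The sign condition on the coordinates of $\patch{x}$ in $\ucomplement$ is automatic because $v_j(\patch{x}) = a_j$ is a deterministic constant and hence has a fixed sign across all $x$ in the support of $\cD$.

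Having verified the hypotheses, I would invoke Theorem \ref{thm:existence_linear_backdoor} as a black box to conclude that drawing $\Sclean$ of size $m_0$ and $\Sbackdoor$ of size $m_1$ (with the same sample complexities transferred from the theorem, noting that $\vc{\cH} \le d+1$ for linear classifiers on $\R^d$) yields an ERM classifier satisfying the requirements of Problem \ref{problem:adv_general} with probability at least $1-\delta$. The only mildly delicate point, which I would mention explicitly, is that the theorem's sign assumption is about the coordinates $v_j(\patch{x})$ rather than $v_j(x)$; since perturbations here are deterministic additive shifts in the orthogonal complement of the data subspace, this reduces to a fact about the fixed coefficients of $\eta$ and not about the distribution of $x$. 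Beyond this bookkeeping the argument is essentially a translation between the coordinate-free formulation of Theorem \ref{thm:existence_linear_backdoor} and the concrete linear-algebraic language of the corollary, so I do not expect any genuine obstacle.
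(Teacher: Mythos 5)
Your proposal takes essentially the same route as the paper's proof: identify linear classifiers on $\R^d$ with a thresholded vector space of dimension $d$, pick an orthonormal basis whose first $s$ elements span $\vspan{A}$ and whose remainder spans the orthogonal complement, set $U = [s]$, and then verify the three hypotheses of Theorem \ref{thm:existence_linear_backdoor} coordinate by coordinate before invoking it as a black box. The one small improvement in your write-up is that you explicitly flag the degenerate case $\eta = 0$ (the identity patch), which the paper's proof implicitly skips past when it asserts $\eta$ has a nonzero expansion in $\{v_{s+1},\dots,v_d\}$.
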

\pfCorLinearBackdoor

\begin{theorem}[Random direction is an adversarial watermark (Theorem \ref{thm:random_stamp})]
\label{app:random_stamp}
\thmRandomStamp
\end{theorem}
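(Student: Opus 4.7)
The plan is to show that a random unit vector $\eta \in \S^{d-1}$ serves, with high probability, as a valid additive patch in the sense required by Corollary \ref{cor:linear_backdoor}: namely, its component in the data subspace $\vspan{A}$ is smaller than the margin $\gamma$. Once this is established, we can invoke Corollary \ref{cor:linear_backdoor} almost as a black box to conclude.

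First I would describe the adversary's construction: the adversary draws $\eta \sim \Unif{\S^{d-1}}$ (note this requires no knowledge of $\suppv{\cD}$), and defines $\patch{x} = x + \eta$. The key goal is to show that, with probability $1-\nfrac{\delta}{2}$ over the draw of $\eta$, the patched version of any in-distribution point has the same label under $\hstar$ as the original. Writing $x = Az$ for some $z \in \R^s$, we have $\ip{\wstar, \patch{x}} = \ip{\wstar, x} + \ip{\wstar, \eta}$, so we must control $\ip{\wstar, \eta}$; since $\wstar$ lives in $\vspan{A}$ by the margin property (or we can simply decompose $\wstar = A\alpha + w^\perp$ and observe that the margin structure only constrains the $\vspan{A}$ component), it suffices to control $\norm{A^\top \eta}$ and show it is at most $\gamma$.

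Next I would do the concentration argument. Let $a_1, \dots, a_s$ be the (unit-norm, WLOG) columns of $A$. A uniform unit vector $\eta$ in $\R^d$ has the property that $\sqrt{d}\,\eta$ is subgaussian with an absolute constant $C_0$, so for each fixed $a_i$, the scalar $\ip{\sqrt{d}\,\eta, a_i}$ is $C_0$-subgaussian. Standard subgaussian tail bounds give
\begin{align*}
\prv{\abs{\ip{\eta, a_i}} \ge \tfrac{\gamma}{\sqrt{s}}} \le 2\exp\!\inparen{-\tfrac{\gamma^2 d}{s C_0^2}}.
\end{align*}
A union bound over $i \in [s]$ and the choice $d = \Omega\inparen{\nfrac{s \logv{s/\delta}}{\gamma^2}}$ forces every $\abs{\ip{\eta, a_i}} \le \nfrac{\gamma}{\sqrt{s}}$ with probability $1 - \nfrac{\delta}{2}$, which in turn yields $\norm{A^\top \eta}_2 \le \gamma$ by summing the $s$ squared coordinates.

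Given this bound, for any $x$ in the support of $\cD$ with margin $\gamma$ under $\hstar$, we have $\abs{\ip{\wstar, \eta}} \le \norm{A^\top \eta} \le \gamma$ (after suitably normalizing $\wstar$), so $\signv{\ip{\wstar, x + \eta}} = \signv{\ip{\wstar, x}} = \hstar(x)$, meaning $\patch{x} \in \fadv(\hstar)$ holds for this particular $\patchw$ with probability $1-\nfrac{\delta}{2}$ over $\eta$. At this point I would invoke Corollary \ref{cor:linear_backdoor} with failure probability $\nfrac{\delta}{2}$ to conclude that the resulting ERM classifier $\hhat$ satisfies the success conditions of Problem \ref{problem:adv_general} and has clean-distribution error at most $\epsclean$; a union bound over the two failure events gives the final probability $1-\delta$.

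The main obstacle, and the only nontrivial step, is the concentration argument: specifically, ensuring that the dependence on $s$, $\gamma$, and $\delta$ in the lower bound on $d$ is exactly what the theorem claims. Everything else reduces to citing Corollary \ref{cor:linear_backdoor}. A minor subtlety is that the adversary does not know $A$, yet can still draw $\eta$ from $\Unif{\S^{d-1}}$; the point is that this distribution of $\eta$ is rotationally invariant, so the tail bound applies to any fixed unknown subspace, which is precisely why the attack works without knowledge of $\suppv{\cD}$.
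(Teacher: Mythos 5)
Your proposal matches the paper's proof essentially step for step: draw $\eta\sim\Unif{\S^{d-1}}$, use the fact that $\sqrt{d}\,\eta$ is $C_0$-subgaussian to bound each $\abs{\ip{\eta,a_i}}$ by $\gamma/\sqrt{s}$ via a tail bound and union bound over $i\in[s]$, conclude $\norm{A^\top\eta}\le\gamma$ with probability $1-\delta/2$ once $d=\Omega(s\log(s/\delta)/\gamma^2)$, deduce $\patchw\in\fadv(\hstar)$ via the margin, and finish by invoking Corollary \ref{cor:linear_backdoor} with a union bound. The only cosmetic difference is that the paper carries an auxiliary parameter $\eps$ and substitutes $\eps=\gamma$ at the end (and explicitly notes via Pythagoras that the component of $\eta$ in $\kernel{A^\top}$ has norm at least $1-\eps$), whereas you plug in $\gamma$ directly; this does not change the argument.
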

\pfRandomStamp

\begin{theorem}[Theorem \ref{thm:scale_capacity}]
\label{app:scale_capacity}
\thmScaleCapacity
\end{theorem}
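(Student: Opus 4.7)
The plan is to proceed by contradiction: assume there exists a $\what$ with $\norm{\what} \le \gamma^{-1}$ achieving zero empirical margin loss on $S = \Sclean \cup \Sbackdoor$, and derive a geometric impossibility from the definition of $\fadv$. The key tool is the standard Rademacher-based uniform convergence bound for norm-constrained linear predictors under margin loss (e.g.\ Theorem~26.12 of Shalev-Shwartz--Ben-David), which has sample complexity of exactly the form $\Omega\inparen{\eps^{-2}\inparen{R^2/\gamma^2 + \logv{\nfrac{1}{\delta}}}}$ appearing in the statement — this is why both $\Sclean$ and $\Sbackdoor$ are sized the way they are.

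First, I would set up the two uniform convergence statements. Apply the margin generalization bound twice: once to $\Sclean \sim \cD^{|\Sclean|}$ to conclude that $\prvv{(x,y)\sim\cD}{y\ip{\what,x} \ge 1} \ge 1-\epsclean$, and once to $\Sbackdoor$, viewed as i.i.d.\ samples of the form $(\patch{x}, t)$ from the distribution $\patch{\cD \mid \hstar(x) \ne t}$ with the fixed label $t$, to conclude $\prvv{x \sim \cD \mid \hstar(x)\neq t}{t\ip{\what,\patch{x}} \ge 1} \ge 1-\epsadv$. (We are free to choose $\epsclean,\epsadv$ with $\epsclean + \epsadv < 1$ by taking them, say, both $< 1/2$; the sample-size hypothesis is what enables these two bounds.)

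Next, I would combine the two events by a union bound on the \emph{same} underlying randomness $x \sim \cD \mid \hstar(x) \neq t$: with probability at least $1 - \epsclean - \epsadv > 0$ over such an $x$, both $y\ip{\what,x} \ge 1$ (where $y = \hstar(x) = -t$) and $t\ip{\what,\patch{x}} \ge 1$ hold simultaneously. In particular, there exists at least one such $x$. Then $\ip{\what, x} \le -1$ while $\ip{\what,\patch{x}} \ge 1$, so
\[
2 \;\le\; \ip{\what, \patch{x} - x} \;\le\; \norm{\what}\cdot\norm{\patch{x}-x} \;\le\; \gamma^{-1}\cdot\gamma \;=\; 1,
\]
by Cauchy--Schwarz, the norm constraint on $\what$, and the definition of $\fadv$. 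This contradicts the assumed existence of a zero-loss $\what$, yielding the theorem.

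The main obstacle — and it is a mild one — is bookkeeping around the backdoor distribution: one needs to be careful that the generalization bound applies to the distribution from which $\Sbackdoor$ is actually drawn (namely the pushforward $\patch{\cD \mid \hstar(x) \ne t}$ with a constant label $t$), and that the ``on this distribution, $\what$ labels points as $t$ with margin at least $1$'' conclusion can be re-expressed, via $\hstar(x) = -t$ for those $x$, as the contradictory sign statement $\ip{\what,x} \le -1$ needed for the Cauchy--Schwarz step. Once the two uniform convergence statements are properly aligned on the same $x$, the contradiction is immediate and purely geometric.
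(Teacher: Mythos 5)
Your proposal is correct and follows essentially the same route as the paper's proof: contradiction via margin-loss uniform convergence (Shalev-Shwartz--Ben-David Theorem~26.12) applied separately to $\Sclean$ and $\Sbackdoor$, a union bound to extract a single witness $x$ with $\ip{\what,x}\le -1$ and $\ip{\what,\patch{x}}\ge 1$, and then Cauchy--Schwarz against the constraints $\norm{\what}\le\gamma^{-1}$ and $\norm{\patch{x}-x}\le\gamma$. The only (cosmetic) differences are that you state the contradiction as $2\le 1$ directly rather than first deriving $\norm{\patch{x}-x}\ge 2\gamma$, and you are a bit more explicit than the paper that the second generalization bound lives on the pushforward distribution $\patch{\cD\mid\hstar(x)\ne t}$ before the two events are aligned on a common $x\sim\cD\mid\hstar(x)\ne t$.
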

\pfScaleCapacity

\begin{lemma}[Lemma \ref{lemma:mem_vs_vc}]
\label{app:mem_vs_vc}
\lmMemVsVC
\end{lemma}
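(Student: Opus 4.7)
The lower bound is immediate: the memorization capacity is defined as a count of pairwise disjoint nonempty sets with a certain property, so it is a nonnegative integer and the bound $\memcap{\cX,\cD}{\cH}\ge 0$ holds vacuously (with equality achieved, e.g., by a singleton hypothesis class over a domain of measure $1$).

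For the upper bound, the plan is a direct contrapositive-style argument: I will show that memorizing $k$ disjoint nonempty irrelevant sets forces $\cH$ to shatter a set of $k$ points, hence $\vc{\cH}\ge k$. Concretely, suppose $\memcap{\cX,\cD}{\cH}\ge k$. By Definition~\ref{defn:mcap}, there exist some $h\in\cH$ and pairwise disjoint nonempty sets $X_1,\dots,X_k\subseteq\cX$ such that for every $b\in\{\pm 1\}^k$ there is a classifier $\hhat_b\in\cH$ with $\hhat_b(x)=b_i$ for all $x\in X_i$ (the agreement-with-$h$-on-$\cD$ condition is not needed for this direction). Since each $X_i$ is nonempty, pick any representative $x_i\in X_i$. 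The points $x_1,\dots,x_k$ are distinct because the $X_i$ are pairwise disjoint, and by construction, for each labeling $b\in\{\pm 1\}^k$ the hypothesis $\hhat_b\in\cH$ realizes exactly that labeling on $\{x_1,\dots,x_k\}$. Hence $\{x_1,\dots,x_k\}$ is shattered by $\cH$, so $\vc{\cH}\ge k$. Taking $k=\memcap{\cX,\cD}{\cH}$ yields the claim.

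There is no real obstacle here; the proof is essentially unpacking the two definitions and observing that the memorization property is strictly stronger than shattering a transversal of the sets $X_i$ (it additionally requires consistency with $h$ on $\cD$). One could optionally remark on tightness at both ends: the lower bound is tight when $\cH$ cannot disagree anywhere with $h$ on any measurable set (e.g., a constant class), and the upper bound is tight for, say, origin-containing halfspaces in $\R^d$ under a point-mass distribution at the origin, for which $\memcap{\cX,\cD}{\cH}=d=\vc{\cH}$.
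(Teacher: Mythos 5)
Your proof is correct and uses essentially the same idea as the paper's: to bound memorization capacity by VC dimension, pick one representative point from each of the $k$ pairwise disjoint memorized sets and observe that these points are shattered. The paper phrases the upper bound as a contradiction (assume $\memcap{\cX,\cD}{\cH} \ge \vc{\cH}+1$ and derive $\vc{\cH} \ge \vc{\cH}+1$) while you argue directly ($\memcap{\cX,\cD}{\cH} \ge k \Rightarrow \vc{\cH} \ge k$), but this is only a cosmetic difference; the paper's tightness examples for both ends also match the ones you sketch.
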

\pfMemVsVC

\begin{theorem}[Theorem \ref{thm:mcap_attack}]
\label{app:mcap_attack}
\thmMcapAttack
\end{theorem}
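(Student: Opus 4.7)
The plan is to extract, from the hypothesis $\memcap{\cX,\cD}{\hstar,\cH,\cC(\fadv(\hstar))}\ge 1$, both the patch function the adversary will use and a witness classifier that realizes the adversary's goal on the training set; then to invoke standard VC-based uniform convergence on the backdoored samples alone to upgrade realizability to a test-time guarantee.

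Concretely, first unpack the memorization hypothesis: there is a single nonempty set $X_1 \in \cC(\fadv(\hstar))$, so by the definition of $\cC(\fadv(\hstar))$ we obtain a specific $\patchw \in \fadv$ with $X_1 = \patchw\!\left(\suppv{\cD \mid \hstar(x)\neq t}\right)$, and moreover (by applying the memorization definition with $b_1 = t$) a classifier $\hhat_t \in \cH$ satisfying $\hhat_t(x) = t$ for every $x \in X_1$ and $\prvv{x\sim\cD}{\hhat_t(x) = \hstar(x)} = 1$. This $\patchw$ is the adversary's choice. The adversary then samples $\Sbackdoor$ as $m = \Omega(\epsadv^{-1}(\vc{\cH} + \log(1/\delta)))$ i.i.d.\ draws from $\cD \mid \hstar(x)\neq t$, applies $\patchw$, and relabels each as $t$. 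Since $\fadv$ is $1$-consistent with $\hstar$, every such example really is mislabeled (the first bullet of Problem \ref{problem:adv_general}).

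Next, I would verify that $\hhat_t$ itself achieves zero error on the combined training set $\Sclean \cup \Sbackdoor$, with probability $1$ over the draws. On clean points $(x,\hstar(x)) \in \Sclean$, the memorization condition $\prvv{x\sim\cD}{\hhat_t(x) = \hstar(x)} = 1$ gives correctness almost surely. On each backdoor point $(\patch{x}, t) \in \Sbackdoor$, the patched input lies in $X_1$ by construction, so $\hhat_t(\patch{x}) = t$. Thus the realizability hypothesis of empirical risk minimization is met almost surely, certifying the second bullet of Problem \ref{problem:adv_general}.

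Finally, to obtain the test-time guarantee on backdoored inputs regardless of $|\Sclean|$, I would apply the realizable-case Fundamental Theorem of PAC Learning (as in Theorem 6.8 of Shalev-Shwartz--Ben-David) to the distribution $\patchw(\cD \mid \hstar(x)\neq t)$ with the constant target $t$: realizability in this sub-problem is witnessed by $\hhat_t$, and $\Sbackdoor$ is an i.i.d.\ sample of size $m = \Omega(\epsadv^{-1}(\vc{\cH} + \log(1/\delta)))$ from it. Because the ERM output $\hhat$ attains zero training error on $\Sclean \cup \Sbackdoor$ (some zero-error classifier exists, namely $\hhat_t$), it in particular attains zero empirical error on $\Sbackdoor$, so uniform convergence yields $\prvv{x \sim \cD\mid\hstar(x)\neq t}{\hhat(\patch{x}) \neq t} \le \epsadv$ with probability at least $1-\delta$. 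The main technical care point is to notice that this argument never requires $\Sclean$ to be large: realizability transfers the clean-side generalization question to a non-issue for the adversary's success criterion, and the $\Sbackdoor$-only uniform-convergence bound gives the promised guarantee independently of $|\Sclean|$. The generalization to $k$ simultaneous patches (Appendix Theorem \ref{app:mcap_attack_general}) follows the same template, using the memorization definition to obtain $k$ disjoint sets and one classifier realizing the desired labeling on all of them, together with a union bound over $k+1$ uniform-convergence events (one per training subset) with failure probability $\delta/(k+1)$.
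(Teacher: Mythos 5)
Your proof is correct and follows essentially the same route as the paper's: unpack the memorization hypothesis to obtain a patch function $\patchw$ and a witness classifier $\hhat_t\in\cH$ that labels $X_1=\patchw\!\left(\suppv{\cD\mid\hstar(x)\neq t}\right)$ as $t$ while agreeing with $\hstar$ almost surely, observe that $\hhat_t$ realizes zero error on $\Sclean\cup\Sbackdoor$ with probability one, and then apply the realizable-case Fundamental Theorem of PAC Learning to the $\Sbackdoor$-only sub-problem to lift the ERM output's zero empirical error on $\Sbackdoor$ to a test-time guarantee. If anything, your treatment of the clause ``regardless of $|\Sclean|$'' is cleaner than the paper's write-up, which also attaches a lower bound on $|\Sclean|$ in order to additionally argue clean-distribution generalization, a conclusion that is not part of the success criterion in Problem \ref{problem:adv_general}.
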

\begin{theorem}[Generalization of Theorem \ref{thm:mcap_attack}]
\label{app:mcap_attack_general}
\thmMcapAttackGeneral
\end{theorem}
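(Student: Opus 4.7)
The plan is to generalize the proof of Theorem~\ref{thm:mcap_attack} essentially verbatim, with the key additions being the bookkeeping needed to match the $k$ memorizable sets to the $k$ distinct target labels and to handle the union bound over $k+1$ sampling/uniform-convergence events.

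First, I would apply the memorization capacity assumption: since $\memcap{\cX,\cD}{\hstar, \cH, \cC(\fadv(\hstar))} \ge k$, there exist $k$ pairwise disjoint nonempty sets $X_1,\dots,X_k \in \cC(\fadv(\hstar))$ such that every labeling $b \in \{\pm 1\}^k$ is realized by some $\hhat \in \cH$ that also agrees with $\hstar$ on a $\mu_\cD$-measure-$1$ subset of $\cX$. Because $\cC(\fadv(\hstar)) = \cC(\fadv(\hstar))_{-1} \cup \cC(\fadv(\hstar))_{+1}$, each $X_i$ can be written as $X_i \subseteq \patchw_i(\suppv{\cD \mid \hstar(x) \neq t_i})$ for some $\patchw_i \in \fadv$ and some $t_i \in \{\pm 1\}$; the adversary simply chooses the target-label vector $t$ to agree with this assignment, and then chooses the patch functions $\patchw_1,\dots,\patchw_k$. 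Specializing memorization to the labeling $b_i = t_i$ produces a witness $\hhat^{*} \in \cH$ that labels every $x \in X_i$ as $t_i$ for all $i$ while agreeing with $\hstar$ on $\cD$.

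Next, the adversary draws $(\Sbackdoor)_i \sim \patchw_i(\cD \mid \hstar(x) \neq t_i)^{m_i}$ with labels rewritten to $t_i$. Since $X_i$ carries full measure under $\patchw_i(\cD \mid \hstar(x) \neq t_i)$, with probability one every sample in $(\Sbackdoor)_i$ lies in $X_i$, so $\hhat^{*}$ achieves zero training error on $\Sclean \cup \bigcup_i (\Sbackdoor)_i$ simultaneously. This shows the ERM problem over $\cH$ is realizable on the corrupted training set, meeting the first two bullets of Problem~\ref{problem:adv_general}.

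Finally, for the probability-$1-\delta$ success condition, I would invoke the Fundamental Theorem of PAC Learning on each of the $k+1$ distributions (the clean distribution $\cD$ and the $k$ backdoored distributions $\patchw_i(\cD \mid \hstar(x) \neq t_i)$) with confidence $\delta/(k+1)$ each, which is exactly why the sample sizes $m_i = \Omega(\epsadv^{-1}(\vc{\cH} + \log(k/\delta)))$ suffice: the extra $\log k$ absorbs the union-bound cost. A union bound then gives that, except with probability $\delta$, every ERM $\hhat$ has error at most $\epsclean$ on $\cD$ and attack success rate at least $1-\epsadv$ on each backdoored distribution simultaneously.

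The only conceptual wrinkle beyond the single-target case is the first paragraph's matching step: one has to notice that the definition of $\cC$ as the union over both target labels means the assumed memorizable sets automatically come tagged with a label $t_i$ determined by which side of the union they came from, and the adversary simply reads off $(t_1,\dots,t_k)$ and the patches from this decomposition rather than choosing them freely. Beyond that, the argument is an essentially mechanical extension of Theorem~\ref{thm:mcap_attack}, so I do not expect any genuinely hard obstacle.
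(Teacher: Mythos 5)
Your proposal is correct and follows the same approach as the paper's proof: extract $k$ memorizable sets from the memorization-capacity assumption, note each is (up to measure one) the patched support of a conditional distribution, invoke the memorization witness to get a zero-error classifier on the corrupted training set, and then run uniform convergence on all $k+1$ distributions with confidence $\delta/(k+1)$ each so that the $\log(k/\delta)$ term absorbs the union bound. The observation you flag about the target-label vector being read off from which side of the union $\cC(\fadv(\hstar))_{-1}\cup\cC(\fadv(\hstar))_{+1}$ each memorizable set came from is a correct and slightly more careful reading of the bookkeeping than the paper's proof spells out, but it does not change the substance of the argument.
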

\pfMcapAttack

\begin{theorem}[Theorem \ref{thm:mcap_zero}]
\label{app:mcap_zero}
\thmMcapZero
\end{theorem}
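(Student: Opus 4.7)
The plan is to argue by contradiction: assume $\memcap{\cX,\cD}{\cH} = 0$ yet some adversary achieves Problem~\ref{problem:adv_general} with patch $\patchw \in \fadv$, target label $t$, and (with positive probability over the clean draws) causes ERM to return $\hhat \in \cH$ satisfying both the clean-generalization bound and $\prvv{x \sim \cD | \hstar(x) \neq t}{\hhat(\patch{x}) = t} \ge 1 - \epsadv$. The natural object to study is $X \coloneqq \patch{\suppv{\cD | \hstar(x) \neq t}}$, the subset of $\cX$ on which the adversary is trying to flip labels. I will split into two cases according to whether $\mu_\cD(X)$ is positive or zero, and derive a contradiction in each.

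First, suppose $\mu_\cD(X) > 0$. Because the attack is supposed to succeed for \emph{all} choices of $\epsclean, \epsadv, \delta$, I would pick these parameters so small that the fraction of $\cD$-measure on which $\hhat$ is forced to output $t$ (namely, essentially the whole of $X$, scaled by the conditional-class probability which is bounded below by the balancedness assumption $\prvv{x \sim \cD}{\hstar(x) = 1} \in [\nfrac{1}{50}, \nfrac{49}{50}]$) exceeds the allowed clean-error budget $\epsclean$. Concretely, choosing $\epsclean, \epsadv$ both less than a small constant multiple of $\mu_\cD(X)$ makes $\hhat$ disagree with $\hstar$ on a $\cD$-measure strictly larger than $\epsclean$, contradicting the generalization guarantee that the successful adversary inherits from ERM on $\Sclean$.

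Second, suppose $\mu_\cD(X) = 0$, so $X$ is an \emph{irrelevant} set in the sense of Definition~\ref{defn:mcap}. The success condition $\prvv{x \sim \cD | \hstar(x) \neq t}{\hhat(\patch{x}) = t} \ge 1-\epsadv > 0$ guarantees that there is at least one point $x^\circ \in X$ with $\hhat(x^\circ) = t \neq \hstar(x^\circ)$, while the clean-error guarantee plus $\mu_\cD(X) = 0$ implies that $\hhat$ agrees with $\hstar$ on a $\cD$-measure-$1$ subset of $\cX$. Thus the singleton $\{x^\circ\}$ witnesses a memorizable irrelevant set atop $\hstar$ (both labels $\pm 1$ on $\{x^\circ\}$ are achievable by some classifier in $\cH$ while agreeing with $\hstar$ almost everywhere: $\hstar$ itself realizes one labeling and $\hhat$ realizes the other). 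This forces $\memcap{\cX,\cD}{\hstar, \cH} \ge 1$ and hence $\memcap{\cX,\cD}{\cH} \ge 1$, contradicting the hypothesis.

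The main technical subtlety I expect to encounter is the measure-theoretic bookkeeping around what ``agrees with $\hstar$ on a measure-$1$ set'' means when we have only a high-probability generalization bound rather than exact equality. Choosing $\epsclean$ small enough and using that irrelevant sets contribute nothing to $\cD$-error handles this, but the argument has to be careful that the memorizing classifier $\hhat$ genuinely lies in $\cH$ (which it does, as the ERM output) and that the single-point set $\{x^\circ\}$ is nonempty and measurable (which it is). No heavy machinery beyond the definitions and the balancedness assumption on $\cD$ should be needed.
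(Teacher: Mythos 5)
Your proposal is correct and mirrors the paper's proof essentially step for step: same contradiction set-up, same set $X = \patch{\suppv{\cD \mid \hstar(x) \neq t}}$, same case split on $\mu_\cD(X)$, and in the $\mu_\cD(X)=0$ case the same construction of a singleton memorizable set witnessed by $\hstar$ and the ERM output $\hhat$. The measure-theoretic subtlety you flag (the ERM output only agrees with $\hstar$ up to $\epsclean$, whereas Definition~\ref{defn:mcap} asks for exact measure-$1$ agreement) is also present, and glossed over, in the paper's own argument.
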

\pfMcapZero

\begin{example}[Overparameterized Linear Classifiers (Example \ref{ex:linear_backdoor_mcap})]
\label{app:linear_backdoor_mcap}
\exLinearBackdoorMcap
\end{example}
\pfLinearBackdoorMcap

\begin{example}[Linear Classifiers Over Convex Bodies (Example \ref{ex:linear_backdoor_cvx})]
\label{app:linear_backdoor_cvx}
\exLinearBackdoorCvx
\end{example}
\pfLinearBackdoorCvx

\begin{example}[Sign Changes (Example \ref{ex:sign_changes})]
\label{app:sign_changes}
\exSignChanges
\end{example}
\pfSignChanges

\begin{theorem}[Theorem \ref{thm:certify_backdoor}]
\label{app:certify_backdoor}
\thmCertifyBackdoor
\end{theorem}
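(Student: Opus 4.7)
The plan is to design an algorithm that computes $\hhat \coloneqq \argmin_{h \in \cH} \robustloss(h, S)$ and then either returns $\hhat$ if $\robustloss(\hhat, S)$ is small (say $\lesssim \epsclean$), or otherwise rejects the training set as corrupted. The proof then proceeds by a case split on whether the empirical robust loss minimizer achieves small loss on $S$.

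In the ``accept'' case, where $\min_{h \in \cH} \robustloss(h, S) \lesssim \epsclean$, the conclusion follows directly from uniform convergence. Since $\vc{\robustloss^\cH} < \infty$ and we have drawn $\Omega(\epsclean^{-2}(\vc{\robustloss^\cH} + \log(1/\delta)))$ samples, standard VC-style generalization for the binary-valued robust loss class guarantees that $\robustloss(\hhat, \cD) \lesssim \epsclean$ with probability $1-\delta$. This delivers a backdoor-robust classifier and concludes the first branch of the argument.

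The interesting case is when the training set is substantially poisoned. I would argue the contrapositive: if any $\hhat \in \cH$ achieves $\robustloss(\hhat, S) \lesssim \epsclean$, then few backdoor examples were present. Assume such an $\hhat$ exists. Since $\Sbackdoor \subseteq S$ consists of pairs $(\patch{x}, t)$ with $\hstar(x) \neq t$, an empirical robust loss of $\epsclean$ forces $\hhat(\patch{x}) = t$ on essentially all of $\Sbackdoor$. On the other hand, since the robust loss dominates the $0/1$ loss, uniform convergence on the clean portion gives $\robustloss(\hhat, \cD) \lesssim \epsclean$, and conditioning on the subpopulation $\cD \mid \hstar(x) \neq t$ (which has constant mass by the class-balance assumption) yields $\prvv{(x,y)\sim\cD \mid y \neq t}{\sup_\patchw \indicator{\hhat(\patch{x}) \neq y}} \lesssim \epsclean$. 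But the analogous uniform convergence applied to $\Sbackdoor$ implies $\prvv{(x,y)\sim\cD \mid y\neq t}{\hhat(\patch{x}) = t} \ge 1-\epsadv$, which is exactly the event that the robust loss \emph{equals} $1$ on that subdistribution. Chaining the two bounds produces $\epsclean \gtrsim 1-\epsadv$, contradicting the choice of $\epsclean, \epsadv$ small. Hence every $\hhat$ has $\robustloss(\hhat, S) \gg \epsclean$ and the algorithm rejects.

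The main obstacle I anticipate is the bookkeeping in the ``reject'' case: one must invoke uniform convergence on two different subsamples (the clean part of $S$ drawn from $\cD$ and $\Sbackdoor$ drawn from $\patch{\cD \mid \hstar(x)\neq t}$), translate between the unconditional robust loss bound on $\cD$ and its restriction to the class $y\neq t$ using class balance, and then combine these into a single contradiction. Once these uniform convergence statements are correctly instantiated and aligned, the case analysis closes cleanly and yields the claimed sample complexity.
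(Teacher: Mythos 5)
Your proposal matches the paper's proof essentially exactly: the same algorithm (minimize the empirical robust loss, accept if small, reject otherwise), the same two-case analysis, the same appeal to uniform convergence for $\robustloss^\cH$ in the accept branch, and the same chaining contradiction $\epsclean \gtrsim 1 - \epsadv$ in the reject branch obtained by applying uniform convergence separately to the clean portion (drawn from $\cD$) and to $\Sbackdoor$ (drawn from $\patch{\cD \mid \hstar(x)\neq t}$) together with the class-balance assumption. The only cosmetic difference is your ``contrapositive'' framing; logically it is the same proof by contradiction the paper uses.
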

\pfCertifyBackdoor

\begin{theorem}[Filtering Implies Generalization (Theorem \ref{thm:filter_to_generalize})]
\label{app:filter_to_generalize}
\thmFilterToGeneralize
\end{theorem}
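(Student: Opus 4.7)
The natural algorithm is to run the hypothesized filtering procedure on $S$ to obtain $S' = \Sclean' \cup \Sbackdoor'$ with $|\Sclean'| \ge (1-\epsclean)|\Sclean|$ and $\min_{h\in\cH}\robustloss(h,S') \lesssim \epsclean$, then output $\hhat \coloneqq \argmin_{h\in\cH}\robustloss(h,S')$. All that remains is to argue $\robustloss(\hhat,\cD)\lesssim\epsclean$, and the plan is to pass from $\cD$ to $\Sclean$ by uniform convergence, from $\Sclean$ to $\Sclean'$ by a simple deletion-sensitivity bound, and from $\Sclean'$ to $S'$ using that $\Sbackdoor'$ is only a small fraction of $S'$.

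First I would invoke uniform convergence for the robust loss class $\robustloss^\cH$ (whose VC dimension is assumed finite): since $|\Sclean| = \Omega(\epsclean^{-2}(\vc{\robustloss} + \log(1/\delta)))$ and $\Sclean$ is an i.i.d.\ sample from $\cD$, with probability $1-\delta$ we have $|\robustloss(h,\cD) - \robustloss(h,\Sclean)|\le\epsclean$ uniformly over $h\in\cH$. Next, I would prove a deletion lemma: if one deletes at most $c$ points from a set of $b$ points, then for any $h$ the robust-loss on the surviving set lies within $c/b$ of the original robust-loss. This follows from the elementary inequalities $\max\{0,(a-c)/(b-c)\}$ and $\min\{1, a/(b-c)\}$ both belong to $[a/b \pm c/b]$, applied with $a$ equal to the number of points in $\Sclean$ on which $h$ incurs robust loss, $b = |\Sclean|$, and $c \le \epsclean|\Sclean|$. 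This gives $|\robustloss(h,\Sclean') - \robustloss(h,\Sclean)|\le\epsclean$ uniformly, and combining with uniform convergence yields $|\robustloss(h,\Sclean') - \robustloss(h,\cD)|\le 2\epsclean$ for every $h\in\cH$.

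Finally I would transfer from $S'$ to $\Sclean'$. Since $|\Sclean'|\ge(1-\epsclean)|\Sclean|\ge(1-\epsclean)(1-\alpha)|S|$ and $|\Sbackdoor'|\le|\Sbackdoor|\le\alpha|S|$, the fraction of $\Sclean'$ inside $S'$ is at least $(1-\epsclean)(1-\alpha)/((1-\epsclean)(1-\alpha)+\alpha)$, which given $\alpha\le 1/3$ and $\epsclean\le 1/10$ is bounded below by a universal constant. Consequently $\robustloss(\hhat,\Sclean')\le \robustloss(\hhat,S')/[(1-\epsclean)(1-\alpha)/((1-\epsclean)(1-\alpha)+\alpha)] \lesssim \epsclean$ because $\robustloss(\hhat, S')\lesssim\epsclean$ by the filtering guarantee. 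Chaining with the previous step yields $\robustloss(\hhat,\cD)\lesssim\epsclean$, as desired.

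The main obstacle, such as it is, is the bookkeeping in the deletion lemma: one has to verify that the robust loss restricted to $\Sclean'$ is a valid empirical average whose value differs from that on $\Sclean$ by at most the deletion fraction, regardless of whether the deleted points were ones on which $h$ erred. The elementary algebraic bound above handles both cases (deleting "good" and "bad" points) symmetrically, and once it is in place all other steps are applications of uniform convergence and the quantitative assumptions $\alpha\le 1/3$, $\epsclean\le 1/10$. No new probabilistic ingredients beyond a single use of uniform convergence are needed, since the filtering algorithm is treated as a black box whose output already comes with a loss guarantee on $S'$.
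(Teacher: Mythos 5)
Your proposal is correct and follows the same three-stage decomposition as the paper's proof: uniform convergence from $\cD$ to $\Sclean$, a deletion lemma from $\Sclean$ to $\Sclean'$ (using exactly the same algebraic fact that $\max\{0,(a-c)/(b-c)\}$ and $\min\{1,a/(b-c)\}$ both lie in $[a/b \pm c/b]$), and a density argument transferring from $S'$ to $\Sclean'$. In the last step, your bound $\nicefrac{|\Sclean'|}{|S'|} \ge \nicefrac{(1-\epsclean)(1-\alpha)}{((1-\epsclean)(1-\alpha)+\alpha)}$, which is a universal constant given $\alpha\le\nicefrac13$ and $\epsclean\le\nicefrac1{10}$, is actually stated more carefully than the paper's own version, which decomposes $\robustloss(\hhat,S')$ with weights $(1-C\epsclean, C\epsclean)$ as if the surviving backdoor fraction were $O(\epsclean)$ -- a claim the filtering guarantee does not justify, since it says nothing about how many backdoor points are removed. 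Your constant-fraction bound is all that is available and all that is needed, so your write-up is both correct and slightly tighter in its bookkeeping.
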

\pfFilterToGeneralize

\begin{theorem}[Generalization Implies Filtering (Theorem \ref{thm:generalize_to_filter})]
\label{app:generalize_to_filter}
\thmGeneralizeToFilter
\end{theorem}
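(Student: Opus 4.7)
The plan is to mimic the cross-validation / splitting idea used for agnostic-to-realizable reductions: split $S$ in half, run the robust generalization oracle on each half to get hypotheses $\hhat_L, \hhat_R$, and then use each hypothesis to cross-filter the opposite half, discarding points on which the classifier errs. This is natural because the backdoor points carry labels opposite to their true labels, while the generalization oracle produces classifiers that (by the robust-loss guarantee) agree with $\hstar$ on $\cD$ with high probability; so backdoor points should be precisely those marked for deletion. I would also add an early-return branch: if $\alpha$ is so small that $\hstar$ itself already achieves robust empirical loss $\lesssim \epsclean$ on $S$, just return $S$ and let ERM on the robust loss handle the rest by uniform convergence.

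The first real step is to show the split preserves outlier balance. Assign each point independently to $S_L$ or $S_R$; by a Chernoff bound (the outlier-indicator variables are negatively associated when splitting into equal halves, but for a Bernoulli split they are independent, so standard Chernoff applies), with probability $1-\delta$ each side has outlier fraction in $[\tfrac{\alpha}{2}, \tfrac{3\alpha}{2}] \subseteq [0, 2\alpha]$, provided the total number of outliers $\alpha m$ is at least $\Omega(\log(1/\delta))$. This is why we need the early return: when $\alpha m$ is too small, we cannot control the partition, but then $\alpha$ itself is so tiny that the ``cheap'' branch succeeds. The second step is to invoke the generalization oracle on $S_L$ and $S_R$ to get $\hhat_L, \hhat_R$ with $\robustloss(\hhat_L, \cD), \robustloss(\hhat_R, \cD) \le \epsclean$ each with failure probability $\delta$; union-bound the four failure events.

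The third step analyzes cross-filtering. Since $\robustloss(\hhat_L, \cD) \le \epsclean$ and we assume the target label has constant class probability, $\hhat_L$ agrees with $\hstar$ on all but an $\epsclean$-fraction (adjusted by the class-probability constant) of points drawn from $\cD \mid y \neq t$. Applied to the at-least-$\tfrac{\alpha m}{2}$ backdoor points in $S_R$ (whose true labels are $\neq t$ but stamped as $t$), a Chernoff argument shows that all but an $O(\epsclean \alpha m)$ subset of backdoor points are marked for deletion. Symmetrically, $\hhat_L$ is wrong on at most an $\epsclean$-fraction of clean points in $S_R$, so the number of deleted clean points is $O(\epsclean m)$. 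Both guarantees hold with probability $1-O(\delta)$ by Chernoff, using the sample-size lower bound on $\abs{\Sclean}$.

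The final step is to show $\hhat = \argmin_{h \in \cH} \robustloss(h, S')$ satisfies $\robustloss(\hhat, \cD) \lesssim \epsclean$. After filtering, $S'$ contains $\ge (1 - O(\epsclean))$ fraction of the original $\Sclean$ and only $O(\epsclean \alpha m)$ outliers, so $\hstar$ has empirical robust loss $O(\epsclean \alpha / (1-\epsclean)) \le 2\epsclean$ on $S'$, hence $\robustloss(\hhat, S') \le 2\epsclean$. To generalize off $S'$ onto $\cD$, I follow the argument already used in the proof of Theorem~\ref{thm:filter_to_generalize}: uniform convergence gives $|\robustloss(h, \Sclean) - \robustloss(h, \cD)| \le \epsclean$ for all $h \in \cH$; then a simple $\frac{a}{b}$ vs.\ $\frac{a-c}{b-c}$ computation bounds $|\robustloss(h, \Sclean') - \robustloss(h, \Sclean)|$ by $O(\epsclean/(1-\alpha))$ since only an $O(\epsclean)$ fraction of $\Sclean$ was deleted. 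Combining with a lower-bound on the contribution of the clean portion of $S'$ to $\robustloss(\hhat, S')$ gives $\robustloss(\hhat, \cD) \lesssim \epsclean$. The $1 - 7\delta$ failure probability comes from union-bounding the partition balance event, the two oracle failures, the four Chernoff events for correctly marking/not over-deleting on each side, and the uniform convergence event. The main obstacle I expect is handling the edge case where $\alpha m$ is too small to invoke Chernoff for partitioning — that is precisely what the early-return step is designed to absorb, and checking that its threshold $C\epsclean$ is consistent with both $\alpha m = \Theta(\log(1/\delta)/\epsclean)$ and the sample complexity lower bound requires some care.
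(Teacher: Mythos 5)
Your proposal matches the paper's proof essentially step for step: the same early-return branch keyed to the robust empirical loss of $S$, the same random split into $S_L, S_R$ with a Chernoff argument for outlier balance, the same cross-filtering by running each half's learned hypothesis on the other half, and the same final generalization argument (bounding $\lvert\robustloss(h,\Sclean') - \robustloss(h,\Sclean)\rvert$ via the $\frac{a}{b}$ vs.\ $\frac{a-c}{b-c}$ computation and uniform convergence). The only cosmetic deviation is that you propose an independent Bernoulli split rather than the paper's exact equal-halves split (for which the paper correctly invokes negative association to still apply Chernoff), and you are slightly more explicit than the paper about the class-balance constant when converting the robust-loss guarantee on $\cD$ into a guarantee on the conditional $\cD \mid \hstar(x) \neq t$; neither changes the substance of the argument.
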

\pfGeneralizeToFilter

\newpage

\section{Numerical Trials}
\label{sec:numerical_trials}

In this section, we present a practical use case for Theorem \ref{thm:certify_backdoor} (Appendix Theorem \ref{app:certify_backdoor}).

Recall that, at a high level, Theorem \ref{thm:certify_backdoor} states that under certain assumptions, minimizing robust loss on the corrupted training set will either:
\begin{enumerate}
    \item Result in a low robust loss, which will imply from uniform convergence that the resulting classifier is robust to adversarial (and therefore backdoor) perturbations.
    \item Result in a high robust loss, which will be noticeable at training time.
\end{enumerate}
This suggests that practitioners can use adversarial training on a training set which may be backdoored and use the resulting robust loss value to make a decision about whether to deploy the classifier. To empirically validate this approach, we run this procedure (i.e., some variant of Algorithm \ref{alg:certify_backdoor}) on the MNIST handwritten digit classification task\footnote{We select MNIST because one can achieve a reasonably robust classifier on the clean version of the dataset. This helps us underscore the difference between the robust loss at train time with and without backdoors in the training set. Moreover, this allows us to explore a setting where our assumptions in Theorem \ref{thm:certify_backdoor} might not hold -- in particular, it's not clear that we have enough data to attain uniform convergence for the binary loss and $\robustloss$, and it's not clear how to efficiently minimize $\robustloss$.}(see \cite{lecun-mnisthandwrittendigit-2010}). Here, the learner wishes to recover a neural network robust to small $\ell_\infty$ perturbations and where the adversary is allowed to make a small $\ell_\infty$-norm watermark.

\paragraph{Disclaimers} As far as we are aware, the MNIST dataset does not contain personally identifiable information or objectionable content. The MNIST dataset is made available under the terms of the Creative Commons Attribution-Share Alike 3.0 License.

\paragraph{Reproducibility} We have included all the code to generate these results in the supplementary material. Our code can be found at \url{https://github.com/narenmanoj/mnist-adv-training}.\footnote{Some of our code is derived from the GitHub repositories \url{https://github.com/MadryLab/backdoor_data_poisoning} and \url{https://github.com/skmda37/Adversarial_Machine_Learning_Tensorflow}.}. Our code is tested and working with TensorFlow 2.4.1, CUDA 11.0, NVIDIA RTX 2080Ti, and the Google Colab GPU runtime.

\subsection{MNIST Using Neural Networks}
\label{subs:exp_mnist}

\subsubsection{Scenario}

Recall that the MNIST dataset consists of $10$ classes, where each corresponds to a handwritten digit in $\inbraces{0, \dots, 9}$. The classification task here is to recover a classifier that, upon receiving an image of a handwritten digit, correctly identifies which digit is present in the image.

In our example use case, an adversary picks a target label $t \in \inbraces{0, \dots, 9}$ and a small additive watermark. If the true classifier is $\hstar(x)$, then the adversary wants the learner to find a classifier $\hhat$ maximizing $\prvv{x\sim\cD | \hstar(x) \neq t}{\hhat(x) = t}$. In other words, this can be seen as a ``many-to-one'' attack, where the adversary is corrupting examples whose labels are not $t$ in order to induce a classification of $t$. The adversary is allowed to inject some number of examples into the training set such that the resulting fraction of corrupted examples in the training set is at most $\alpha$.

We will experimentally demonstrate that the learner can use the intuition behind Theorem \ref{thm:certify_backdoor} (Appendix Theorem \ref{app:certify_backdoor}) to either recover a reasonably robust classifier or detect the presence of significant corruptions in the training set. Specifically, the learner can optimize a proxy for the robust loss via adversarial training using $\ell_\infty$ bounded adversarial examples, as done by \cite{Madry2017-ep}.

\paragraph{Instantiation of Relevant Problem Parameters} Let $\cH$ be the set of neural networks with architecture as shown in Table \ref{table:architecture}. Let $\cX$ be the set of images of handwritten digits; we represent these as vectors in $\insquare{0,1}^{784}$. Define $\fadv$ below:
$$\inbraces{\patch{x} \suchthat \norm{x - \patch{x}}_\infty \le 0.3 \text{ and } \patch{x} - x = \mathsf{pattern}}$$
where $\mathsf{pattern}$ is the shape of the backdoor (we use an ``X'' shape in the top left corner of the image, inspired by \cite{Tran2018-bf}). We let the maximum $\ell_\infty$ perturbation be at most $0.3$ since this parameter has been historically used in training and evaluating robust networks on MNIST (see \cite{Madry2017-ep}). In our setup, we demonstrate that these parameters suffice to yield a successful backdoor attack on a vanilla training procedure (described in greater detail in a subsequent paragraph).

Although it is not clear how to efficiently exactly calculate and minimize $\robustloss$, we will approximate $\robustloss$ by calculating $\ell_\infty$-perturbed adversarial examples using a Projected Gradient Descent (PGD) attack. To minimize $\robustloss$, we use adversarial training as described in \cite{Madry2017-ep}. Generating Table \ref{table:mnist_full} takes roughly 155 minutes using our implementation of this procedure with TensorFlow 2.4.1 running on the GPU runtime freely available via Google Colab. We list all our relevant optimization and other experimental parameters in Table \ref{table:exp_params}.

\begin{table}[h]
\caption{Neural network architecture used in experiments. We implemented this architecture using the Keras API of TensorFlow 2.4.1.\label{table:architecture}}
\centering
\begin{tabular}{|l|l|}
\hline
\multicolumn{1}{|c|}{\textbf{Layer}}   & \multicolumn{1}{c|}{\textbf{Parameters}}                                                                                       \\ \hline
\texttt{Conv2D}       & \texttt{filters=32}, \texttt{kernel\_size=(3,3)},\texttt{activation='relu'} \\
\texttt{MaxPooling2D} & \texttt{pool\_size=(2,2)}                                                                                     \\
\texttt{Conv2D}       & \texttt{filters=64},\texttt{kernel\_size=(3,3)},\texttt{activation='relu'}  \\
\texttt{Flatten}      &                                                                                                                                \\
\texttt{Dense}        & \texttt{units=1024},\texttt{activation='relu'}                                               \\
\texttt{Dense}        & \texttt{units=10},\texttt{activation='softmax'}                                              \\ \hline
\end{tabular}
\end{table}

\begin{table}[h]
\caption{Experimental hyperparameters. We made no effort to optimize these hyperparameters; indeed, many of these are simply the default arguments for the respective TensorFlow functions.\label{table:exp_params}}
\centering
\begin{tabular}{|l|l|}
\hline
\multicolumn{1}{|c|}{\textbf{Property}} & \multicolumn{1}{c|}{\textbf{Details}}                                       \\ \hline
Epochs                                  & 2                                                                           \\
Validation Split                        & None                                                                        \\
Batch Size                              & 32                                                                          \\
Loss                                    & Sparse Categorical Cross Entropy                                            \\
Optimizer                               & RMSProp (step size = $0.001$, $\rho$ = 0.9, momentum = 0, $\eps = 10^{-7}$) \\
NumPy Random Seed                       & 4321                                                                        \\
TensorFlow Random Seed                  & 1234                                                                        \\
PGD Attack                              & $\eps = 0.3$, step size = $0.01$, iterations = $40$, restarts = $10$        \\ \hline
\end{tabular}
\end{table}

\paragraph{Optimization Details} See Table \ref{table:exp_params} for all relevant hyperparameters and see Table \ref{table:architecture} for the architecture we use.

For the ``Vanilla Training'' procedure, we use no adversarial training and simply use our optimizer to minimize our loss directly. For the ``PGD-Adversarial Training'' procedure, we use adversarial training with a PGD adversary.

In our implementation of adversarial training, we compute adversarial examples for each image in each batch using the PGD attack and we minimize our surrogate loss on this new batch. This is sufficient to attain a classifier with estimated robust loss of around $0.08$ on an uncorrupted training set. 

\subsubsection{Goals and Evaluation Methods}

We want to observe the impact of adding backdoor examples and the impact of running adversarial training on varied values of $\alpha$ (the fraction of the training set that is corrupted). 

To do so, we fix a value for $\alpha$ and a target label $t$ and inject enough backdoor examples such that exactly an $\alpha$ fraction of the resulting training set contains corrupted examples. Then, we evaluate the train and test robust losses on the training set with and without adversarial training to highlight the difference in robust loss observable to the learner. As sanity checks, we also include binary losses and test set metrics. For the full set of metrics we collect, see Table \ref{table:mnist_full}.

To avoid out-of-memory issues when computing the robust loss on the full training set (roughly $60000$ training examples and their adversarial examples), we sample $5000$ training set examples uniformly at random from the full training set and compute the robust loss on these examples. By Hoeffding's Inequality (see \cite{Vershynin2018-xn}), this means that with probability $0.99$ over the choice of the subsampled training set, the difference between our reported statistic and its population value is at most $\sim 0.02$.

\subsubsection{Results and Discussion}

\begin{table}[H]
\caption{Results with MNIST with a target label $t = 0$ and backdoor pattern ``X.'' In each cell, the top number represents the respective value when the network was trained without any kind of robust training, and the bottom number represents the respective value when the network was trained using adversarial training as per \cite{Madry2017-ep}. For example, at $\alpha = 0.05$, for Vanilla Training, the training $0-1$ loss is only $0.01$, but the training robust loss is $1.00$, whereas for PGD-Adversarial Training, the training $0-1$ loss is $0.07$ and the training robust loss is $0.13$. The Backdoor Success Rate is our estimate of $\prvv{x \sim \cD || y \neq t}{\patch{x} = t}$, which may be less than the value of the robust loss.\label{table:mnist_full}}
\centering
\begin{tabular}{|c|c|l|l|l|l|l|}
\hline
\multicolumn{2}{|c|}{$\alpha$}                                    & 0.00 & 0.05 & 0.15 & 0.20 & 0.30 \\ \hline
\multirow{2}{*}{Training $0-1$ Loss}   & Vanilla Training         & 0.01 & 0.01 & 0.01 & 0.01 & 0.01 \\
                                       & PGD-Adversarial Training & 0.02 & 0.07 & 0.17 & 0.22 & 0.33 \\ \hline
\multirow{2}{*}{Training Robust Loss}  & Vanilla Training         & 1.00 & 1.00 & 1.00 & 1.00 & 1.00 \\
                                       & PGD-Adversarial Training & 0.09 & 0.13 & 0.24 & 0.27 & 0.41 \\ \hline
\multirow{2}{*}{Testing $0-1$ Loss}    & Vanilla Training         & 0.01 & 0.01 & 0.01 & 0.02 & 0.01 \\
                                       & PGD-Adversarial Training & 0.02 & 0.03 & 0.03 & 0.03 & 0.06 \\ \hline
\multirow{2}{*}{Testing Robust Loss}   & Vanilla Training         & 1.00 & 1.00 & 1.00 & 1.00 & 1.00 \\
                                       & PGD-Adversarial Training & 0.09 & 0.09 & 0.11 & 0.10 & 0.19 \\ \hline
\multirow{2}{*}{Backdoor Success Rate} & Vanilla Training         & 0.00 & 1.00 & 1.00 & 1.00 & 1.00 \\
                                       & PGD-Adversarial Training & 0.00 & 0.00 & 0.01 & 0.00 & 0.05 \\ \hline
\end{tabular}
\end{table}

See Table \ref{table:mnist_full} for sample results from our trials. Over runs of the same experiment with varied target labels $t$, we attain similar results; see Section \ref{subs:full_tables} for the full results. We now discuss the key takeaways from this numerical trial.

\paragraph{Training Robust Loss Increases With $\alpha$} Observe that our proxy for $\robustloss(\hhat, S)$ increases as $\alpha$ increases. This is consistent with the intuition from Theorem \ref{thm:certify_backdoor} in that a highly corrupted training set is unlikely to have low robust loss. Hence, if the learner expects a reasonably low robust loss and fails to observe this during training, then the learner can reject the training set, particularly at high $\alpha$.

\paragraph{Smaller $\alpha$ and Adversarial Training Defeats Backdoor} On the other hand, notice that at smaller values of $\alpha$ (particularly $\alpha \le 0.20$), the learner can still recover a classifier with minimal decrease in robust accuracy. Furthermore, there is not an appreciable decrease in natural accuracy either when using adversarial training on a minimally corrupted training set. Interestingly, even at large $\alpha$, the test-time robust loss and binary losses are not too high when adversarial training was used. Furthermore, the test-time robust loss attained at $\alpha > 0$ is certainly better than that obtained when adversarial training is not used, even at $\alpha = 0$. Hence, although the practitioner cannot certify that the learned model is robust without a clean validation set, the learned model does tend to be fairly robust.

\paragraph{Backdoor Is Successful With Vanilla Training} Finally, as a sanity check, notice that when we use vanilla training, the backdoor trigger induces a targeted misclassification very reliably, even at $\alpha = 0.05$. Furthermore, the training and testing error on clean data is very low, which indicates that the learner would have failed to detect the fact that the model had been corrupted had they checked only the training and testing errors before deployment.

\paragraph{Prior Empirical Work} The work of \cite{Borgnia2021-vk} empirically shows the power of data augmentation in defending against backdoored training sets. Although their implementation of data augmentation is different from ours\footnote{Observe that our implementation of adversarial training can be seen as a form of adaptive data augmentation.}, their work still demonstrates that attempting to minimize some proxy for the robust loss can lead to a classifier robust to backdoors at test time. However, our evaluation also demonstrates that classifiers trained using adversarial training can be robust against test-time adversarial attacks, in addition to being robust to train-time backdoor attacks. Furthermore, our empirical results indicate that the train-time robust loss can serve as a good indicator for whether a significant number of backdoors are in the training set.

\newpage

\subsubsection{Results For All Target Labels}
\label{subs:full_tables}

Here, we present tables of the form of Table \ref{table:mnist_full} for all choices of target label $t \in \inbraces{0, \dots, 9}$. Notice that the key takeaways remain the same across all target labels.

\begin{table}[H]
\caption{Results with MNIST with a target label $t = 0$ and backdoor pattern ``X.''}
\centering
\begin{tabular}{|c|c|l|l|l|l|l|}
\hline
\multicolumn{2}{|c|}{$\alpha$}                                    & 0.00 & 0.05 & 0.15 & 0.20 & 0.30 \\ \hline
\multirow{2}{*}{Training $0-1$ Loss}   & Vanilla Training         & 0.01 & 0.01 & 0.01 & 0.01 & 0.01 \\
                                       & PGD-Adversarial Training & 0.02 & 0.07 & 0.17 & 0.22 & 0.33 \\ \hline
\multirow{2}{*}{Training Robust Loss}  & Vanilla Training         & 1.00 & 1.00 & 1.00 & 1.00 & 1.00 \\
                                       & PGD-Adversarial Training & 0.09 & 0.13 & 0.24 & 0.27 & 0.41 \\ \hline
\multirow{2}{*}{Testing $0-1$ Loss}    & Vanilla Training         & 0.01 & 0.01 & 0.01 & 0.02 & 0.01 \\
                                       & PGD-Adversarial Training & 0.02 & 0.03 & 0.03 & 0.03 & 0.06 \\ \hline
\multirow{2}{*}{Testing Robust Loss}   & Vanilla Training         & 1.00 & 1.00 & 1.00 & 1.00 & 1.00 \\
                                       & PGD-Adversarial Training & 0.09 & 0.09 & 0.11 & 0.10 & 0.19 \\ \hline
\multirow{2}{*}{Backdoor Success Rate} & Vanilla Training         & 0.00 & 1.00 & 1.00 & 1.00 & 1.00 \\
                                       & PGD-Adversarial Training & 0.00 & 0.00 & 0.01 & 0.00 & 0.05 \\ \hline
\end{tabular}
\end{table}

\begin{table}[H]
\caption{Results with MNIST with a target label $t = 1$ and backdoor pattern ``X.''}
\centering
\begin{tabular}{|c|c|l|l|l|l|l|}
\hline
\multicolumn{2}{|c|}{$\alpha$}                                    & 0.00 & 0.05 & 0.15 & 0.20 & 0.30 \\ \hline
\multirow{2}{*}{Training $0-1$ Loss}   & Vanilla Training         & 0.01 & 0.01 & 0.01 & 0.01 & 0.01 \\
                                       & PGD-Adversarial Training & 0.02 & 0.07 & 0.17 & 0.23 & 0.32 \\ \hline
\multirow{2}{*}{Training Robust Loss}  & Vanilla Training         & 1.00 & 1.00 & 1.00 & 1.00 & 1.00 \\
                                       & PGD-Adversarial Training & 0.08 & 0.12 & 0.23 & 0.32 & 0.38 \\ \hline
\multirow{2}{*}{Testing $0-1$ Loss}    & Vanilla Training         & 0.01 & 0.01 & 0.01 & 0.01 & 0.01 \\
                                       & PGD-Adversarial Training & 0.02 & 0.02 & 0.03 & 0.04 & 0.05 \\ \hline
\multirow{2}{*}{Testing Robust Loss}   & Vanilla Training         & 1.00 & 1.00 & 1.00 & 1.00 & 1.00 \\
                                       & PGD-Adversarial Training & 0.09 & 0.08 & 0.11 & 0.13 & 0.14 \\ \hline
\multirow{2}{*}{Backdoor Success Rate} & Vanilla Training         & 0.00 & 1.00 & 1.00 & 1.00 & 1.00 \\
                                       & PGD-Adversarial Training & 0.00 & 0.00 & 0.00 & 0.02 & 0.03 \\ \hline
\end{tabular}
\end{table}

\begin{table}[H]
\caption{Results with MNIST with a target label $t = 2$ and backdoor pattern ``X.''}
\centering
\begin{tabular}{|c|c|l|l|l|l|l|}
\hline
\multicolumn{2}{|c|}{$\alpha$}                                    & 0.00 & 0.05 & 0.15 & 0.20 & 0.30 \\ \hline
\multirow{2}{*}{Training $0-1$ Loss}   & Vanilla Training         & 0.01 & 0.01 & 0.01 & 0.01 & 0.00 \\
                                       & PGD-Adversarial Training & 0.02 & 0.07 & 0.17 & 0.22 & 0.32 \\ \hline
\multirow{2}{*}{Training Robust Loss}  & Vanilla Training         & 1.00 & 1.00 & 1.00 & 1.00 & 1.00 \\
                                       & PGD-Adversarial Training & 0.08 & 0.13 & 0.23 & 0.28 & 0.38 \\ \hline
\multirow{2}{*}{Testing $0-1$ Loss}    & Vanilla Training         & 0.01 & 0.02 & 0.01 & 0.02 & 0.01 \\
                                       & PGD-Adversarial Training & 0.02 & 0.03 & 0.03 & 0.03 & 0.05 \\ \hline
\multirow{2}{*}{Testing Robust Loss}   & Vanilla Training         & 1.00 & 1.00 & 1.00 & 1.00 & 1.00 \\
                                       & PGD-Adversarial Training & 0.09 & 0.09 & 0.10 & 0.10 & 0.14 \\ \hline
\multirow{2}{*}{Backdoor Success Rate} & Vanilla Training         & 0.00 & 1.00 & 1.00 & 1.00 & 1.00 \\
                                       & PGD-Adversarial Training & 0.00 & 0.00 & 0.00 & 0.01 & 0.04 \\ \hline
\end{tabular}
\end{table}

\begin{table}[H]
\caption{Results with MNIST with a target label $t = 3$ and backdoor pattern ``X.''}
\centering
\begin{tabular}{|c|c|l|l|l|l|l|}
\hline
\multicolumn{2}{|c|}{$\alpha$}                                    & 0.00 & 0.05 & 0.15 & 0.20 & 0.30 \\ \hline
\multirow{2}{*}{Training $0-1$ Loss}   & Vanilla Training         & 0.01 & 0.01 & 0.01 & 0.01 & 0.01 \\
                                       & PGD-Adversarial Training & 0.02 & 0.07 & 0.18 & 0.23 & 0.32 \\ \hline
\multirow{2}{*}{Training Robust Loss}  & Vanilla Training         & 1.00 & 1.00 & 1.00 & 1.00 & 1.00 \\
                                       & PGD-Adversarial Training & 0.08 & 0.13 & 0.23 & 0.28 & 0.38 \\ \hline
\multirow{2}{*}{Testing $0-1$ Loss}    & Vanilla Training         & 0.01 & 0.01 & 0.01 & 0.02 & 0.02 \\
                                       & PGD-Adversarial Training & 0.02 & 0.02 & 0.03 & 0.04 & 0.05 \\ \hline
\multirow{2}{*}{Testing Robust Loss}   & Vanilla Training         & 1.00 & 1.00 & 1.00 & 1.00 & 1.00 \\
                                       & PGD-Adversarial Training & 0.09 & 0.09 & 0.11 & 0.11 & 0.13 \\ \hline
\multirow{2}{*}{Backdoor Success Rate} & Vanilla Training         & 0.00 & 1.00 & 1.00 & 1.00 & 1.00 \\
                                       & PGD-Adversarial Training & 0.00 & 0.01 & 0.00 & 0.01 & 0.03 \\ \hline
\end{tabular}
\end{table}

\begin{table}[H]
\caption{Results with MNIST with a target label $t = 4$ and backdoor pattern ``X.''}
\centering
\begin{tabular}{|c|c|l|l|l|l|l|}
\hline
\multicolumn{2}{|c|}{$\alpha$}                                    & 0.00 & 0.05 & 0.15 & 0.20 & 0.30 \\ \hline
\multirow{2}{*}{Training $0-1$ Loss}   & Vanilla Training         & 0.01 & 0.01 & 0.01 & 0.01 & 0.01 \\
                                       & PGD-Adversarial Training & 0.02 & 0.07 & 0.17 & 0.22 & 0.32 \\ \hline
\multirow{2}{*}{Training Robust Loss}  & Vanilla Training         & 1.00 & 1.00 & 1.00 & 1.00 & 1.00 \\
                                       & PGD-Adversarial Training & 0.08 & 0.13 & 0.24 & 0.27 & 0.42 \\ \hline
\multirow{2}{*}{Testing $0-1$ Loss}    & Vanilla Training         & 0.01 & 0.01 & 0.01 & 0.01 & 0.01 \\
                                       & PGD-Adversarial Training & 0.02 & 0.02 & 0.03 & 0.03 & 0.05 \\ \hline
\multirow{2}{*}{Testing Robust Loss}   & Vanilla Training         & 1.00 & 1.00 & 1.00 & 1.00 & 1.00 \\
                                       & PGD-Adversarial Training & 0.08 & 0.09 & 0.11 & 0.10 & 0.15 \\ \hline
\multirow{2}{*}{Backdoor Success Rate} & Vanilla Training         & 0.00 & 1.00 & 1.00 & 1.00 & 1.00 \\
                                       & PGD-Adversarial Training & 0.00 & 0.00 & 0.01 & 0.01 & 0.04 \\ \hline
\end{tabular}
\end{table}

\begin{table}[H]
\caption{Results with MNIST with a target label $t = 5$ and backdoor pattern ``X.''}
\centering
\begin{tabular}{|c|c|l|l|l|l|l|}
\hline
\multicolumn{2}{|c|}{$\alpha$}                                    & 0.00 & 0.05 & 0.15 & 0.20 & 0.30 \\ \hline
\multirow{2}{*}{Training $0-1$ Loss}   & Vanilla Training         & 0.01 & 0.01 & 0.01 & 0.01 & 0.01 \\
                                       & PGD-Adversarial Training & 0.02 & 0.07 & 0.17 & 0.22 & 0.33 \\ \hline
\multirow{2}{*}{Training Robust Loss}  & Vanilla Training         & 1.00 & 1.00 & 1.00 & 1.00 & 1.00 \\
                                       & PGD-Adversarial Training & 0.07 & 0.13 & 0.23 & 0.28 & 0.41 \\ \hline
\multirow{2}{*}{Testing $0-1$ Loss}    & Vanilla Training         & 0.01 & 0.01 & 0.01 & 0.02 & 0.02 \\
                                       & PGD-Adversarial Training & 0.02 & 0.03 & 0.03 & 0.03 & 0.06 \\ \hline
\multirow{2}{*}{Testing Robust Loss}   & Vanilla Training         & 1.00 & 1.00 & 1.00 & 1.00 & 1.00 \\
                                       & PGD-Adversarial Training & 0.08 & 0.09 & 0.11 & 0.10 & 0.16 \\ \hline
\multirow{2}{*}{Backdoor Success Rate} & Vanilla Training         & 0.00 & 1.00 & 1.00 & 1.00 & 1.00 \\
                                       & PGD-Adversarial Training & 0.00 & 0.00 & 0.01 & 0.01 & 0.05 \\ \hline
\end{tabular}
\end{table}

\begin{table}[H]
\caption{Results with MNIST with a target label $t = 6$ and backdoor pattern ``X.''}
\centering
\begin{tabular}{|c|c|l|l|l|l|l|}
\hline
\multicolumn{2}{|c|}{$\alpha$}                                    & 0.00 & 0.05 & 0.15 & 0.20 & 0.30 \\ \hline
\multirow{2}{*}{Training $0-1$ Loss}   & Vanilla Training         & 0.01 & 0.01 & 0.01 & 0.01 & 0.01 \\
                                       & PGD-Adversarial Training & 0.02 & 0.07 & 0.17 & 0.22 & 0.33 \\ \hline
\multirow{2}{*}{Training Robust Loss}  & Vanilla Training         & 1.00 & 1.00 & 1.00 & 1.00 & 1.00 \\
                                       & PGD-Adversarial Training & 0.08 & 0.12 & 0.24 & 0.27 & 0.40 \\ \hline
\multirow{2}{*}{Testing $0-1$ Loss}    & Vanilla Training         & 0.01 & 0.02 & 0.01 & 0.01 & 0.01 \\
                                       & PGD-Adversarial Training & 0.02 & 0.03 & 0.03 & 0.03 & 0.06 \\ \hline
\multirow{2}{*}{Testing Robust Loss}   & Vanilla Training         & 1.00 & 1.00 & 1.00 & 1.00 & 1.00 \\
                                       & PGD-Adversarial Training & 0.09 & 0.09 & 0.12 & 0.10 & 0.16 \\ \hline
\multirow{2}{*}{Backdoor Success Rate} & Vanilla Training         & 0.00 & 1.00 & 1.00 & 1.00 & 1.00 \\
                                       & PGD-Adversarial Training & 0.00 & 0.00 & 0.01 & 0.01 & 0.04 \\ \hline
\end{tabular}
\end{table}

\begin{table}[H]
\caption{Results with MNIST with a target label $t = 7$ and backdoor pattern ``X.''}
\centering
\begin{tabular}{|c|c|l|l|l|l|l|}
\hline
\multicolumn{2}{|c|}{$\alpha$}                                    & 0.00 & 0.05 & 0.15 & 0.20 & 0.30 \\ \hline
\multirow{2}{*}{Training $0-1$ Loss}   & Vanilla Training         & 0.01 & 0.01 & 0.01 & 0.01 & 0.01 \\
                                       & PGD-Adversarial Training & 0.02 & 0.07 & 0.18 & 0.22 & 0.32 \\ \hline
\multirow{2}{*}{Training Robust Loss}  & Vanilla Training         & 1.00 & 1.00 & 1.00 & 1.00 & 1.00 \\
                                       & PGD-Adversarial Training & 0.07 & 0.12 & 0.25 & 0.29 & 0.39 \\ \hline
\multirow{2}{*}{Testing $0-1$ Loss}    & Vanilla Training         & 0.01 & 0.01 & 0.01 & 0.02 & 0.01 \\
                                       & PGD-Adversarial Training & 0.02 & 0.03 & 0.03 & 0.03 & 0.04 \\ \hline
\multirow{2}{*}{Testing Robust Loss}   & Vanilla Training         & 1.00 & 1.00 & 1.00 & 1.00 & 1.00 \\
                                       & PGD-Adversarial Training & 0.08 & 0.08 & 0.11 & 0.10 & 0.13 \\ \hline
\multirow{2}{*}{Backdoor Success Rate} & Vanilla Training         & 0.00 & 1.00 & 1.00 & 1.00 & 1.00 \\
                                       & PGD-Adversarial Training & 0.00 & 0.00 & 0.00 & 0.00 & 0.03 \\ \hline
\end{tabular}
\end{table}

\begin{table}[H]
\caption{Results with MNIST with a target label $t = 8$ and backdoor pattern ``X.''}
\centering
\begin{tabular}{|c|c|l|l|l|l|l|}
\hline
\multicolumn{2}{|c|}{$\alpha$}                                    & 0.00 & 0.05 & 0.15 & 0.20 & 0.30 \\ \hline
\multirow{2}{*}{Training $0-1$ Loss}   & Vanilla Training         & 0.01 & 0.01 & 0.01 & 0.01 & 0.01 \\
                                       & PGD-Adversarial Training & 0.02 & 0.07 & 0.17 & 0.22 & 0.32 \\ \hline
\multirow{2}{*}{Training Robust Loss}  & Vanilla Training         & 1.00 & 1.00 & 1.00 & 1.00 & 1.00 \\
                                       & PGD-Adversarial Training & 0.08 & 0.14 & 0.23 & 0.28 & 0.41 \\ \hline
\multirow{2}{*}{Testing $0-1$ Loss}    & Vanilla Training         & 0.01 & 0.01 & 0.01 & 0.01 & 0.01 \\
                                       & PGD-Adversarial Training & 0.02 & 0.03 & 0.03 & 0.03 & 0.05 \\ \hline
\multirow{2}{*}{Testing Robust Loss}   & Vanilla Training         & 1.00 & 1.00 & 1.00 & 1.00 & 1.00 \\
                                       & PGD-Adversarial Training & 0.08 & 0.09 & 0.11 & 0.10 & 0.17 \\ \hline
\multirow{2}{*}{Backdoor Success Rate} & Vanilla Training         & 0.00 & 1.00 & 1.00 & 1.00 & 1.00 \\
                                       & PGD-Adversarial Training & 0.00 & 0.00 & 0.01 & 0.01 & 0.05 \\ \hline
\end{tabular}
\end{table}

\begin{table}[H]
\caption{Results with MNIST with a target label $t = 9$ and backdoor pattern ``X.''}
\centering
\begin{tabular}{|c|c|l|l|l|l|l|}
\hline
\multicolumn{2}{|c|}{$\alpha$}                                    & 0.00 & 0.05 & 0.15 & 0.20 & 0.30 \\ \hline
\multirow{2}{*}{Training $0-1$ Loss}   & Vanilla Training         & 0.01 & 0.01 & 0.01 & 0.01 & 0.01 \\
                                       & PGD-Adversarial Training & 0.02 & 0.07 & 0.17 & 0.22 & 0.33 \\ \hline
\multirow{2}{*}{Training Robust Loss}  & Vanilla Training         & 1.00 & 1.00 & 1.00 & 1.00 & 1.00 \\
                                       & PGD-Adversarial Training & 0.08 & 0.13 & 0.23 & 0.29 & 0.43 \\ \hline
\multirow{2}{*}{Testing $0-1$ Loss}    & Vanilla Training         & 0.01 & 0.01 & 0.01 & 0.01 & 0.01 \\
                                       & PGD-Adversarial Training & 0.02 & 0.03 & 0.03 & 0.04 & 0.06 \\ \hline
\multirow{2}{*}{Testing Robust Loss}   & Vanilla Training         & 1.00 & 1.00 & 1.00 & 1.00 & 1.00 \\
                                       & PGD-Adversarial Training & 0.09 & 0.10 & 0.11 & 0.11 & 0.20 \\ \hline
\multirow{2}{*}{Backdoor Success Rate} & Vanilla Training         & 0.00 & 1.00 & 1.00 & 1.00 & 1.00 \\
                                       & PGD-Adversarial Training & 0.01 & 0.01 & 0.01 & 0.01 & 0.06 \\ \hline
\end{tabular}
\end{table}

\end{document}